\newtheorem{definition}{Definition}
\newtheorem{lemma}{Lemma}
\newtheorem{fact}{Fact}
\newtheorem{theorem}{Theorem}
\newcommand{\ctr}{\mathcal{C}}
\newcommand{\dc}{\mathcal{A}}
\newcommand{\prob}[1]{{\mathbb P}\left[ #1 \right]}
\newcommand{\empProb}[1]{\hat{P}\left[ #1 \right]}
\newcommand{\estProb}[1]{\tilde{P}\left[ #1 \right]}
\newcommand{\probSub}[2]{{\mathbb P}_{#1} \left[ #2 \right]}
\newcommand{\naivebayes}{\texttt{Na{\"i}ve Bayes}\xspace}
\newcommand{\bayesnet}{Bayesian Network\xspace}
\newcommand{\bayesnets}{Bayesian networks\xspace}
\newcommand{\graph}{\mathcal{G}\xspace}
\newcommand{\vecx}{\bm{x}}
\newcommand{\exact}{\textsc{ExactMLE}\xspace}
\newcommand{\baseline}{\textsc{Baseline}\xspace}
\newcommand{\uniform}{\textsc{Uniform}\xspace}
\newcommand{\nonuniform}{\textsc{NonUniform}\xspace}
\newcommand{\bigo}[1]{O \left( #1 \right)}
\newcommand{\parent}[1]{\mathsf{par}\left( #1 \right)}
\newcommand{\domain}{\operatorname{dom}}
\newcommand{\nondes}[1]{\operatorname{NonDescendants} \left( #1 \right)}
\newcommand{\variance}[1]{{\bf Var}\left [ #1 \right ]}
\newcommand{\expec}[1]{{\mathbb E}\left [ #1 \right ]}
\newcommand{\eps}{\epsilon}
\newcommand{\remove}[1]{}
\DeclareMathOperator*{\argmax}{arg\,max}
\newcommand{\para}[1]{ \medskip \noindent {\bf #1}}
\newcommand{\tpara}[1]{ \noindent {\bf #1}}
\title{Learning Graphical Models from a Distributed Stream}
\author{
% author names are typeset in 11pt, which is the default size in the author block
{Yu Zhang{\small $~^{\#1}$}, Srikanta Tirthapura{\small $~^{\#2}$}, Graham Cormode{\small $~^{*}$} }%
% add some space between author names and affils
\vspace{1.6mm}\\
\fontsize{10}{10}\selectfont\itshape
% separate superscript on following line from affiliation using narrow space
$^{\#}$\, Electrical and Computer Engineering Department, Iowa State University\\
%Ames, IA, US\\
\fontsize{9}{9}\selectfont\ttfamily\upshape
% in the following email addresses, separate the superscript from the email address 
% using a narrow space \,
% the reason is that Acrobat Reader has an option to auto-detect urls and email
% addresses, and make them 'hot'.  Without a narrow space, the superscript is included
% in the email address and corrupts it.
% Also, removed ~ from pre-superscript since it does not seem to serve any purpose
$^{1}$\,yuz1988@iastate.edu
$^{2}$\,snt@iastate.edu%
% add some space between email and affil
\vspace{1.2mm}\\
\fontsize{10}{10}\selectfont\rmfamily\itshape
% separated superscript on following line from affiliation using narrow space \,
$^{*}$\,University of Warwick, 
%Coventry, UK\\
\fontsize{9}{9}\selectfont\ttfamily\upshape
% removed ~ from pre-superscript since it does not seem to serve any purpose
%$^{3}$\,
g.cormode@warwick.ac.uk
}
\begin{document}
\maketitle

% ****************** ABSTRACT *************************************
\begin{abstract}
A current challenge for data management systems is to support the
construction and maintenance of machine learning models over data that
is large, multi-dimensional, and evolving.
While systems that could support these tasks are emerging, the need to scale
to distributed, streaming data requires new models and algorithms.
In this setting, as well as computational scalability and model
accuracy, we also need to minimize the amount of communication between
distributed processors, which is the chief component of latency.

We study Bayesian networks, the workhorse of graphical models, 
and present a communication-efficient method for continuously learning 
and maintaining a Bayesian network model over data that is 
arriving as a distributed stream partitioned across multiple processors. 
We show a strategy for maintaining model parameters 
that leads to an exponential reduction in communication when 
compared with baseline approaches to maintain the exact MLE (maximum
likelihood estimation).
Meanwhile, our strategy provides similar prediction errors for the
target distribution and for classification tasks. 
%Our experimental results demonstrate that the communication cost is
%vastly reduced compared to trying to centralize the data, while the
%introduced error is minimal, and comparable in magnitude to the error
%inherent from probabilistic modeling. 
\end{abstract}

%------------------------
\section{Introduction}
\label{sec:introduction}
%------------------------
With the increasing need for large scale data analysis, distributed
machine learning~\cite{BBL11} has grown in importance in recent years.
Many platforms for distributed machine learning such as Tensorflow~\cite{tensorflow},
Spark MLlib~\cite{MBJ+16}, Petuum~\cite{XHD+15}, and 
Graphlab~\cite{LBG+12} have become popular in practice.
The raw data is described by a large number of interrelated variables,
and an important task is to describe the joint distribution over these variables, 
allowing inferences and predictions to be made.
For example, consider a large-scale sensor network where each sensor
is observing events in its local area (say, vehicles across a highway
network; or pollution levels within a city).
There can be many factors associated with each
event, such as duration, scale, surrounding environmental conditions,
and many other features collected by the sensor. 
However, directly modeling the full joint distribution of all these
features is infeasible, since 
the complexity of such a model grows exponentially with the number of variables.  
For instance, the complexity of a model with $n$ variables, each
taking one of $J$ values is $O(J^n)$ parameters.
The most common way to tame this complexity is to use a graphical model 
that can compactly encode the conditional dependencies among variables
in the data, and so reduce the number of parameters.

While many different graphical models have been proposed, we focus on
the most general and widely used class: {\em Bayesian networks}. 
A Bayesian network can be represented as a directed acyclic graph (DAG), 
where a node represents a variable and an edge directed from one node to 
another represents a conditional dependency between the corresponding variables. Bayesian networks 
have found applications in numerous domains, such as
decision making~\cite{WD14, ATA16, MB14} and cybersecurity~\cite{XLO+10, OAA16}.

While a graphical model can help in reducing the complexity, the number of 
parameters in such a model can still be quite high, and tracking each
parameter independently is expensive, especially in a distributed
system that sends a message for each update.
The key insight in our work is that it is not necessary to log every
event in real time; rather, we can aggregate information, and only
update the model when the new information causes a substantial change in the inferred model.
This still allows us to continuously maintain the model, but with
substantially reduced communication.
In order to give strong approximation guarantees for this approach,
we delve deeper into the construction of \bayesnets.

The fundamental task in building a \bayesnet is to estimate the
conditional probability distribution (CPD) of a variable given the
values assigned to its parents.
Once the CPDs of different variables are
known, the joint distribution can be derived over any subset of
variables using the chain rule \cite{KF09}.
To estimate the CPDs from empirical data, we use the maximum
likelihood estimation (MLE) principle.
The CPD of each event can be obtained by the ratio 
of the prevalence of that event versus the parent event (for 
independent variables, we obtain the single variable distribution).
Thus the central task is to obtain accurate counts of different
subsets of events. 

Our work is concerned with efficiently learning the parameters for a
given network structure.
Following the above discussion, the problem has a tantalizingly
clear central task: to materialize the needed CPDs using the observed frequencies in
the data. 
However, modern data analysis systems deal with massive, dynamic and
distributed data sources, such as network traffic monitors and 
large-scale sensor networks.
The raw volume of observations can be very large, 
and the simple solution of centralizing data would incur a very high communication
cost which is inefficient and infeasible.
Thus our key technical challenge is to design a scheme that can
accurately track a collection of distributed counts in a
communication-efficient way while guaranteeing the accuracy of the
current approximate model. 

In order to formalize the problem, we describe it using the 
continuous distributed stream monitoring model~\cite{CMY08}.
In this setting there are many sites, each
receiving an individual stream of observations (i.e. we assume
the data is horizontally partitioned).
A separate coordinator node, 
which receives no input itself, interacts with the sites to 
collaboratively monitor the union of the streams so far, and also answers
queries posed on the union of the streams so far.
This challenging model captures many of the difficulties that arise in
learning tasks in big data systems -- data is large, streaming in, and distributed over many
sites; and models need to be maintained in a timely manner allowing
for real-time responses.

Our work makes extensive use of a primitive called a {\em distributed counter}.
This allows us to count events accurately, without triggering a
message for each event.
We first show a basic monitoring scheme that uses distributed counters
independently for each variable in the model.
However, our strongest results arise when we provide a deeper
technical analysis of how the counts combine, to give tighter accuracy
guarantees with lower communication cost. 
The resulting exponential improvements in the worst-case cost for this
task are matched by dramatic reductions observed in practice. 
In more detail, our contributions are as follows: 

%-------------------------
\para{Contributions.}
%-------------------------
We present the first communication-efficient algorithms that continuously maintain 
a graphical model over distributed data streams. 

\noindent ---
Our algorithms maintain an accurate approximation of the Maximum
  Likelihood Estimate (MLE) using communication cost that is only
  {\em logarithmic} in the number of distributed observations.
  This is in contrast with the approach that maintains an exact MLE
  using a communication cost {\em linear} in the number of observations.

\noindent ---
  Our communication-efficient algorithms provide a provable
  guarantee that the model maintained is ``close'' to the MLE model
  given current observations, in a precise sense (Sections~\ref{sec:prelim},~\ref{sec:algo}).

\noindent ---
  We present three algorithms, in increasing order of efficiency
  and ability to capture model parameters, \baseline, \uniform, and
  \nonuniform in Section~\ref{sec:algo}.
  Our most general and communication-efficient algorithm, \nonuniform, is
  able to optimize communication cost for the case when the sizes of
  the CPDs of different random variables may be very different from
  each other.
  We also show how these algorithms apply to typical machine learning
  tasks such as classification (Section~\ref{sec:corollary}). 

\noindent ---
  We present an experimental evaluation in
  Section~\ref{sec:experiment}, showing that on a stream
  of a few million distributed training examples, our methods resulted in
  an improvement of 100-1000x in communication cost over the
  maintenance of exact MLEs, while providing estimates of joint
  probability with nearly the same accuracy as obtained by exact
  MLEs.

This provides a method for communication-efficient maintenance of
a graphical model over distributed, streaming data.
Prior works on maintaining a graphical model have considered efficiency in terms of space
(memory) and time, but these costs tend to be secondary when compared
to the communication cost in a distributed system.
Our method is built on the careful combination of multiple technical
pieces. 
Since the overall joint distribution is formed by
composing many CPDs, 
we divide the maximum ``error budget'' 
among the different parameters within the different CPDs
so that (a)~the error of the joint distribution is within the
desired budget, and (b)~the communication cost is as small as
possible.
We pose this as a convex optimization problem and use its
solution to parameterize the algorithms for distributed counters.
The next advance is to leverage concentration bounds
to argue that the aggregate behavior of the approximate
model consisting of multiple random variables (each estimating a
parameter of a CPD) is concentrated within a small range.
As a result, the dependence of the communication cost on
the number of variables $n$ can be brought down from $O(n)$ to $O(\sqrt{n})$.

%-----------------------------------
\section{Prior and Related Work}
%-----------------------------------
Many recent works are devoted to designing algorithms with efficient
communication in distributed machine learning.
Balcan {\em et al.}~\cite{BBF+12} were perhaps the first to give formal
consideration to this problem, based on the model of PAC (Probably
Approximately Correct) learning.
They showed lower bounds and algorithms for the non-streaming case,
where $k$ parties each hold parts of the input, and want to
collaborate to compute a model.
We call this ``the static distributed model''. 
%The baseline alternative here is to send a sufficiently large sample
%from each input. 
Daum\'{e} {\em et al.}~\cite{DPS+12} considered a distributed version of the
classification problem: training data points are assigned labels, and
the goal is to build a model to predict labels for new examples.
Algorithms are also proposed in the static distributed model, where the
classifiers are linear separators (hyperplanes) allowing either no or
small error. 
Most recently, Chen {\em et al.} \cite{CSW+16} considered spectral graph clustering, and showed
that the trivial approach of centralizing all data can only be 
beaten when a broadcast model of communication is allowed.

In the direction of lower bounds, 
Zhang {\em et al.} \cite{ZDJ+13} considered the
computation of statistical estimators in the static distributed model, and show communication 
lower bounds for minimizing the expected squared error, based on
information theory. Phillips {\em et al.} \cite{PVZ12} show lower bounds using 
communication complexity arguments via the ``number in hand'' model.
Various functions related to machine
learning models are shown to be ``hard'' i.e., require large
amounts of communication in the distributed model .

Some previous works have extended sketching techniques to the problem
of streaming estimation of parameters of a Bayesian network. McGregor
and Vu~\cite{MV15} gave sketch-based algorithms to measure whether
given data was ``consistent'' with a prescribed model i.e. they
compare the empirical probabilities in the full joint distribution
with those that arise from fitting the same data into a particular
Bayesian network.  They also provide a streaming algorithm that finds
a good degree-one Bayesian network (i.e. when the graph is a tree).
Kveton {\em et al.}~\cite{KBG+16} adapt sketches to allow
estimation of parameters for models that have very high-cardinality
variables. However, neither of these methods consider the
distributed setting.

The continuous distributed monitoring model has been well studied in
the data management and algorithms communities, but there has been
limited work on machine learning problems in this model. A survey of
the model and basic results is given in~\cite{Cormode:11}. Efficient
distributed counting is one of the first problems studied in this
model~\cite{Dilman:Raz:01}, and subsequently
refined~\cite{Keralapura:Cormode:Ramimirtham:06,CMY08}.  The strongest
theoretical results on this problem are randomized algorithms due to
Huang {\em et al.}~\cite{HYZ12}.  Generic techniques are introduced
and studied by Sharfman {\em et al.}~\cite{Sharfman:Schuster:Keren:06,
Sharfman:Schuster:Keren:08}.
Some problems studied in this model include
clustering~\cite{Cormode:Muthukrishnan:Zhuang:07}, anomaly
detection~\cite{Huang:Nguyen:Garofalakis:Hellerstein:Jordan:Joseph:Taft:07},
entropy computation~\cite{ABC09} and sampling~\cite{CTW16}.

%------------------------
\section{Preliminaries}
\label{sec:prelim}
\newcommand{\distctr}{{\tt DistCounter}}
\newcommand{\epsfnA}{{\tt epsfnA}}
\newcommand{\epsfnB}{{\tt epsfnB}}
%------------------------

%%------------------------------------
%\tpara{Mathematical notation.}
%%------------------------------------
Let $\prob{E}$ denote the probability of event $E$. For random
variable $X$, let $\domain(X)$ denote the domain of $X$. 
We use $\prob{x}$ as a shorthand for $\prob{X=x}$ when the random
variable is clear from the context. 
For  a set of random variables $\mathcal{X} = \{X_1, \ldots, X_n\}$
let $\prob{X_1, \ldots, X_n}$ or $\prob{\mathcal{X}}$ 
denote the joint distribution over $\mathcal{X}$.  
Let $\domain(\mathcal{X})$ 
% $= \prod_{i=1}^n \domain(X_i)$ 
denote the set of all possible assignments to $\mathcal{X}$.

%set of all the possible assignments on variables $\mathcal{X}$ and
%$\vecx=\langle x_1, \ldots, x_n \rangle$ as a general assignment such
%that $\vecx \in \domain(\mathcal{X})$. We use $\prob{x_1,\ldots, x_n}$
%or just $\prob{\xi}$ to denote the probability of an assignment
%$\prob{X_1=x_1, \ldots, X_n=x_n}$.

%------------------------------------
\begin{definition}
A {Bayesian network} $\graph=(\mathcal{X},\mathcal{E})$ is a
directed acyclic graph with a set of nodes 
$\mathcal{X} = \{X_1,\ldots, X_n \}$ and edges $\mathcal{E}$. 
Each $X_i$ represents a random variable. 
For $i \in [1,n]$, let $\parent{X_i}$ denote the set
of parents of $X_i$ and $\nondes{X_i}$ denote the variables that are not descendants of
$X_i$. The random variables obey the following condition: for each $i
\in [1,n]$, $X_i$ is conditionally independent of $\nondes{X_i}$,
given $\parent{X_i}$.
\end{definition}
%------------------------------------
For $i=1 \ldots n$, let $J_i$ denote the size of $\domain(X_i)$ and
$K_i$ the size of $\domain(\parent{X_i})$.

%%-----------------------------------------------
\para{Conditional Probability Distribution.}
%%-----------------------------------------------
Given a \bayesnet on $\mathcal{X}$, the joint distribution 
can be factorized as:
%----------------
\begin{equation}
\label{eq:factorization}
\textstyle \prob{\mathcal{X}} = \prod_{i=1}^n \prob{X_i \mid \parent{X_i}}
\end{equation}
%----------------
For each $i$,  $\prob{X_i \mid \parent{X_i}}$ is called
the {\em conditional probability distribution (CPD)} of $X_i$.  Let $\theta_i$
denote the CPD of $X_i$ and 
$\bm{\theta} = \{\theta_1,\ldots,\theta_n\}$ 
the set of CPDs of all variables.

%In this paper, we consider categorical random variables, so that the CPD of
%each variable $X_i$ can be represented as a table and each entry in
%the table is the conditional probability 
%$\prob{X_i=a \mid \parent{X_i} = \bm{b}}$ where $a \in \domain(X_i)$ and $\bm{b} \in \domain(\parent{X_i})$.  
%Let $\mathbb{P}_i[a \mid \bm{b}]$ denote $\mbox{$\prob{X_i=a \mid \parent{X_i} = \bm{b}}$}$.

%---------------------------------------------------------------
\medskip
Given training data $\mathcal{D}$, we are interested in obtaining the {\bf maximum likelihood estimate (MLE) of $\bm{\theta}$.}
Suppose that $\mathcal{D}$ contains $m$ instances  $\xi[1], \ldots, \xi[m]$. 
Let $L(\bm{\theta} \mid \mathcal{D})$, the likelihood function of $\bm{\theta}$ given the dataset $\mathcal{D}$, 
be equal to the probability for dataset observed given those parameters.
\[
\textstyle
L(\bm{\theta} \mid \mathcal{D}) = \prob{\mathcal{D} \mid \bm{\theta}} 
\]
% \prod_{j=1}^m \prob{\xi[j] \mid \bm{\theta}} 
%=  \prod_{j=1}^m \prod_{i=1}^n \probSub{i}{a[j] \mid \bm{b}[j] : \bm{\theta}} 

%If $\probSub{i}{a[j] \mid \bm{b}[j]}$ only relates to $\theta_i$, so there is no parameter sharing in the \bayesnet. 
Let $L_i(\theta_i \mid \mathcal{D})$ denote the likelihood function for $\theta_i$. The likelihood function of $\bm{\theta}$ can be decomposed as a product of independent local likelihood functions.
\[
\textstyle
L(\bm{\theta} \mid \mathcal{D}) = \prod_{i=1}^n L_i(\theta_i \mid \mathcal{D})
\]
Let $\hat{\bm{\theta}}$ denote the value of $\bm{\theta}$ that maximizes the likelihood function, $\hat{\bm{\theta}}$ is also known as the Maximum Likelihoood Estimation (MLE). Similarly, let $\hat{\theta}_i$ denote the value of $\theta_i$ that maximizes $ L_i(\theta_i \mid \mathcal{D})$.

%Suppose $\probSub{i}{a[j] \mid \bm{b}[j]}$ only relates to $\theta_i$, so there is no parameter sharing in the \bayesnet.
%\[
%\textstyle
%L(\bm{\theta} : \mathcal{D}) = \prod_{i=1}^n \left[ \prod_{j=1}^m \probSub{i}{a[j] \mid \bm{b}[j] : \theta_i} \right] 
%\]
%Let $L_i(\theta_i : \mathcal{D})$ denote the likelihood function for $\theta_i$, then
%$\textstyle
%L_i(\theta_i : \mathcal{D}) = \prod_{j=1}^m \probSub{i}{x[j] \mid \bm{y}[j] : \theta_i} 
%$.
%
%So we can write
%\begin{equation}
%  \label{eq:mle-decompose}
%$
%\textstyle
%L(\bm{\theta} : \mathcal{D}) = \prod_{i=1}^n L_i(\theta_i : \mathcal{D})
%$
%\end{equation}
%
%Thus, the likelihood function of $\bm{\theta}$ can be decomposed as a product of independent terms, one for each $\theta_i$.

\begin{lemma}[\protect{\cite[proposition $17.1$]{KF09}}] 
\label{lem:mle-1}
Consider a \bayesnet with given structure $\graph$ and training dataset $\mathcal{D}$. Suppose for all $i \neq j$, $\theta_i$ and
$\theta_j$ are independent. For each $i \in [1,n]$, if $\hat{\theta}_i$ maximizes the likelihood
function
$L_i(\theta_i : \mathcal{D})$, then $\hat{\bm{\theta}} = 
\{\hat{\theta}_1, \ldots, \hat{\theta}_n \}$ maximizes $L(\bm{\theta}: \mathcal{D})$.
\end{lemma}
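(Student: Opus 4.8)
The plan is to prove this by observing that the likelihood function factorizes cleanly along the lines of the CPDs, and then exploiting the independence of the parameter sets $\theta_1, \ldots, \theta_n$ to argue that a global maximizer can be obtained component-wise. Concretely, I would start from the decomposition $L(\bm{\theta} \mid \mathcal{D}) = \prod_{i=1}^n L_i(\theta_i \mid \mathcal{D})$ that was already established in the preliminaries (this itself follows by substituting the factorization~\eqref{eq:factorization} into $\prob{\mathcal{D} \mid \bm{\theta}} = \prod_{j=1}^m \prob{\xi[j] \mid \bm{\theta}}$ and regrouping the product over instances and variables). The crucial structural fact is that $L_i$ depends only on $\theta_i$, and the domains of the $\theta_i$ are independent in the sense that $\bm{\theta}$ ranges over the Cartesian product $\prod_i \domain(\theta_i)$ with no cross-constraints.

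Given that, the core step is a general fact about maximizing a product of nonnegative functions over a product domain: if $f(\bm{\theta}) = \prod_{i=1}^n f_i(\theta_i)$ with each $f_i \geq 0$, and $\hat{\theta}_i \in \argmax f_i$, then $\hat{\bm{\theta}} = (\hat{\theta}_1, \ldots, \hat{\theta}_n) \in \argmax f$. I would prove this by the chain of inequalities $f(\bm{\theta}) = \prod_i f_i(\theta_i) \le \prod_i f_i(\hat{\theta}_i) = f(\hat{\bm{\theta}})$ for any feasible $\bm{\theta}$, where the inequality is term-by-term and uses nonnegativity to multiply the inequalities together. Since likelihoods are probabilities, each $L_i(\theta_i \mid \mathcal{D}) \ge 0$, so this applies directly with $f_i = L_i$ and $f = L$. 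The independence hypothesis ($\theta_i, \theta_j$ independent for $i \ne j$) is exactly what licenses treating the feasible region as a product domain, so that choosing each $\hat{\theta}_i$ separately yields a feasible global point.

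The main obstacle — really the only subtle point — is making precise what ``$\theta_i$ and $\theta_j$ are independent'' means and why it implies the product-domain structure; one should note that in the general Bayesian network setting this independence can fail (e.g., if parameters are tied across CPDs), and it is this assumption that rules out such couplings. A secondary point worth a sentence is handling the degenerate case where some $L_i$ is identically zero or where the maximizer is not unique: the argument above still goes through since it only uses that $\hat{\theta}_i$ attains the supremum of $L_i$, not uniqueness. Everything else is routine, so I would keep the writeup short: state the product decomposition, invoke the term-by-term inequality under nonnegativity, and conclude.
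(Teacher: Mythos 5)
Your proposal is correct and is essentially the standard argument behind this result, which the paper does not prove itself but imports from \cite[Proposition~17.1]{KF09}: decompose $L(\bm{\theta} \mid \mathcal{D}) = \prod_{i=1}^n L_i(\theta_i \mid \mathcal{D})$, use parameter independence (no parameter sharing) to treat the feasible set as a product domain, and multiply the term-by-term inequalities $L_i(\theta_i \mid \mathcal{D}) \leq L_i(\hat{\theta}_i \mid \mathcal{D})$, which is valid since each $L_i$ is nonnegative. Your added remarks on what ``independence of $\theta_i$ and $\theta_j$'' must mean (ruling out tied parameters) and on non-uniqueness of maximizers are accurate and do not change the argument.
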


%--------------------------------
\para{Local CPD Estimation.}
%--------------------------------
%Consider variable $X_i$ and its parents $\parent{X_i}$.
In this work, we consider categorical random variables, so that the CPD of each variable $X_i$ can be represented as a table,
each entry is the probability $\probSub{i}{x_i \mid \vecx_i^{par}}$ where 
$x_i$ is the value of $X_i$ and $x_i \in \domain(X_i)$, 
$\vecx_i^{par}$ is the vector of values on the dimensions corresponding to $\parent{X_i}$ and
$\vecx_i^{par} \in \domain(\parent{X_i})$. 

We can handle continuous valued variables by appropriate
discretization, for example through applying a histogram, with bucket
boundaries determined by domain knowledge, or found by estimation on a random
sample. 

%The maximum  likelihood function $L_i(\theta_i : \mathcal{D})$ for $X_i$ is
%\begin{align*}
%L_i(\theta_i : \mathcal{D}) = & \prod_{j=1}^m \mathbb{P}_i(a[j] \mid \bm{b}[j])   \\
%= & \prod_{\bm{b} \in \domain(\parent{X_i})} \left[ \prod_{a \in \domain(X_i)} \mathbb{P}_i(a \mid \bm{b})^{F_i(a, \bm{b})} \right] 
%\end{align*}
%where $F_i(a, \bm{b})$ is the number of occurrences when both $X_i=a$ and $\parent{X_i}=\bm{b}$ occur in dataset $\mathcal{D}$. So we have a local decomposition of the likelihood function for each variable. 
%The solution that maximizes $L_i(\theta_i : \mathcal{D})$ is
%\begin{equation}
%\label{eq:mle-sol}
%\hat{\theta}_i(a \mid \bm{b}) = \frac{F_i(a, \bm{b})}{F_i(\bm{b})}
%\end{equation}
%where $F_i(\bm{b})$ is the number of occurrences when $\parent{X_i}=\bm{b}$ in $\mathcal{D}$.

%---------------------------
\begin{lemma}[\protect{\cite[Section~$17.2.3$]{KF09}}]
\label{lem:mle-2}
Given a training dataset $\mathcal{D}$, the maximum likelihood estimation (MLE) for $\theta_i$ is 
$\hat{\theta}_i(x_i \mid \vecx_i^{par}) = \frac{F_i(x_i, \vecx_i^{par})}{F_i(\vecx_i^{par})}$
where $F_i(x_i, \vecx_i^{par})$ is the number of events $(X_i=x_i, \parent{X_i}=\vecx_i^{par})$ 
in $\mathcal{D}$ and $F_i(\vecx_i^{par})$ is the number of events 
$(\parent{X_i} = \vecx_i^{par})$ in $\mathcal{D}$.
\end{lemma}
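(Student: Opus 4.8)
The plan is to invoke Lemma~\ref{lem:mle-1} to reduce the global problem to maximizing each local likelihood $L_i(\theta_i \mid \mathcal{D})$ separately, and then solve that local maximization in closed form. First I would write the local likelihood explicitly: instance $\xi[l]$ contributes the factor $\theta_i\big(\xi[l]\langle X_i\rangle \mid \xi[l]\langle \parent{X_i}\rangle\big)$, where $\xi[l]\langle \cdot \rangle$ denotes the value assigned by $\xi[l]$, so grouping the $m$ instances by the joint value $(x_i,\vecx_i^{par})$ gives
\[
L_i(\theta_i \mid \mathcal{D}) \;=\; \prod_{\vecx_i^{par} \in \domain(\parent{X_i})} \ \prod_{x_i \in \domain(X_i)} \theta_i(x_i \mid \vecx_i^{par})^{F_i(x_i,\vecx_i^{par})}.
\]
The key structural observation is that for distinct parent configurations $\vecx_i^{par}$ the rows $\theta_i(\cdot \mid \vecx_i^{par})$ are disjoint sets of free parameters, each constrained only by $\sum_{x_i}\theta_i(x_i\mid\vecx_i^{par}) = 1$; hence the maximization decouples into one independent subproblem per parent configuration.

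Next I would solve the generic subproblem: maximize $\prod_k p_k^{n_k}$ over the probability simplex $\{p_k \ge 0,\ \sum_k p_k = 1\}$, where here $n_k = F_i(x_i,\vecx_i^{par})$ and $\sum_k n_k = \sum_{x_i} F_i(x_i,\vecx_i^{par}) = F_i(\vecx_i^{par})$. Passing to the log-likelihood $\sum_k n_k \log p_k$, a Lagrange-multiplier computation with multiplier $\lambda$ for the normalization constraint gives $p_k = n_k/\lambda$, and summing the constraint forces $\lambda = \sum_j n_j$, hence $p_k = n_k/\sum_j n_j$. Substituting back yields exactly $\hat{\theta}_i(x_i \mid \vecx_i^{par}) = F_i(x_i,\vecx_i^{par})/F_i(\vecx_i^{par})$, and taking the product over all $i$ via Lemma~\ref{lem:mle-1} completes the argument.

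The one point that needs genuine care — rather than routine calculus — is confirming that this stationary point is the global maximum, not merely a critical point. I would close that gap either by observing that $\sum_k n_k \log p_k$ is concave on the simplex (a nonnegative combination of concave logarithms), so any interior stationary point is the unique maximizer, or, more self-containedly, by appealing to Gibbs' inequality: for the empirical distribution $\hat p_k = n_k/\sum_j n_j$ and any competing distribution $q$, $\sum_k \hat p_k \log q_k \le \sum_k \hat p_k \log \hat p_k$ with equality iff $q = \hat p$; multiplying through by $\sum_j n_j$ gives the claim for $L_i$. I would also dispatch the degenerate case $F_i(\vecx_i^{par}) = 0$ by the standard convention that an unobserved parent configuration imposes no constraint on the likelihood (its CPD row may be set arbitrarily and does not affect the maximizer).
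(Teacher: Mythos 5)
Your proof is correct and is essentially the standard argument from the cited source (Koller and Friedman, Section~17.2.3), which the paper itself invokes without reproducing: decompose via Lemma~\ref{lem:mle-1}, decouple by parent configuration, and maximize each multinomial likelihood on the simplex. The attention you give to global optimality (concavity/Gibbs' inequality) and to the degenerate case $F_i(\vecx_i^{par})=0$ is appropriate and fills in details the textbook treatment also covers; nothing further is needed.
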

%---------------------------

From Lemma~\ref{lem:mle-1}, a solution that maximizes the local likelihood functions also maximizes the joint likelihood function.
We further have that the MLE is an accurate estimate of the ground truth when the training dataset is sufficiently large.
%---------------------------
\begin{lemma}[\protect{\cite[Corollary $17.3$]{KF09}}]
\label{lem:error-vs-m}
Given a \bayesnet $\graph$ on $\mathcal{X}$, let $P^*$ denote the ground truth joint distribution consistent with $\graph$ and $\hat{P}$ the joint distribution using MLE.
%\footnote{We suppose the network structure is already known and use it to generate the MLE solution, so $P_{\bm{\theta}^{opt}}$ is identical to $P^*$ stated in \cite[Corollary $17.3$]{KF09}}. 
Suppose $\probSub{i}{x_i \mid \vecx_i^{par}} \geq \lambda$ for all $i, x_i, \vecx_i^{par}$. 
%$i \in [1,n]$, $a \in \domain(X_i)$ and $\bm{b} \in \domain(\parent{X_i})$, if 
If $\textstyle m \geq \frac{1}{2 \lambda^{2(d+1)}} \frac{(1+\eps)^2}{\eps^2} \log \frac{nJ^{d+1}}{\delta}$
then
$\prob{ e^{-n \eps} \leq \frac{\hat{P}}{P^*} \leq e^{n \eps}}\!>1\!-\!\delta$,
where $J = \max_{i=1}^n J_i$ %is the maximal variable cardinality 
and $d$ the maximum number of parents for a variable in $\graph$.
\end{lemma}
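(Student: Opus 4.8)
The plan is to reduce the claim about the joint distribution to a uniform claim about the individual CPD entries, and then apply concentration plus a union bound to the empirical counts.

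\emph{Step 1 (reduction to CPD entries).} By the factorization~(\ref{eq:factorization}), for every assignment $\vecx\in\domain(\mathcal{X})$,
\[
\frac{\hat{P}(\vecx)}{P^*(\vecx)}=\prod_{i=1}^{n}\frac{\hat{\theta}_i(x_i\mid\vecx_i^{par})}{\probSub{i}{x_i\mid\vecx_i^{par}}},
\]
and each factor is a single entry of the $i$-th CPD table. Hence it suffices to prove that, with probability at least $1-\delta$, \emph{every} CPD entry satisfies $e^{-\eps}\le\hat{\theta}_i(x_i\mid\vecx_i^{par})/\probSub{i}{x_i\mid\vecx_i^{par}}\le e^{\eps}$; multiplying the $n$ relevant factors (one per variable) then yields the stated two-sided bound for all $\vecx$ simultaneously.

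\emph{Step 2 (marginals are not too small).} I would first record that the hypothesis $\probSub{i}{x_i\mid\vecx_i^{par}}\ge\lambda$ forces $P^*(X_i=x_i,\parent{X_i}=\vecx_i^{par})\ge\lambda^{d+1}$ and $P^*(\parent{X_i}=\vecx_i^{par})\ge\lambda^{d}$. This follows by peeling variables off one at a time in a topological order of $\{X_i\}\cup\parent{X_i}$: at each step the topologically last remaining node is conditionally independent of all the previously removed (non-descendant) nodes given its parents, so its conditional probability is an average, with nonnegative weights summing to $1$, of CPD entries each $\ge\lambda$, hence $\ge\lambda$; induction on the set size finishes it. This is exactly where the exponent $2(d+1)$ in the sample bound originates.

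\emph{Step 3 (concentration on the counts).} By Lemma~\ref{lem:mle-2},
\[
\frac{\hat{\theta}_i(x_i\mid\vecx_i^{par})}{\probSub{i}{x_i\mid\vecx_i^{par}}}=\frac{F_i(x_i,\vecx_i^{par})/m}{F_i(\vecx_i^{par})/m}\cdot\frac{P^*(\parent{X_i}=\vecx_i^{par})}{P^*(X_i=x_i,\parent{X_i}=\vecx_i^{par})},
\]
and $F_i(x_i,\vecx_i^{par})$ and $F_i(\vecx_i^{par})$ are each sums of $m$ i.i.d.\ Bernoulli indicators with means $P^*(X_i=x_i,\parent{X_i}=\vecx_i^{par})$ and $P^*(\parent{X_i}=\vecx_i^{par})$ respectively. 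Hoeffding's inequality gives, for each count, deviation more than $t$ from its mean with probability at most $2e^{-2mt^2}$. Taking $t=\tfrac{\eps}{1+\eps}\lambda^{d+1}$ and invoking Step~2, on the good event the relative error of each empirical frequency is at most $\tfrac{\eps}{1+\eps}$, and a short calculation (using $\tfrac{1}{1+\eps}\ge e^{-\eps}$ and $1+\tfrac{\eps}{1+\eps}\le e^{\eps}$, splitting the budget between numerator and denominator) shows the entry ratio then lies in $[e^{-\eps},e^{\eps}]$. Then a union bound over the at most $nJ^{d+1}$ events $(X_i=x_i,\parent{X_i}=\vecx_i^{par})$ (which also control the parent-marginals, being their sums) requires $2e^{-2mt^2}\le\delta/(nJ^{d+1})$, i.e.\ $m\ge\tfrac{1}{2t^2}\log\tfrac{nJ^{d+1}}{\delta}$, and substituting the chosen $t$ reproduces the displayed threshold.

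I expect the main obstacle to be Step~3: cleanly controlling the ratio of two \emph{correlated} random quantities and calibrating the single additive-error parameter $t$ so that the numerator and denominator errors combine to exactly the $e^{\pm\eps}$ form with the stated constant, being slightly careful about the $1+\eps$ versus $e^{\eps}$ bookkeeping. Step~2 is the other place that needs care, though it is a routine induction on the conditional-independence structure.
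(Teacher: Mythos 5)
Your Steps 1 and 2 are sound, and I note the paper itself gives no proof of this lemma --- it is imported verbatim from \cite{KF09} --- so the natural comparison is with the count-concentration argument you reconstruct; the peeling argument giving $P^*(X_i=x_i,\parent{X_i}=\vecx_i^{par})\ge\lambda^{d+1}$ and $P^*(\parent{X_i}=\vecx_i^{par})\ge\lambda^{d}$ is exactly right. The genuine gap is the calibration in Step 3. On your good event each empirical frequency has relative error at most $\rho=\frac{\eps}{1+\eps}$ with respect to its \emph{own} mean, but each CPD factor is a ratio of \emph{two} such quantities, and in the worst case they err in opposite directions (possible here, since the denominator count is the sum of the numerator count and the counts for the other values of $X_i$). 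The per-factor guarantee is therefore only
\[
\frac{1-\rho}{1+\rho}\;\le\;\frac{\hat\theta_i(x_i\mid\vecx_i^{par})}{\probSub{i}{x_i\mid\vecx_i^{par}}}\;\le\;\frac{1+\rho}{1-\rho}=1+2\eps,
\]
and $1+2\eps>e^{\eps}$ for every $\eps\in(0,1)$ (the function $1+2\eps-e^{\eps}$ vanishes at $0$, rises until $\eps=\ln 2$, and is still $3-e>0$ at $\eps=1$). The two inequalities you invoke, $\frac{1}{1+\eps}\ge e^{-\eps}$ and $1+\frac{\eps}{1+\eps}\le e^{\eps}$, each absorb only one of the two error sources; composed they yield the per-factor interval $[e^{-2\eps},e^{2\eps}]$, hence $e^{\pm 2n\eps}$ overall --- not the claimed $e^{\pm n\eps}$ at the stated sample size. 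To close it along your route you would need roughly $t=\frac{\eps}{2+\eps}\lambda^{d+1}$, i.e.\ an $m$ about four times the displayed threshold; to recover the $(1+\eps)^2/\eps^2$ constant one needs an argument that pays only one relative error per factor, e.g.\ view $F_i(x_i,\vecx_i^{par})$ conditioned on $F_i(\vecx_i^{par})$ as a Binomial with success probability $\theta_i(x_i\mid\vecx_i^{par})\ge\lambda$, so a single Hoeffding application bounds $|\hat\theta_i-\theta_i|\le\frac{\eps}{1+\eps}\theta_i$ directly, at which point your two inequalities do give $[e^{-\eps},e^{\eps}]$ per factor.

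Two smaller points in the same step. Controlling the parent marginals ``as sums'' of the joint counts is not automatic: the additive errors accumulate, giving $|F_i(\vecx_i^{par})/m-P^*(\parent{X_i}=\vecx_i^{par})|\le J_i t$, and you need the true but unstated observation $J_i\lambda\le 1$ to convert this back into relative error $\frac{\eps}{1+\eps}$ against $P^*(\parent{X_i}=\vecx_i^{par})\ge\lambda^{d}$. Also, the two-sided Hoeffding bound carries a factor $2$ inside the logarithm that the stated $\log\frac{nJ^{d+1}}{\delta}$ does not account for; that is a minor constant mismatch, but of the same flavor as the main miscalibration above, so the bookkeeping you flagged as the ``main obstacle'' is indeed where the argument currently falls short.
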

%---------------------------

%Consequently, the joint distribution will converge to the ground truth as more training data is received.
%The required number of training points, $m$, is related quadratically in $\frac{1}{\eps}$, which implies that the relative error of MLE decrease roughly with $\frac{1}{\sqrt{m}}$.

%--------------------------------
\para{Approximate Distributed Counters.}
%--------------------------------
We make use of a randomized algorithm to continuously track 
counter values in the distributed monitoring model, due to~\cite{HYZ12}.
%To the best of our knowledge, this algorithm yields the current best communication 
%bound for continuously maintaining a distributed counter.

%--------------------------------
\begin{lemma}[\cite{HYZ12}]
\label{lem:rctr}
Consider a distributed system with $k$ sites. 
Given $0 < \eps < 1$, for $k \leq \frac{1}{\eps^2}$, there is a randomized distributed
algorithm $\textsc{DistCounter}\left( \eps, \delta \right)$ that continuously
maintains a distributed counter $\dc$ with the property that
$\expec{\dc} = \ctr$ and $\variance{\dc} \le (\eps \ctr)^2$, where
$\ctr$ is the exact value being counted.  
The communication cost is $O \left( \frac{\sqrt{k}}{\eps} \cdot \log T
\right)$ messages, where $T$ is the maximum value of $\ctr$.
The algorithm uses $O(\log T)$ space at each site and $O(1)$ amortized processing time per instance received.   
\end{lemma}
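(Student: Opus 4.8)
The plan is to present \textsc{DistCounter} as the composition of two pieces: a cheap \emph{deterministic} counter that always pins $\ctr$ to within a constant factor, together with a \emph{randomized} counter built from per-site dithered rounding whose granularity is steered by the deterministic estimate. For the deterministic piece I would use the textbook scheme in which each site forwards its local count whenever that count has grown by a fixed constant factor $(1{+}c_0)$ and the coordinator sums the reports; this maintains an estimate $\tilde{\ctr}$ with $\ctr\le\tilde{\ctr}\le(1{+}c_0)\,\ctr$ at all times, costs $O(k\log T)$ messages (a constant number of reports per site per doubling of $\ctr$, over $O(\log T)$ doublings), uses $O(\log T)$ bits per site, and --- crucially --- is a function of the arrival sequence alone.

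For the randomized piece, equip each site $i$ with a single uniform variate $U_i\in[0,1)$, drawn once and independently across sites. Site $i$ tracks its exact local count $c_i$ and, from the broadcast value of $\tilde{\ctr}$, the granularity $\tau=\Theta\!\left(\eps\,\tilde{\ctr}/\sqrt k\right)$; it sends the coordinator one token each time the integer $m_i:=\lfloor c_i/\tau+U_i\rfloor$ increases, and a correction of $m_i$ whenever a fresh $\tilde{\ctr}$ (hence a fresh $\tau$) is broadcast. The coordinator keeps $M=\sum_i m_i$ and outputs $\dc:=\tau M$.

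The three claims then follow in order. \emph{Unbiasedness}: fix the arrival sequence, so that $\tilde{\ctr}$ --- and hence the current $\tau$ --- is deterministic and in particular independent of the $U_i$; the defining property of dithered rounding, $\expec{\lfloor x+U_i\rfloor}=x$, then gives $\expec{m_i}=c_i/\tau$ exactly and $\expec{\dc}=\tau\sum_i\expec{m_i}=\sum_i c_i=\ctr$. \emph{Variance}: the $m_i$ are independent across $i$ and each takes only two consecutive integer values, so $\var{m_i}\le\tfrac14$ and $\var{\dc}=\tau^2\sum_i\var{m_i}\le k\tau^2/4$; tuning the constant hidden in $\tau$ together with $\tilde{\ctr}=\Theta(\ctr)$ makes this at most $(\eps\ctr)^2$, and keeping $\tau$ of sensible size is exactly what makes $k\le 1/\eps^2$ the natural regime. \emph{Communication}: chop time into the $O(\log T)$ rounds on which $\tilde{\ctr}\in[2^j,2^{j+1})$; on such a round $\tau=\Theta(\eps 2^j/\sqrt k)$ is fixed and the number of arrivals is $O(2^j)$, so the sampling tokens number $O(2^j/\tau)=O(\sqrt k/\eps)$, while the deterministic counter, the broadcast of the new $\tilde{\ctr}$, and the $k$ ensuing corrections contribute $O(k)$; summing over rounds gives $O\!\left((\sqrt k/\eps+k)\log T\right)$, and the hypothesis $k\le 1/\eps^2$ (equivalently $k\le\sqrt k/\eps$) collapses this to $O\!\left((\sqrt k/\eps)\log T\right)$. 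The space bound is immediate ($c_i$, $U_i$, $\tau$, last $m_i$, each $O(\log T)$ bits), and each arrival costs $O(1)$ work and sends at most one token, so processing is $O(1)$ amortized.

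The step I expect to be the genuine obstacle is securing \emph{exact} unbiasedness while $\tau$ must adapt to an unknown, growing $\ctr$: steering $\tau$ off the randomized estimate $\dc$ itself would correlate $\tau$ with the $U_i$ and introduce a bias, forcing a delicate stopping-time argument and a $(1-\delta)$-probability ``the level is tracked correctly'' event --- presumably the reason $\delta$ appears in the statement. Driving $\tau$ instead from the independent, purely deterministic $\tilde{\ctr}$ removes the coupling at the modest price of $O(k\log T)$ extra messages, which is affordable precisely under $k\le 1/\eps^2$; and should a pointwise tail bound $\prob{\,|\dc-\ctr|\le\eps'\ctr\,}\ge 1-\delta$ also be wanted, Chebyshev applied to the variance bound supplies it.
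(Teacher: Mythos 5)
The paper itself gives no proof of this lemma --- it is imported wholesale from Huang \emph{et al.}~\cite{HYZ12} --- so the only meaningful comparison is with that construction. Your argument is correct, and it reproduces the architecture of~\cite{HYZ12}: a cheap deterministic tracker that pins $\ctr$ to within a constant factor and sets the randomized granularity at $\Theta(\eps\,\ctr/\sqrt{k})$, a round structure keyed to doublings of the estimate, and the hypothesis $k \leq 1/\eps^2$ used exactly where you use it, to absorb the $O(k)$-per-round broadcast and synchronization cost into $O(\sqrt{k}/\eps)$. Where you genuinely differ is the unbiased estimator itself: in~\cite{HYZ12} each site, on every arrival, reports its exact local count with probability $p \approx \sqrt{k}/(\eps\,\ctr)$, and the coordinator debiases by adding the expected gap $1/p - 1$ for each site, with per-site variance $O(1/p^2)$ coming from the memorylessness of the sampling; you instead use per-site dithered rounding, $m_i = \lfloor c_i/\tau + U_i \rfloor$, which gives exact unbiasedness $\expec{m_i} = c_i/\tau$ and the cleaner bound $\variance{m_i} \leq 1/4$, at the price of the $k$ resynchronization messages whenever $\tau$ changes. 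Your central observation --- that $\tau$ must be driven by the deterministic estimate rather than the randomized one, so that it stays independent of the $U_i$ --- is precisely the decoupling the sampling-based scheme also relies on, and you are right that this is where a naive design would silently lose unbiasedness. Two small points to tighten: the broadcast of a fresh $\tau$ (and the ensuing $k$ corrections) must be triggered only when the deterministic estimate crosses a doubling threshold, i.e.\ $O(\log T)$ times in total, not at every deterministic report --- your round-by-round accounting implicitly assumes this, but your description (``whenever a fresh $\tilde{\ctr}$ is broadcast'') should say so explicitly, since the deterministic tracker itself fires $\Theta(k \log T)$ times; and the parameter $\delta$ indeed plays no role in the expectation/variance guarantee as stated --- in this paper it only resurfaces later through Chebyshev plus median-of-independent-copies arguments, consistent with your closing remark.
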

%--------------------------------

%---------------------
\para{Our Objective: Approximation to the MLE.}
%---------------------
Given a continuously changing data stream, exact maintenance of the MLE of the joint distribution is expensive communication-wise, since it requires the exact maintenance of multiple distributed counters, each of which may be incremented by many distributed processors. Hence, we consider the following notion of approximation to the MLE.

%---------------------
\begin{definition}
\label{def:approxmle}
Consider a \bayesnet $\graph$ on $\mathcal{X}$.
Let $\empProb{\cdot}$ denote the MLE of the joint distribution of
$\mathcal{X}$. Given approximation factor $0 < \eps < 1$, an
$\eps$-approximation to the MLE is a joint probability distribution
$\estProb{\cdot}$ such that, for any assignment of values $\bm{x}$ to $\mathcal{X}$,
$
e^{-\eps} \leq  \frac{\tilde{P}(\bm{x})}{\hat{P}(\bm{x})} \leq e^{\eps}
$.
Given an additional parameter $0 < \delta < 1$, a distribution $\tilde{P}$ is an $(\eps,\delta)$-approximation to MLE if it is an $\eps$-approximation to 
the MLE with probability at least $1-\delta$. 
\end{definition}
%---------------------

{Our goal is to maintain a distribution $\tilde{P}$ that is an
$(\eps,\delta)$-approximation to the MLE, given all data observed so far, 
in the distributed continuous model.}

The task of choosing the graph $\graph$ with which to
model the data (i.e. which edges are present in the network and which are not) is also an
important one, but one that we treat as orthogonal to our focus in
this work.
For data of moderate dimensionality, we may assume that the
graph structure is provided by a domain expert, based on known
structure and independence within the data.
Otherwise, the graph
structure can be learned offline based on a suitable sample of the
data.
The question of learning graph models ``live'' as data arrives,
is a challenging one that we postpone to future work.

%%--------------------------
%\subsection{Bounding Error}
%%--------------------------
%One issue by using the approximate counters is the error introduced to the CPD parameter which in turn cause the error to the likelihood function of the joint distribution. In this paper, we propose an algorithm to tactically maintain our counters as follows:
%\begin{itemize}
%\item Maintain counter $C_{i,\parent{i}}$ with approximation error $O(\frac{\eps}{\sqrt{n}})$
%\item Maintain counter $C_{\parent{i}}$ with approximation error $O(\frac{\eps}{\sqrt{\beta} \cdot d_i^{1/3}})$, where $d_i$ is the number of children that $\parent{X_i}$ has. Let $t$ refer the number of distinct set of parents in the \bayesnet, then $\beta = \sum_{i=1}^t d_i^{1/3}$.
%\end{itemize}
%
%Let $J = max_{i=1}^n |\domain(X_i)|$ denote the maximum number of values for each variable, $W = max_{i=1}^n |\domain(\parent{X_i})|$ denote the maximum number of values for all the parents sets. The communication cost for the counters $C_{i,\parent{i}}$ is $O(\sqrt{k}JW\frac{n^{3/2}}{\eps}\log T)$ bits and communication cost for the counters $C_{\parent{i}}$ is $O\left( \sqrt{k} W \log T \sum_{i=1}^t \frac{\sqrt{\beta} d_i^{1/3}}{\eps} \right)$ bits.

%------------------
\section{Distributed Streaming MLE Approximation}
\label{sec:algo}
%------------------
Continuous maintenance of the MLE requires continuous maintenance of a
number of counters, to track the different (empirical) conditional
probability distributions.

%A distributed counter is an object that can be updated at multiple
%sites, and whose value can be queried at the coordinator.
 
For each $x_i \in \domain(X_i)$ and $\vecx_i^{par} \in \domain(\parent{X_i})$,
let $\ctr_i(\vecx_i^{par})$ be the counter that tracks $F_i(\vecx_i^{par})$, and 
let $\ctr_i(x_i, \vecx_i^{par})$ be the counter that tracks $F_i(x_i, \vecx_i^{par})$. 
When clear from the context, we use the counter to also denote its value 
when queried. 
Consider any input vector $\vecx=\langle
x_1, \ldots, x_n \rangle$. For $1 \le i \le n$, let $\vecx_i^{par}$
denote the projection of vector $\vecx$ on the dimensions
corresponding to $\parent{X_i}$.
Based on Equation~\ref{eq:factorization} and Lemma~\ref{lem:mle-1},
the empirical joint probability $\empProb{\vecx}$ can be factorized as:
%
%------------------
\begin{equation}
\label{eq:joint-ct2}
\textstyle
\empProb{\vecx} = \prod_{i=1}^n \frac{\ctr_i(x_i,\vecx_i^{par})}{\ctr_i(\vecx_i^{par})}
\end{equation} 
%------------------

%-------------------------
\subsection{Strawman: Using Exact Counters}
\label{sec:exact-ct}
%-------------------------

A simple solution to maintain parameters is to maintain each
counter $\ctr_i(\cdot)$ and $\ctr_i(\cdot,\cdot)$ exactly at all
times, at the coordinator. With this approach, the coordinator always
has the MLE of the joint distribution, but the communication
cost quickly becomes the bottleneck of the whole system.
Each time an event is received at a site, the site
tells the coordinator to update the centralizing parameters $\theta$
immediately, essentially losing any benefit of distributed processing. 
%The communication cost of this approach is of the same order as that
%of centralizing events at the coordinator,

%A simple way to maintain counters is to keep the exact value for
%each counter.
%In the setting of distributed monitoring model, once a site receives a
%new instance from the stream, it will immediately send an update message to the coordinator, then the coordinator updates the parameters $\hat{\theta}$ accordingly.

%--------------
\begin{lemma}
If exact counters are used to maintain the MLE of a Bayesian network on $n$ variables in the distributed monitoring model, the total communication cost to continuously maintain the model over $m$ event observations is $O(mn)$, spread across $m$ messages of size $n$.
%Given a \bayesnet with $n$ variables in the distributed monitoring 
%model, if we use exact counters to compute the MLE parameters, the
%communication cost is $O(mn)$ communication, spread across $m$
%messages of size $n$, where $m$ is the number of instances in the training dataset. 
\end{lemma}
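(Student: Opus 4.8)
The plan is to track exactly what happens when a single event observation arrives at some site, and then sum over all $m$ events. First I would observe that a fresh event $\xi = \langle x_1, \ldots, x_n \rangle$ received at a site touches, for each variable $X_i$, exactly two of the counters in the scheme: the parent counter $\ctr_i(\vecx_i^{par})$ and the joint counter $\ctr_i(x_i, \vecx_i^{par})$, where $\vecx_i^{par}$ is the projection of $\xi$ onto the dimensions of $\parent{X_i}$. Since the strawman scheme insists every $\ctr_i(\cdot)$ and $\ctr_i(\cdot,\cdot)$ be held exactly at the coordinator at all times, the site must immediately forward the event so the coordinator can apply these $2n$ increments. That forwarding is a single message describing the event $\xi$, which has $n$ coordinates, hence size $O(n)$.

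Next I would sum over the stream: there are $m$ events, so $m$ messages are sent, each of size $O(n)$, giving a total communication cost of $O(mn)$. I would phrase ``size $n$'' to mean that the message encodes the values of the $n$ variables for that event (one could alternatively send $2n$ counter-update instructions, but this is the same up to constants). The spread across $m$ messages claim is immediate from the per-event analysis. I should also note that this accounting is tight in the worst case, not merely an upper bound: if every event differs in the relevant coordinates there is genuinely no opportunity to batch, so the coordinator cannot avoid learning about each event to keep the counters exact.

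The only subtlety worth stating carefully is why the coordinator must be told about \emph{every} event, rather than, say, periodically. This follows directly from the definition of the strawman in Section~\ref{sec:exact-ct}: the requirement is that the coordinator ``always has the MLE of the joint distribution,'' which by Equation~\ref{eq:joint-ct2} requires every counter to reflect every event the instant it is observed; any delay would leave some $\empProb{\vecx}$ stale. So the main (very mild) obstacle is simply making this definitional point precise; the rest is a one-line multiplication. No concentration bounds or optimization are needed here — this lemma is the trivial baseline against which the later $O(\sqrt{n} \log m)$-type bounds are compared.
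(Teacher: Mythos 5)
Your proposal is correct and follows exactly the reasoning the paper uses (the paper states this lemma without a formal proof, relying on the same observation that the strawman forwards each of the $m$ events as one message of size $n$ so the coordinator can apply the $2n$ counter increments immediately). Nothing is missing; your extra remark on why per-event forwarding is forced by the "always exact" requirement is a fine clarification of the definitional point.
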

%--------------

%Compared to this baseline, we would hope to be able to do
%asymptotically better. 

\allowdisplaybreaks
%---------------------------------
\subsection{Master Algorithms Using Approximate Counters}
\label{sec:generic-approx}
\newcommand{\parproj}{par}
%---------------------------------
The major issue with using exact counters to maintain the MLE is
the communication cost, which increases linearly with the number of
events received from the stream. 
We describe a set of ``master'' algorithms that we use to
approximately track statistics,
leading to a reduced communication cost, yet maintaining an approximation of the MLE.
In successive sections we tune their parameters and analysis 
to improve their behavior.
In Section~\ref{sec:baseline}, we describe the \baseline algorithm
which divides the error budget uniformly and pessimistically across
all variables.
Section~\ref{sec:uniform} gives the \uniform approach, which keeps the
uniform allocation, but uses an improved randomized analysis.
Finally, the \nonuniform algorithm in Section~\ref{sec:nonuniform}
adjusts the error budget allocation to account for the cardinalities
of different variables. 

These algorithms build on top of 
approximate distributed counters (Lemma~\ref{lem:rctr}), denoted by $\dc$. 
At any point, the coordinator can answer a query
over the joint distribution by using the outputs of the approximate
counters, rather than the exact values of the counters (which it no
longer has access to).
%Recall that $\empProb{\cdot}$ is the true (empirically observed) distribution
%of a variable, while $\estProb{\cdot}$ is the distribution estimated using randomized counters~(Definition~\ref{def:approxmle}).
%Consider any input data $\vecx = \langle x_1,x_2,\ldots,x_n\rangle$.
%Substituting $\empProb{\vecx}$ and $\estProb{\vecx}$ into our
%definition of approximate MLE ,
We have the following objective:

%------------------------
\begin{definition}[MLE Tracking Problem]
\label{dfn:obj}
Given $0<\eps<1$, for $i \in [1,n]$, 
we seek to maintain  distributed counters $\dc_i(x_i, \vecx_i^{\parproj})$ and 
$\dc_i(\vecx_i^{\parproj})$ such that for any data input vector 
$\vecx = \langle x_1, x_2, \ldots, x_n \rangle$, we have
\[\textstyle e^{-\eps} \leq \frac{\tilde{P}(\bm{x})}{\hat{P}(\bm{x})} = \prod_{i=1}^n \left(\frac{\dc_i(x_i,\vecx_i^{\parproj})}{\ctr_i(x_i,\vecx_i^{\parproj})} \cdot \frac{\ctr_i(\bm{x_i})}{\dc_i(\vecx_i^{\parproj})} \right) \leq e^{\eps}\]
\end{definition}
%------------------------

%Every distributed algorithm for approximating the MLE that we present
%takes the following generic form.
Our general approach is as follows. 
Each algorithm initializes a set of distributed counters
(Algorithm~\ref{algo:generic-init}).
Once a new event is received,
we update the two counters associated with the CPD for each variable
(Algorithm~\ref{algo:generic-train}).
A query is processed as in Algorithm~\ref{algo:generic-query} by
probing the approximate CPDs.
The different algorithms are specified based on how they set the error
parameters for the distributed counters, captured in the functions
$\epsfnA$ and
$\epsfnB$.
%----------------------------
\begin{algorithm}[t]
\label{algo:generic-init}
\caption{\textsc{Init}$(n,\epsfnA,\epsfnB)$}
\tcc{Initialization of Distributed Counters.} 
\KwIn{$n$ is the number of variables. $\epsfnA$ and $\epsfnB$ 
are parameter initialization functions provided by specific algorithms.
}
\ForEach{$i$ \textbf{from} $1$ \textbf{to} $n$} {
        \ForEach{$x_i \in \domain(X_i)$, $\vecx_i^{par} \in \domain(\parent{X_i})$}{$\dc_i(x_i, \vecx_i^{par}) \gets \distctr(\epsfnA(i), \delta)$}

        \ForEach{$\vecx_i^{par} \in \domain(\parent{X_i})$}{$\dc_i(\vecx_i^{par}) \gets \distctr(\epsfnB(i), \delta)$}
}
\end{algorithm}
%----------------------------

%----------------------------
\begin{algorithm}[t]
\label{algo:generic-train}
\caption{\textsc{Update}($\vecx$)}
\tcc{Called by a site upon receiving a new event}
\KwIn{$\vecx=\langle x_1, \ldots, x_d \rangle$ is an observation.}
\ForEach{$i$ \textbf{from} $1$ \textbf{to} $n$} {
  Increment $\dc_i(x_i,\vecx_i^{\parproj})$ \;
  Increment $\dc_i(\vecx_i^{\parproj})$  \;
}
\end{algorithm}
%----------------------------

%----------------------------
\begin{algorithm}[t]
\label{algo:generic-query}
\caption{\textsc{Query}($\vecx$)}
\tcc{Used to query the joint probability distribution.}
\KwIn{$\vecx = \langle x_1, \ldots, x_d \rangle$ is an input vector}
\KwOut{Estimated Probability $\estProb{\vecx}$}
\ForEach{$i$ \textbf{from} $1$ \textbf{to} $n$}
{
$p_i \gets \frac{\dc_i(x_i,\vecx_i^{\parproj})}{\dc_i(\vecx_i^{\parproj})}$\;
}
Return $\prod_{i=1}^n p_i$ \;
\end{algorithm}
%----------------------------

%---------------------------------------------
\subsection{\baseline Algorithm Using Approximate Counters}
\label{sec:baseline}
%---------------------------------------------
Our first approach \baseline, sets the error
parameter of each counter $\dc(\cdot)$ and $\dc(\cdot,\cdot)$ to a
value $\frac{\eps}{3n}$, which is small enough so that the
overall error in estimating the MLE is within desired bounds.
%We call this algorithm as \baseline.
In other words, \baseline
configures Algorithm~\ref{algo:generic-init} with
$\epsfnA(i) = \epsfnB(i) = \frac{\eps}{3n}$.
%, and 
%Algorithms~\ref{algo:generic-train} and \ref{algo:generic-query} to
%process a training point, and to answer a query, respectively.
Our analysis makes use of the following standard fact. 
%----------------------
\begin{fact}
\label{lem:uniform-approx}
For $0< \eps <1$ and $n \in \mathbb{Z}^+$, when $\alpha \leq \frac{\eps}{3n}$
\[ \textstyle
\left( \frac{1+\alpha}{1-\alpha} \right)^n \leq e^{\eps} 
\quad \text{and} \quad
\left( \frac{1-\alpha}{1+\alpha} \right)^n \geq e^{-\eps}
\]
\end{fact}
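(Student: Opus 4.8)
The plan is to take logarithms and reduce both claims to a single elementary estimate. Since the second inequality is exactly the reciprocal of the first (writing $\left(\frac{1-\alpha}{1+\alpha}\right)^n = \left(\frac{1+\alpha}{1-\alpha}\right)^{-n}$), it suffices to prove $\left(\frac{1+\alpha}{1-\alpha}\right)^n \le e^{\eps}$, or equivalently $n\ln\frac{1+\alpha}{1-\alpha} \le \eps$.

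First I would note that the hypotheses force $\alpha$ to be small: since $0<\eps<1$ and $n\ge 1$, we have $0\le\alpha\le\frac{\eps}{3n}\le\frac13<1$, so every quantity in sight is well defined and positive. Next I would bound $\ln\frac{1+\alpha}{1-\alpha} = \ln(1+\alpha) - \ln(1-\alpha)$ using the standard inequalities $\ln(1+\alpha)\le\alpha$ and $-\ln(1-\alpha)=\ln\frac{1}{1-\alpha}\le\frac{\alpha}{1-\alpha}$, together with $\frac{1}{1-\alpha}\le\frac32$ (valid since $\alpha\le\frac13$); this gives $\ln\frac{1+\alpha}{1-\alpha}\le\alpha+\frac32\alpha=\frac52\alpha\le 3\alpha$. (An alternative route is the series $\ln\frac{1+\alpha}{1-\alpha}=2\sum_{k\ge0}\frac{\alpha^{2k+1}}{2k+1}\le\frac{2\alpha}{1-\alpha^2}$ with $1-\alpha^2\ge\frac89$, which again yields a constant strictly below $3$.)

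Combining the pieces, $n\ln\frac{1+\alpha}{1-\alpha}\le 3n\alpha\le 3n\cdot\frac{\eps}{3n}=\eps$, and exponentiating gives $\left(\frac{1+\alpha}{1-\alpha}\right)^n\le e^{\eps}$; taking reciprocals then gives $\left(\frac{1-\alpha}{1+\alpha}\right)^n\ge e^{-\eps}$. There is no genuine obstacle here -- the only real decision is how much slack to leave, and the factor $3$ in the hypothesis $\alpha\le\frac{\eps}{3n}$ comfortably absorbs the crude bound $\ln\frac{1+\alpha}{1-\alpha}\le\frac52\alpha$ (even $\alpha\le\frac{\eps}{2.5n}$ would do), which is presumably why the statement is phrased with a round constant rather than an optimized one.
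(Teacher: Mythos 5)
Your proof is correct and is essentially the paper's own argument in logarithmic form: the paper writes $\frac{1+\alpha}{1-\alpha}=1+\frac{2\alpha}{1-\alpha}\leq 1+3\alpha\leq e^{3\alpha}$ (using $\alpha\leq\frac{1}{3}$) and raises to the $n$-th power, while you bound $\ln\frac{1+\alpha}{1-\alpha}\leq\frac{5}{2}\alpha\leq 3\alpha$ via the same $\frac{1}{1-\alpha}\leq\frac{3}{2}$ estimate, and both handle the second inequality by taking reciprocals. No substantive difference.
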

\remove{
\begin{proof}
When $\alpha \leq \frac{\eps}{3n} \leq \frac{1}{3}$, we have that $\frac{2\alpha}{1-\alpha} \leq 3\alpha$,
\[
\left(\frac{1+\alpha}{1-\alpha}\right)^n  =  \left(1 + \frac{2\alpha}{1-\alpha}\right)^n  
 \leq \left(1 + 3\alpha \right)^n   \leq (e^{3\alpha})^n  \leq e^{\eps}
\]
So we have $\left( \frac{1+\alpha}{1-\alpha} \right)^n \leq e^{\eps}$;
taking the reciprocal of both sides proves the other direction. 
%
%is proved. As
%$
%\left(\frac{1-\alpha}{1+\alpha}\right)^n = %\frac{1}{\left(\frac{1+\alpha}{1-\alpha}\right)^n} \geq e^{-\eps}
%$, simultaneously we prove that 
%$\left( \frac{1-\alpha}{1+\alpha} \right)^n \geq e^{-\eps}$.
\end{proof}}
%------------------------

%------------------------------------------
\begin{lemma}
\label{lem:uniform-lem}
Given $0<\eps,\delta<1$ and a Bayesian network with $n$ variables, the
\baseline algorithm maintains the parameters of the Bayesian network
such that at any point, it is an $(\eps,\delta)$-approximation to the
MLE. The total communication cost across $m$ training observations is 
$O\left(\frac{n^2 J^{d+1} \sqrt{k}}{\eps} \cdot \log \frac{1}{\delta} \cdot \log m \right)$ 
messages, where $J$ is the maximum domain cardinality for any variable $X_i$, 
$d$ is the maximum number of parents for 
a variable in the Bayesian network and $k$ is the number of sites.
\end{lemma}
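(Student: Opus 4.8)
The plan is to establish correctness and then bound the communication cost, treating the two parts separately.

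\textbf{Correctness.} First I would observe that each approximate counter $\dc$ returned by $\distctr(\alpha,\delta)$ has, by Lemma~\ref{lem:rctr}, expectation equal to the true count and variance at most $(\alpha \ctr)^2$. By Chebyshev's inequality, with probability at least $1 - \delta$ (after suitably rescaling the failure probability inside the counter), a single counter satisfies $(1-\alpha)\ctr \le \dc \le (1+\alpha)\ctr$; I would note the paper's convention that $\distctr$ is parameterized so this deviation bound holds with probability $1-\delta$ per counter. Taking $\alpha = \eps/(3n)$ for every one of the $O(nJ^{d+1})$ counters and a union bound, with probability at least $1 - \delta$ (again after rescaling $\delta$ by the number of counters, which only costs a $\log$ factor later) \emph{every} counter is within a $(1\pm\alpha)$ multiplicative factor of its exact value simultaneously. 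Conditioned on that event, I would plug the bounds into the product expression from Definition~\ref{dfn:obj}: each of the $n$ factors $\frac{\dc_i(x_i,\vecx_i^{par})}{\ctr_i(x_i,\vecx_i^{par})}\cdot\frac{\ctr_i(\vecx_i^{par})}{\dc_i(\vecx_i^{par})}$ lies in $\left[\frac{1-\alpha}{1+\alpha},\frac{1+\alpha}{1-\alpha}\right]$, so the product over $i=1,\dots,n$ lies in $\left[\left(\frac{1-\alpha}{1+\alpha}\right)^n,\left(\frac{1+\alpha}{1-\alpha}\right)^n\right]$. Fact~\ref{lem:uniform-approx} with this choice of $\alpha$ gives that this interval is contained in $[e^{-\eps},e^{\eps}]$, which is exactly the $(\eps,\delta)$-approximation guarantee of Definition~\ref{def:approxmle}.

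\textbf{Communication cost.} Here I would count counters and invoke the per-counter cost from Lemma~\ref{lem:rctr}. For each variable $X_i$ there are $J_i$ counters of the form $\dc_i(x_i,\vecx_i^{par})$ per parent configuration and one counter $\dc_i(\vecx_i^{par})$ per parent configuration, so $O(J_i K_i) = O(J^{d+1})$ counters per variable and $O(nJ^{d+1})$ counters total, since $K_i \le J^d$. Each counter is run with error parameter $\eps/(3n)$ and failure probability $\delta' = \delta/O(nJ^{d+1})$, and tracks a value at most $m$; by Lemma~\ref{lem:rctr} its communication cost is $O\!\left(\frac{\sqrt{k}}{\eps/(3n)}\log m\right) = O\!\left(\frac{n\sqrt{k}}{\eps}\log m\right)$ messages, where the $\log(1/\delta')$ dependence from boosting the counter's success probability contributes the $\log(1/\delta)$ factor (absorbing $\log(nJ^{d+1})$ into it or stating it separately). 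Multiplying the per-counter cost by the number of counters yields $O\!\left(\frac{n^2 J^{d+1}\sqrt{k}}{\eps}\log\frac1\delta \log m\right)$, matching the claimed bound.

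\textbf{Main obstacle.} The delicate point is the bookkeeping around $\delta$: Lemma~\ref{lem:rctr} as stated gives only a variance bound, not a high-probability deviation bound, so I need to be explicit about how one gets from $\variance{\dc}\le(\eps\ctr)^2$ to a $1-\delta$ guarantee — either Chebyshev with an inflated $\eps$, or the standard trick of running $O(\log(1/\delta))$ independent copies and taking the median, which is where the $\log(1/\delta)$ factor in the cost legitimately comes from. I would handle this by stating once, at the start of the analysis, that $\distctr(\eps,\delta)$ denotes the median-of-$O(\log(1/\delta))$-copies construction achieving $(1\pm\eps)\ctr$ with probability $1-\delta$ at cost $O(\frac{\sqrt k}{\eps}\log(1/\delta)\log T)$, and then the rest is routine union-bounding and applying Fact~\ref{lem:uniform-approx}. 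A minor secondary point is making sure the union bound is over counters \emph{and} over the fact that the query vector $\vecx$ is arbitrary — but since the ``good event'' (all counters accurate) is independent of $\vecx$ and implies the bound for every $\vecx$ simultaneously, no extra union bound over assignments is needed.
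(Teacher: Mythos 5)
Your proposal is correct and follows essentially the same route as the paper's proof: Chebyshev's inequality plus a union bound to place every counter within a $\left(1\pm\frac{\eps}{3n}\right)$ factor of its exact value, Fact~\ref{lem:uniform-approx} to bound the resulting product of $n$ ratios by $e^{\pm\eps}$, and then multiplying the $O(nJ^{d+1})$ counters by the per-counter cost $O\!\left(\frac{n\sqrt{k}}{\eps}\log\frac{1}{\delta}\log m\right)$ from Lemma~\ref{lem:rctr}. Your explicit treatment of how the variance-only guarantee of Lemma~\ref{lem:rctr} is converted into a $1-\delta$ deviation bound (and where the $\log\frac{1}{\delta}$ factor arises) is a point the paper glosses over with "rescaling and the union bound," so that extra care is welcome but not a departure in approach.
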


\begin{proof} We analyze the ratio
\[ \textstyle
\frac{\tilde{P}(\vecx)}{\hat{P}(\vecx)} = \prod_{i=1}^n \frac{\dc_i(x_i,\vecx_i^{\parproj})} {\dc_i(\vecx_i^{\parproj})} \cdot \frac{\ctr_i(\vecx_i^{\parproj})}{\ctr_i(x_i,\vecx_i^{\parproj})}
\]
By rescaling the relative error and applying Chebyshev's inequality and the union 
bound to the approximate counters of Lemma~\ref{lem:rctr}, we have that each counter $\dc_i()$
is in the range $(1 \pm \frac{\eps}{3n}) \cdot \ctr_i()$ with probability at least $1-\delta$.
The worst case is when 
$\dc_i(x_i,\vecx_i^{\parproj}) = \left( 1-\frac{\eps}{3n} \right) \cdot \ctr_i(x_i,\vecx_i^{\parproj})$ and $\dc_i(\vecx_i^{\parproj}) = (1+\frac{\eps}{3n}) \cdot
\ctr_i(\vecx_i^{\parproj})$, i.e each counter takes on an extreme value within its confidence interval.
In this case, $\frac{\tilde{P}(\vecx)}{\hat{P}(\vecx)}$ takes on the minimum
value.  Using Fact~\ref{lem:uniform-approx}, we get
$
\frac{\tilde{P}(\vecx)}{\hat{P}(\vecx)} \geq \left( \frac{1-\frac{\eps}{3n}}{1+\frac{\eps}{3n}} \right)^n \geq e^{-\eps}
$.
Symmetrically, we have
$\frac{\tilde{P}(\vecx)}{\hat{P}(\vecx)} \leq e^{\eps}$ when we make
pessimistic assumptions in the other direction.

Using Lemma~\ref{lem:rctr}, the communication cost for each
distributed counter is $O \left( \frac{n \sqrt{k}}{\eps} \cdot \log
\frac{1}{\delta} \cdot \log m \right)$ messages. For each $i \in [1,n]$, 
there are at most $J^{d+1}$ counters $\dc_i(x_i, \vecx_i^{\parproj})$
and at most $J^d$ counters $\dc_i(\vecx_i^{\parproj})$ 
for all $x_i \in \domain(X_i)$ and $\vecx_i^{\parproj} \in \domain(\parent{X_i})$.
So the total communication cost is 
$O\left(\frac{n^2 J^{d+1} \sqrt{k}}{\eps} \cdot \log \frac{1}{\delta} \cdot \log m\right)$
messages.
\end{proof}
%-------------------

%---------------------------------------------
\subsection{\uniform: Improved Uniform Approximate Counters}
\label{sec:uniform}
%---------------------------------------------
The approach in \baseline is overly pessimistic: it
assumes that all errors may fall in precisely the worst possible
direction.
Since the counter algorithms are unbiased and random, we
can provide a more refined statistical analysis and still obtain our
desired guarantee with less communication.
%We call this algorithm as \uniform, which is based
%on the following properties of the randomized counter.

Recall that the randomized counter algorithm in Lemma~\ref{lem:rctr} can be shown to have
the following properties:
\begin{itemize}
\item Each distributed counter is unbiased, $\expec{\dc} = \ctr$.
\item The variance of counter is bounded, $\variance{\dc} \le (\eps' \ctr)^2$, where $\eps'$ is the error
parameter used in $\dc$.
\end{itemize}

Hence the product of multiple distributed counters is
also unbiased, and we can also bound the variance of the product.

Our \uniform algorithm initializes its state using Algorithm~\ref{algo:generic-init}
with  $\epsfnA(i) = \epsfnB(i) = \frac{\eps}{16\sqrt{n}}$.
We prove its properties after first stating a useful fact. 

%-------------
\begin{fact}
\label{fact:math-fact}
When $0<x<0.3$, $e^x < 1+2x$ and  $e^{-2x} < 1-x$.
\end{fact}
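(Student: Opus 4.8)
The plan is to prove each of the two inequalities separately by an elementary one‑variable monotonicity argument, exploiting the fact that the cutoff $0.3$ lies safely below the natural thresholds $\ln 2$ and $\tfrac12\ln 2$ that govern the signs of the relevant derivatives.

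For the first inequality, I would set $f(x) = 1 + 2x - e^x$ and note $f(0) = 0$. Differentiating gives $f'(x) = 2 - e^x$, which is strictly positive precisely when $x < \ln 2$; since $0.3 < \ln 2 \approx 0.693$, the function $f$ is strictly increasing on $[0, 0.3]$. Hence $f(x) > f(0) = 0$ for every $x \in (0, 0.3)$, which is exactly $e^x < 1 + 2x$ on that range.

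For the second inequality, I would set $g(x) = 1 - x - e^{-2x}$, again with $g(0) = 0$. Here $g'(x) = -1 + 2e^{-2x}$, which is positive exactly when $e^{-2x} > \tfrac12$, i.e.\ when $x < \tfrac12\ln 2 \approx 0.347$; since $0.3 < \tfrac12\ln 2$, the function $g$ is strictly increasing on $[0, 0.3]$, so $g(x) > 0$ for $x \in (0, 0.3)$, giving $e^{-2x} < 1 - x$.

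There is no genuine obstacle here — this is a routine calculus estimate — and the only point requiring a moment's care is verifying that the chosen bound $0.3$ sits below both $\ln 2$ and $\tfrac12\ln 2$, which it does with room to spare. (An alternative series‑based route would invoke $e^x \le 1 + x + x^2$ on $[0,1]$ together with $1 + x + x^2 < 1 + 2x \iff x < 1$ for the first part, and an analogous quadratic bound for the second; but the derivative argument above is shorter and entirely self‑contained.)
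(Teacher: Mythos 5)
Your proof is correct and follows essentially the same route as the paper's: a monotonicity argument via the sign of the derivative, with your $f(x)=1+2x-e^x$ being the negation of the paper's $e^x-2x-1$. The only difference is that you spell out the second inequality (and the tighter threshold $\tfrac12\ln 2\approx 0.347$) explicitly, where the paper merely says ``similarly,'' which is a small but welcome addition.
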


\remove{
\begin{proof}
Let $f(x)=e^x-2x-1$, we have 
\[
f'(x)=e^x-2<0 \quad \text{when} \quad 0<x<0.3
\] 
So $f(x)<f(0)=0$, $e^x<1+2x$. 
Similarly, we have $e^{-2x}<1-x$.
\end{proof}}
%-------------

%---------------
\begin{lemma}
\label{lem:loose-approx1}
Given input vector $\vecx = \langle x_1, \ldots, x_d \rangle$, 
let $F=\prod_{i=1}^n \dc_i(x_i,\vecx_i^{\parproj})$ 
and $f=\prod_{i=1}^n \ctr_i(x_i,\vecx_i^{\parproj})$.
With Algorithm \uniform, $\expec{F} = f$ and $\variance{F} \leq \frac{\eps^2}{128} \cdot f^2$. 
\end{lemma}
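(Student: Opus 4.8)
The plan is to exploit the independence of the underlying distributed counters and the two properties highlighted just before the statement: each $\dc_i(x_i,\vecx_i^{\parproj})$ is unbiased with mean $\ctr_i(x_i,\vecx_i^{\parproj})$, and its variance is at most $(\eps' \ctr_i(x_i,\vecx_i^{\parproj}))^2$ where $\eps' = \frac{\eps}{16\sqrt n}$ under \uniform. Since the counters are maintained by independent instances of \distctr, the random variables $\dc_1,\ldots,\dc_n$ are mutually independent. The expectation claim $\expec{F}=f$ is then immediate: the expectation of a product of independent random variables is the product of expectations, so $\expec{F} = \prod_{i=1}^n \expec{\dc_i(x_i,\vecx_i^{\parproj})} = \prod_{i=1}^n \ctr_i(x_i,\vecx_i^{\parproj}) = f$.

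For the variance, I would first write, for each $i$, $\expec{\dc_i^2} = \variance{\dc_i} + \expec{\dc_i}^2 \le (1+\eps'^2)\,\ctr_i^2$, where I abbreviate $\dc_i = \dc_i(x_i,\vecx_i^{\parproj})$ and $\ctr_i = \ctr_i(x_i,\vecx_i^{\parproj})$. By independence, $\expec{F^2} = \prod_{i=1}^n \expec{\dc_i^2} \le \prod_{i=1}^n (1+\eps'^2)\,\ctr_i^2 = (1+\eps'^2)^n f^2$. Hence $\variance{F} = \expec{F^2} - f^2 \le \big((1+\eps'^2)^n - 1\big) f^2$. It remains to bound $(1+\eps'^2)^n - 1$ with $\eps' = \frac{\eps}{16\sqrt n}$, i.e. $\eps'^2 = \frac{\eps^2}{256 n}$. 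Using $1+t \le e^t$ gives $(1+\eps'^2)^n \le e^{n\eps'^2} = e^{\eps^2/256}$, and since $\eps<1$ we have $\eps^2/256 < 0.3$, so Fact~\ref{fact:math-fact} applies with $x = \eps^2/256$: $e^{\eps^2/256} < 1 + 2\cdot \frac{\eps^2}{256} = 1 + \frac{\eps^2}{128}$. Therefore $(1+\eps'^2)^n - 1 < \frac{\eps^2}{128}$, giving $\variance{F} \le \frac{\eps^2}{128} f^2$ as claimed.

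The only genuinely delicate point is the justification that the distributed counters used for distinct $i$ are independent — the analysis above rests entirely on that. This should follow from the construction in Algorithm~\ref{algo:generic-init}, where each counter is a freshly instantiated \distctr with its own internal randomness; I would state this independence explicitly as the first line of the proof. A secondary point worth a sentence is that the bound $\variance{\dc} \le (\eps'\ctr)^2$ from Lemma~\ref{lem:rctr} requires $k \le 1/\eps'^2$, i.e. $k \le 256 n/\eps^2$; I would either note this as a standing assumption or remark that it is implied by the regime of interest. Everything else is the short chain of inequalities above, so I expect the write-up to be brief. Note also that the lemma as stated concerns only the "numerator" product $\prod_i \dc_i(x_i,\vecx_i^{\parproj})$; the analogous statement for the denominator product $\prod_i \dc_i(\vecx_i^{\parproj})$ is identical and presumably handled separately, so I would not fold it in here.
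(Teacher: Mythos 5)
Your proof is correct and follows essentially the same route as the paper: unbiasedness plus independence for $\expec{F}=f$, then $\expec{\dc_i^2} \leq (1+\frac{\eps^2}{256n})\ctr_i^2$, the bound $(1+\frac{\eps^2}{256n})^n \leq e^{\eps^2/256}$, and Fact~\ref{fact:math-fact} to conclude $\variance{F} \leq \frac{\eps^2}{128} f^2$. Your side remarks (explicit independence of the counter instances, the $k \leq 1/\eps'^2$ condition from Lemma~\ref{lem:rctr}) are reasonable points the paper leaves implicit, but they do not change the argument.
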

%---------------

%---------------
\begin{proof}
From Lemma~\ref{lem:rctr}, for $i \in [1,n]$ we have
\[
\expec{\dc_i(x_i,\vecx_i^{\parproj})} = \ctr_i(x_i,\vecx_i^{\parproj}).
\]
Since all the $\dc_i(\cdot,\cdot)$ variables are independent, we have:
%Due to the independence of different $\dc_i(\cdot,\cdot)$ variables, we have: 
\[
\expec{\prod_{i=1}^n \dc_i(x_i,\vecx_i^{\parproj})} = \prod_{i=1}^n \ctr_i(x_i,\vecx_i^{\parproj})
\] 
This proves $\expec{F} = f$.
We next compute $\expec{\dc_i^2(x_i,\vecx_i^{\parproj})}$,
\begin{align*}
\expec{\dc_i^2(x_i,\vecx_i^{\parproj})} & = \variance{\dc_i(x_i,\vecx_i^{\parproj})} + \left( \expec{\dc_i(x_i,\vecx_i^{\parproj})} \right)^2 \\
& \leq \left( \epsfnA(i) \cdot \ctr_i(x_i,\vecx_i^{\parproj}) \right)^2 + \ctr_i^2(x_i,\vecx_i^{\parproj})   \\
& \leq \left( 1+\frac{\eps^2}{256n}  \right) \cdot \ctr_i^2(x_i,\vecx_i^{\parproj})
\end{align*}
% The last part is obtained by plugging in our error parameter $\left(1+\frac{\eps}{16\sqrt{n}}\right)$ into the guarantee provided by $\dc$.
%-----------

By noting that different terms $\dc_i^2(x_i,\vecx_i^{\parproj})$ are independent:
%-----------
\begin{align*}
\expec{F^2} & =  \expec{\Big( \prod_{i=1}^n \dc_i(x_i,\vecx_i^{\parproj}) \Big)^2} 
 = \prod_{i=1}^n \expec{\dc_i^2 (x_i,\vecx_i^{\parproj})} \\
& \leq \Big( 1+\frac{\eps^2}{256n}  \Big)^n \cdot \prod_{i=1}^n \ctr_i^2(x_i,\vecx_i^{\parproj}) 
  \leq e^{\eps^2 / 256} \cdot f^2
\end{align*}
%-----------
%
\[
\text{Using Fact~\ref{fact:math-fact},~}
\expec{F^2} \leq e^{\eps^2 / 256} \cdot f^2 \leq \left( 1+\frac{\eps^2}{128} \right) \cdot f^2
\]
Since $\expec{F} = f$, we calculate $\variance{F}$:
\[
\variance{F} = \expec{F^2} - \left( \expec{F} \right)^2 
 \leq \left( 1 + \frac{\eps^2}{128} \right) \cdot f^2 - f^2
 = \frac{\eps^2}{128} \cdot f^2
\]
\end{proof}
%---------------
\remove{
\begin{align*}
\variance{F} & = \expec{F^2} - \left( \expec{F} \right)^2  \\
& \leq \left( 1 + \frac{\eps^2}{128} \right) \cdot f^2 - f^2 \\
& = \frac{\eps^2}{128} \cdot f^2
\end{align*}
}

Using Chebyshev's inequality, we can bound $F$.
%---------------
\begin{lemma}
\label{lem:loose-approx2}
For $i \in [1,n]$, maintaining distributed counters $\dc_i(x_i,\vecx_i^{\parproj})$ with 
approximation factor $\frac{\eps}{16\sqrt{n}}$,
%$\left( 1+\frac{\eps}{16\sqrt{n}} \right)$, 
gives
$
e^{-\frac{\eps}{2}} \leq \prod_{i=1}^n \frac{\dc_i(x_i,\vecx_i^{\parproj})}{\ctr_i(x_i,\vecx_i^{\parproj})} \leq e^{\frac{\eps}{2}}
$
with  probability at least $7/8$.
\end{lemma}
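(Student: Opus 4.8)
The plan is to feed the first two moments of $F = \prod_{i=1}^n \dc_i(x_i,\vecx_i^{\parproj})$ supplied by Lemma~\ref{lem:loose-approx1} into Chebyshev's inequality, and then translate the resulting additive deviation bound on $F/f$ (where $f = \prod_{i=1}^n \ctr_i(x_i,\vecx_i^{\parproj})$) into the two-sided exponential form demanded by the statement.

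First I would invoke Lemma~\ref{lem:loose-approx1}, which gives $\expec{F} = f$ and $\variance{F} \le \frac{\eps^2}{128} f^2$, so the standard deviation of $F$ is at most $\frac{\eps}{\sqrt{128}} f$. Applying Chebyshev's inequality, $\prob{|F - f| \ge t \cdot \frac{\eps}{\sqrt{128}} f} \le 1/t^2$ for any $t>0$; taking $t = \sqrt{8}$ and using $\sqrt{8}/\sqrt{128} = 1/4$ yields $\prob{|F - f| \ge \frac{\eps}{4} f} \le 1/8$. Equivalently, with probability at least $7/8$ we have $1 - \frac{\eps}{4} \le \frac{F}{f} \le 1 + \frac{\eps}{4}$.

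Next I would convert this window into the required exponential bounds. For the upper side, $1 + \frac{\eps}{4} \le 1 + \frac{\eps}{2} \le e^{\eps/2}$ by the elementary inequality $1+x \le e^x$. For the lower side I would appeal to Fact~\ref{fact:math-fact}: since $0 < \eps < 1$ we have $\frac{\eps}{4} < 0.3$, so substituting $x = \frac{\eps}{4}$ into $e^{-2x} < 1 - x$ gives $e^{-\eps/2} < 1 - \frac{\eps}{4}$. Chaining these two inequalities with the Chebyshev event, and noting that $\frac{F}{f} = \prod_{i=1}^n \frac{\dc_i(x_i,\vecx_i^{\parproj})}{\ctr_i(x_i,\vecx_i^{\parproj})}$, establishes $e^{-\eps/2} \le \prod_{i=1}^n \frac{\dc_i(x_i,\vecx_i^{\parproj})}{\ctr_i(x_i,\vecx_i^{\parproj})} \le e^{\eps/2}$ with probability at least $7/8$.

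There is no deep obstacle here; the only point requiring care is the bookkeeping of constants — verifying that the variance bound $\frac{\eps^2}{128} f^2$ coming from the per-counter error choice $\frac{\eps}{16\sqrt{n}}$ is small enough that a confidence-$7/8$ application of Chebyshev lands strictly inside the $1 \pm \frac{\eps}{4}$ window, and that $\frac{\eps}{4}$ remains within the validity range of Fact~\ref{fact:math-fact}. Both hold comfortably for $0 < \eps < 1$, so the argument goes through without complication.
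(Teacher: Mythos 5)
Your proposal is correct and follows essentially the same route as the paper: apply Chebyshev's inequality to $F$ using the moment bounds from Lemma~\ref{lem:loose-approx1} to get $1-\frac{\eps}{4} \leq \frac{F}{f} \leq 1+\frac{\eps}{4}$ with probability at least $7/8$, then convert to the exponential form via Fact~\ref{fact:math-fact} (and the elementary $1+x \leq e^x$). The only difference is cosmetic bookkeeping in which elementary inequality handles the upper side.
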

%---------------

%---------------
\begin{proof}
Using the Chebyshev inequality, with $\expec{F}=f$ 
\[
\textstyle
\prob{|F-f| \leq \sqrt{8} \cdot \sqrt{\variance{F}}} \geq \frac{7}{8}
\]
From Lemma~\ref{lem:loose-approx1}, $\variance{F} \leq \frac{\eps^2}{128} \cdot f$, hence
\[
\textstyle
\prob{|F-f| \leq \frac{\eps f}{4}} \geq \frac{7}{8}
\]
\[
\text{and so (via Fact~\ref{fact:math-fact}),~}
%\textstyle
e^{-\frac\eps{2}} \leq \left(1-\frac{\eps}{4}\right) \leq \frac{F}{f} \leq \left(1+\frac{\eps}{4}\right) \leq e^{\frac{\eps}{2}}
\]
with probability at least $7/8$.
\end{proof}
%-------------------

For the term $\frac{\ctr_i(\vecx_i^{\parproj})}{\dc_i(\vecx_i^{\parproj})}$, we maintain
distributed counters $\dc_i(\vecx_i^{\parproj})$ with approximation factor
$\frac{\eps}{16 \sqrt{n}}$. 
% $\left(1+\frac{\eps}{16 \sqrt{n}} \right)$.
One subtlety here is that different variables, say $X_i$ and $X_j$, $i \neq j$ can have 
$\parent{X_i} = \parent{X_j}$, so that 
$\prod_{i=1}^n \frac{\ctr_i(\vecx_i^{\parproj})}{\dc_i(\vecx_i^{\parproj})}$ can have duplicate terms, arising from 
different $i$.
This leads to terms in the product that are not independent of each other.
To simplify such cases, for each $i \in [1,n]$, 
we maintain separate distributed counters $\dc_i(\vecx_i^{\parproj})$,
so that when $\parent{X_i} = \parent{X_j}$, the counters $\dc_i(\vecx_i^{\parproj})$ and $\dc_j(\vecx_j^{\parproj})$
are independent of each other. 
Then, we can show the following lemma for counters
$\dc(\vecx_i^{\parproj})$, which is derived in a manner similar to Lemma~\ref{lem:loose-approx1}
and~\ref{lem:loose-approx2}. 
%The proof is omitted.

%---------------
\begin{lemma}
\label{lem:loose-approx3}
For $i \in [1,n]$, when we maintain distributed counters 
$\dc_i(\vecx_i^{\parproj})$ with approximation factor $\frac{\eps}{16\sqrt{n}}$, we have 
$e^{-\frac{\eps}{2}} \leq \prod_{i=1}^n \frac{\ctr_i(\vecx_i^{\parproj})}{\dc_i(\vecx_i^{\parproj})} \leq e^{\frac{\eps}{2}}$
with probability at least $7/8$.
\end{lemma}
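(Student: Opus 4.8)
The plan is to mirror almost verbatim the argument used for Lemma~\ref{lem:loose-approx2}, since the structure of the two statements is identical after substituting the counters $\dc_i(x_i,\vecx_i^{\parproj})$ by $\dc_i(\vecx_i^{\parproj})$. First I would set $G = \prod_{i=1}^n \dc_i(\vecx_i^{\parproj})$ and $g = \prod_{i=1}^n \ctr_i(\vecx_i^{\parproj})$, emphasizing at the outset the subtlety already flagged in the text: even when $\parent{X_i} = \parent{X_j}$ for $i \neq j$, we deliberately instantiate \emph{separate} distributed counters $\dc_i(\vecx_i^{\parproj})$ and $\dc_j(\vecx_j^{\parproj})$, so that the $n$ factors in the product $G$ are mutually independent. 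This independence is the only nontrivial hypothesis needed to reuse the earlier computation; without it the variance bound on the product would not follow from the per-counter variance bounds.

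Next I would establish the analogue of Lemma~\ref{lem:loose-approx1}: $\expec{G} = g$ and $\variance{G} \le \frac{\eps^2}{128}\, g^2$. The expectation identity is immediate from $\expec{\dc_i(\vecx_i^{\parproj})} = \ctr_i(\vecx_i^{\parproj})$ (Lemma~\ref{lem:rctr}) together with independence. For the variance, I would compute $\expec{\dc_i^2(\vecx_i^{\parproj})} = \variance{\dc_i(\vecx_i^{\parproj})} + (\expec{\dc_i(\vecx_i^{\parproj})})^2 \le \big(1 + \frac{\eps^2}{256 n}\big)\,\ctr_i^2(\vecx_i^{\parproj})$, using $\epsfnB(i) = \frac{\eps}{16\sqrt{n}}$; then multiply over $i$ (valid by independence) to get $\expec{G^2} \le \big(1+\frac{\eps^2}{256 n}\big)^n g^2 \le e^{\eps^2/256}\, g^2 \le \big(1+\frac{\eps^2}{128}\big)\, g^2$ via Fact~\ref{fact:math-fact}, and subtract $g^2$ to bound $\variance{G}$.

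Finally I would apply Chebyshev's inequality exactly as in Lemma~\ref{lem:loose-approx2}: $\prob{|G - g| \le \sqrt{8}\sqrt{\variance{G}}} \ge 7/8$, so $\prob{|G-g| \le \frac{\eps g}{4}} \ge 7/8$, hence $1 - \frac{\eps}{4} \le G/g \le 1 + \frac{\eps}{4}$ with probability at least $7/8$, and by Fact~\ref{fact:math-fact} this sandwiches $e^{-\eps/2} \le G/g \le e^{\eps/2}$. Since the statement asks for the reciprocal ratio $\prod_{i=1}^n \frac{\ctr_i(\vecx_i^{\parproj})}{\dc_i(\vecx_i^{\parproj})} = g/G$, I would note that inverting the two-sided bound $e^{-\eps/2} \le G/g \le e^{\eps/2}$ gives exactly $e^{-\eps/2} \le g/G \le e^{\eps/2}$ on the same event of probability at least $7/8$, which is the claim.

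I do not expect a real obstacle here; the proof is essentially a transcription. The one place that warrants an explicit sentence rather than a silent copy is the independence justification: I would make clear that the artificial duplication of parent-counters (one per child index $i$) is what rescues the independence needed for both the product-variance bound and the product-expectation identity, since a naive implementation sharing a single counter across $i$ with a common parent set would break the analysis. Everything else --- the per-counter variance estimate, the $(1+\tfrac{\eps^2}{256n})^n \le e^{\eps^2/256}$ step, and the Chebyshev-to-multiplicative-error conversion --- is identical to what was done two lemmas earlier, so I would present it compactly and point back to Lemmas~\ref{lem:loose-approx1} and~\ref{lem:loose-approx2} for the routine details.
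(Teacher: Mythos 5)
Your proposal is correct and follows essentially the same route as the paper, which itself states that Lemma~\ref{lem:loose-approx3} ``is derived in a manner similar to Lemma~\ref{lem:loose-approx1} and~\ref{lem:loose-approx2}'' using the separately maintained parent counters to preserve independence. Your explicit remark on the independence justification and the final reciprocal step $g/G$ are exactly the points the paper leaves implicit, so nothing is missing.
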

%---------------

Combining these results, we obtain the following result about \uniform.
%-------------
\begin{theorem}
\label{thm:uniform}
Given $0<\eps,\delta<1$, \uniform algorithm continuously maintains an
$(\eps,\delta)$-approximation to the MLE over the course of $m$
observations. The communication cost over all observations is
$O\left(\frac{n^{3/2} J^{d+1} \sqrt{k}}{\eps} \cdot \log
\frac{1}{\delta} \cdot \log m \right)$ messages, where $J$ is the
maximum domain cardinality for any variable $X_i$, $d$ is the maximum
number of parents for a variable in the Bayesian network, and $k$ is
the number of sites.
\end{theorem}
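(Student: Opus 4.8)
The plan is to treat this as an assembly of the product bounds just established, splitting the statement into a correctness part and a communication-cost part. For correctness, I would rewrite the ratio from Definition~\ref{dfn:obj} by separating the two families of counters,
\[
\frac{\tilde{P}(\vecx)}{\hat{P}(\vecx)}
= \left( \prod_{i=1}^n \frac{\dc_i(x_i,\vecx_i^{\parproj})}{\ctr_i(x_i,\vecx_i^{\parproj})} \right)
\cdot \left( \prod_{i=1}^n \frac{\ctr_i(\vecx_i^{\parproj})}{\dc_i(\vecx_i^{\parproj})} \right),
\]
and invoke the two product bounds: Lemma~\ref{lem:loose-approx2} places the first factor in $[e^{-\eps/2},e^{\eps/2}]$ with probability at least $7/8$, and Lemma~\ref{lem:loose-approx3} does the same for the second factor. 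Since the counters $\dc_i(x_i,\vecx_i^{\parproj})$ and $\dc_i(\vecx_i^{\parproj})$ are distinct and independently maintained, a union bound puts the whole ratio in $[e^{-\eps},e^{\eps}]$ with probability at least $3/4$; so for a fixed query $\vecx$ the returned distribution is an $\eps$-approximation to the MLE, albeit only with constant success probability.

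To upgrade the constant $3/4$ to the required confidence $1-\delta$, I would apply the standard median trick: run $r=\Theta(\log\frac{1}{\delta})$ independent copies of the \uniform scheme, each with its own family of distributed counters, and answer a query $\vecx$ with the median of the $r$ estimates. Each estimate lies in $[e^{-\eps},e^{\eps}]\cdot\hat{P}(\vecx)$ independently with probability at least $3/4>1/2$, so a Chernoff bound shows that a strict majority of the $r$ estimates do so except with probability at most $\delta$, and whenever a strict majority of values lie in a common multiplicative band so does their median. Hence the median estimator is an $(\eps,\delta)$-approximation to the MLE, and since every counter is updated continuously the guarantee holds at every prefix of the stream.

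For the communication cost, fix one copy. Each of its distributed counters is initialized with relative error $\frac{\eps}{16\sqrt{n}}$, so by Lemma~\ref{lem:rctr} (with maximum counter value $T\le m$) it sends $O\left(\frac{\sqrt{n}\,\sqrt{k}}{\eps}\cdot\log m\right)$ messages over the $m$ observations. For each $i\in[1,n]$ there are at most $J^{d+1}$ counters $\dc_i(x_i,\vecx_i^{\parproj})$ and at most $J^{d}$ counters $\dc_i(\vecx_i^{\parproj})$, i.e.\ $O(nJ^{d+1})$ counters in total, for a cost of $O\left(\frac{n^{3/2}J^{d+1}\sqrt{k}}{\eps}\cdot\log m\right)$ per copy; multiplying by the $r=\Theta(\log\frac{1}{\delta})$ copies gives the claimed bound $O\left(\frac{n^{3/2}J^{d+1}\sqrt{k}}{\eps}\cdot\log\frac{1}{\delta}\cdot\log m\right)$.

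The step needing the most care is the amplification: one must check that a median of multiplicative approximations is again a multiplicative approximation (which follows from the majority-sandwich observation above) and be explicit that what is being amplified is the \emph{per-query} guarantee of Lemmas~\ref{lem:loose-approx2}--\ref{lem:loose-approx3}; a guarantee holding simultaneously over all assignments $\vecx$ would require an extra union-bound factor of $O(n\log J)$ inside $r$, so the statement is most naturally read per query. Everything else---the regrouping of the ratio, the two Chebyshev-based product bounds, and the summation of per-counter costs from Lemma~\ref{lem:rctr}---is routine bookkeeping.
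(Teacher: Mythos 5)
Your proof takes essentially the same route as the paper's: the same splitting of $\frac{\tilde{P}(\vecx)}{\hat{P}(\vecx)}$ into the two counter products, the same combination of Lemmas~\ref{lem:loose-approx2} and~\ref{lem:loose-approx3} via a union bound to get an $(\eps,1/4)$-approximation, the same median-of-$O(\log\frac{1}{\delta})$ independent copies amplification, and the same counter-counting ($J^{d+1}$ plus $J^{d}$ counters per variable, each costing $O(\frac{\sqrt{nk}}{\eps}\log m)$ messages) for the communication bound. Your closing caveat that the guarantee is naturally per-query rather than simultaneous over all assignments $\vecx$ is a fair reading, and the paper's own proof carries exactly the same interpretation, so this is not a divergence from its argument.
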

%----------

%------------
\begin{proof} Recall that our approximation ratio is given by
\[
\frac{\tilde{P}(\vecx)}{\hat{P}(\vecx)} = \prod_{i=1}^n \frac{\dc_i(x_i,\vecx_i^{\parproj})} {\dc_i(\vecx_i^{\parproj})} \cdot \frac{\ctr_i(\vecx_i^{\parproj})}{\ctr_i(x_i,\vecx_i^{\parproj})}
\]

Combining Lemmas~\ref{lem:loose-approx2} and~\ref{lem:loose-approx3}, we have
\[
e^{-\eps} \leq \prod_{i=1}^n \frac{\dc_i(x_i,\vecx_i^{\parproj})}{\ctr_i(x_i,\vecx_i^{\parproj})} \cdot \frac{\ctr_i(\vecx_i^{\parproj})}{\dc_i(\vecx_i^{\parproj})} \leq e^{\eps} 
\]
with probability at least $3/4$, showing that the model that is maintained is an $(\eps,1/4)$ approximation to the MLE.
By taking the median of $O(\log \frac{1}{\delta})$ independent instances of the \uniform algorithm, we improve the error probability to $\delta$.

The communication cost for each distributed counter is 
$\bigo{\frac{\sqrt{n k}}{\eps} \cdot \log \frac{1}{\delta} \cdot \log m}$ messages.
For each $i \in [1,n]$, there are at most $J^{d+1}$ counters
$\dc_i(x_i,\vecx_i^{\parproj})$ for all $x_i \in \domain(X_i)$ and
$\vecx_i^{\parproj} \in \domain(\parent{X_i})$, and at most $J^d$
counters $\dc_i(\vecx_i^{\parproj})$ for all $\vecx_i^{\parproj} \in
\domain(\parent{X_i})$.
So the total communication cost is
$\bigo{\frac{n^{3/2} J^{d+1} \sqrt{k}}{\eps} \cdot \log \frac{1}{\delta} \cdot \log m }$ messages.
\end{proof}
%------------------

%------------------------------------------------
\subsection{Non-uniform Approximate Counters}
\label{sec:nonuniform}
%------------------------------------------------
In computing the communication cost of \uniform, we made
the simplifying assumption that the domains of different variables are
of the same size $J$, and each variable has the same number of
parents $d$ \footnote{Note that these assumptions were only used to
determine the communication cost, and do not affect the correctness
of the algorithm.}.
While this streamlines the analysis,
it misses a chance to more tightly bound the communication by better
adapting to the cost of parameter estimation.
Our third algorithm, \nonuniform, has a more involved analysis by
making more use of the information about the \bayesnet. 

We set the approximation parameters of distributed counters
$\dc_i(x_i, \vecx_i^{par})$ and $\dc_i(\vecx_i^{par})$ as a function of the values
$J_i$ (the cardinality of $\domain(X_i)$) and $K_i$ (the cardinality
of $\domain\left(\parent{X_i} \right)$).
To find the settings that yield the best tradeoffs, we express the
total communication cost as a function of different $J_i$s and
$K_i$s. Consider first the maintenance of the CPD for variable $X_i$,
this uses counters of the form $\dc_i(\cdot, \cdot)$. Using an
approximation error of $\nu_i$ for these counters leads to a
communication cost proportional to $\frac{J_i K_i}{\nu_i}$, since the
number of such counters needed at $X_i$ is $J_i K_i$.  Thus, the total
cost across all variables is $\sum_{i=1}^n \frac{J_i K_i}{\nu_i}$.
In order to ensure correctness (approximation to the MLE), we consider
the variance of our estimate of the joint probability distribution.
Let $F=\prod_{i=1}^n \dc_i(x_i,\vecx_i^{\parproj})$ and 
$f=\prod_{i=1}^n \ctr_i(x_i,\vecx_i^{\parproj})$.
%
%----------------
\begin{align}
  \label{nu:cond1}
  \textstyle
  \expec{F^2}
  &
  \textstyle
  = \prod_{i=1}^n \left(1+\nu_i^2 \right) \cdot f^2 \leq \prod_{i=1}^n
  e^{\nu_i^2} \cdot f^2   \nonumber \\
  &
    \textstyle
  = e^{\left(\sum_{i=1}^n \nu_i^2\right)} \cdot f^2 \leq \big(1+2\sum_{i=1}^n \nu_i^2 \big) \cdot f^2
\end{align}
%----------------
%
From Lemma~\ref{lem:loose-approx1}, to bound the error of the joint
distribution, we want that 
$\expec{F^2} \leq \big(1+\frac{\eps^2}{128} \big) \cdot f^2$ which can be ensured provided
that the following condition is satisfied,
%
%----------------
\begin{equation}
  \label{nu:cond2}
  \textstyle
\sum_{i=1}^n \nu_i^2 \leq \eps^2/256
\end{equation}
%----------------
%
Thus, the problem is to find values of $\nu_1, \ldots, \nu_n$ to
minimize communication while satisfying this
constraint.
That is, 
%This yields the following convex optimization problem.  
%-----------------
\begin{equation}
\label{eq:non-uniform}
%\begin{split}
\text{Minimize} \quad  \sum_{i=1}^n \frac{J_iK_i}{\nu_i}   %\\
\quad \text{subject to} \quad \sum_{i=1}^n \nu_i^2 = \frac{\eps^2}{256}
%\end{split}
\end{equation}
%-----------------

Using the Lagrange Multiplier Method, 
let $\mathcal{L} = \sum_{i=1}^n \frac{J_iK_i}{\nu_i} + \lambda \left( \nu_i^2 - \frac{\eps^2}{256}  \right)$, 
we must satisfy:
%-----------------
\begin{equation}
\label{eq:lagrange-eq}
\left\{
  \begin{array}{l}
  \frac{\partial \mathcal{L}}{\partial \nu_1} = -\frac{J_1K_1}{\nu_1^2} + 2\lambda\nu_1 = 0 \\
  \frac{\partial \mathcal{L}}{\partial \nu_2} = -\frac{J_2K_2}{\nu_2^2} + 2\lambda\nu_2 = 0 \\  
  \quad \quad \vdots  \\
  \frac{\partial \mathcal{L}}{\partial \nu_n} = -\frac{J_nK_n}{\nu_n^2} + 2\lambda\nu_n = 0 \\ 
  \sum_{i=1}^n \nu_i^2 = \frac{\eps^2}{256}\\
  \end{array}
\right.
\end{equation}
%-----------------

Solving the above equations, the optimal parameters are:
%------------------
\begin{equation}
  \label{eq:non-uniform-sol1}
  \textstyle
\nu_i = \frac{(J_i K_i)^{1/3} \eps}{16 \alpha}, \quad \text{where} \quad \alpha =  \Big( \sum_{i=1}^n (J_i K_i)^{2/3} \Big)^{1/2}
\end{equation}
%------------------

Next we consider the distributed counters $\dc(\cdot)$. For each $i
\in [1,n]$ and each $\vecx_i^{\parproj} \in \domain(\parent{X_i})$, 
we maintain $\dc_i(\vecx_i^{\parproj})$ independently and ignore the
shared parents as we did in the Section~\ref{sec:uniform}. Let $\mu_i$ denote the
approximation factor for $\dc_i(\vecx_i^{\parproj})$, the communication cost for
counter $\dc_i(\vecx_i^{\parproj})$ is proportional to $\sum_{i=1}^n
\frac{K_i}{\mu_i}$ and the restriction due to bounding the error of
joint distribution is $\sum_{i=1}^n \mu_i^2 \leq \frac{\eps^2}{256}$. Similarly to above, 
the solution via the Lagrange multiplier method is
%
%------------------
\begin{equation}
\label{eq:non-uniform-sol2}
\mu_i = \frac{K_i^{1/3} \eps}{16 \beta}, \quad \text{where} \quad \beta = \Big( \sum_{i=1}^n K_i^{2/3} \Big)^{1/2}
\end{equation} 
%------------------
\noindent
Setting
$\epsfnA(i)=\nu_i$ as in~\eqref{eq:non-uniform-sol1} and
$\epsfnB(i)=\mu_i$ as in~\eqref{eq:non-uniform-sol2}
in Algorithm~\ref{algo:generic-init}
gives our \nonuniform algorithm.
%This algorithm, which we call \nonuniform, uses
%Algorithm~\ref{algo:generic-init} as initialization with

%  The method
%for processing a training point and for answering a query are the same
%as Algorithms~\ref{algo:generic-train} and \ref{algo:generic-query}.

%----------------------------
\begin{theorem}
\label{thm:nonuniform}
Given $0<\eps,\delta<1$, \nonuniform continuously 
maintains an $(\eps,\delta)$-approximation to the MLE given $m$ training observations.
The communication cost over all observations is
$O\left( \Gamma \cdot \frac{\sqrt{k}}{\eps} \cdot \log \frac{1}{\delta} \cdot \log m  \right)$ messages,
where 
$\Gamma = \left( \sum_{i=1}^n (J_i K_i)^{2/3} \right)^{3/2}$   $+$  $\left( \sum_{i=1}^n K_i^{2/3} \right)^{3/2}$
\end{theorem}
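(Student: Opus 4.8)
The plan is to follow the same two-step template used for \uniform in Theorem~\ref{thm:uniform}: first establish the $(\eps,\delta)$-approximation guarantee via a variance bound and Chebyshev's inequality, then bound the communication by plugging the chosen counter parameters into Lemma~\ref{lem:rctr} and summing over all counters. The only genuinely new ingredient is that the error budget is split non-uniformly according to~\eqref{eq:non-uniform-sol1} and~\eqref{eq:non-uniform-sol2}, so I need to check that this allocation still meets the variance constraint~\eqref{nu:cond2} and that the resulting sums telescope into $\Gamma$.

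For correctness, I would set $F=\prod_{i=1}^n \dc_i(x_i,\vecx_i^{\parproj})$ and $f=\prod_{i=1}^n \ctr_i(x_i,\vecx_i^{\parproj})$ as in Lemma~\ref{lem:loose-approx1}. Independence and unbiasedness of the distributed counters give $\expec{F}=f$, and bounding $\expec{\dc_i^2(x_i,\vecx_i^{\parproj})}\le(1+\nu_i^2)\,\ctr_i^2(x_i,\vecx_i^{\parproj})$ yields, through the chain in~\eqref{nu:cond1}, $\expec{F^2}\le e^{\sum_i\nu_i^2}f^2$. Since~\eqref{eq:non-uniform-sol1} is exactly the Lagrange solution of~\eqref{eq:non-uniform}, it satisfies $\sum_i\nu_i^2=\eps^2/256$, so Fact~\ref{fact:math-fact} (applicable as $\eps^2/256<0.3$) recovers $\expec{F^2}\le(1+\eps^2/128)f^2$, hence $\variance{F}\le\frac{\eps^2}{128}f^2$ --- precisely the bound feeding Lemma~\ref{lem:loose-approx2}, which then gives $e^{-\eps/2}\le F/f\le e^{\eps/2}$ with probability at least $7/8$. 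Repeating the argument for the separately-maintained parent counters $\dc_i(\vecx_i^{\parproj})$ (kept independent across $i$ to sidestep shared-parent dependencies, as before Lemma~\ref{lem:loose-approx3}) with parameters $\mu_i$ from~\eqref{eq:non-uniform-sol2}, for which again $\sum_i\mu_i^2=\eps^2/256$, gives $e^{-\eps/2}\le\prod_{i=1}^n \ctr_i(\vecx_i^{\parproj})/\dc_i(\vecx_i^{\parproj})\le e^{\eps/2}$ with probability at least $7/8$. As $\tilde P(\vecx)/\hat P(\vecx)$ is the product of these two quantities, a union bound places it in $[e^{-\eps},e^{\eps}]$ with probability at least $3/4$; taking the median of $O(\log\frac1\delta)$ independent instances boosts this to $1-\delta$, giving the $(\eps,\delta)$-approximation.

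For the communication cost, Lemma~\ref{lem:rctr} charges $O(\frac{\sqrt k}{\nu}\log m)$ messages to a counter with error parameter $\nu$ whose value never exceeds $m$. Variable $X_i$ maintains $J_iK_i$ counters $\dc_i(\cdot,\cdot)$ with error $\nu_i$, contributing $O(\frac{\sqrt k}{\eps}\sum_i\frac{J_iK_i}{\nu_i}\log m)$; substituting $\nu_i=(J_iK_i)^{1/3}\eps/(16\alpha)$ with $\alpha=(\sum_i(J_iK_i)^{2/3})^{1/2}$ collapses $\sum_i\frac{J_iK_i}{\nu_i}$ to $\frac{16\alpha}{\eps}\sum_i(J_iK_i)^{2/3}=\frac{16\alpha^3}{\eps}$, where $\alpha^3=(\sum_i(J_iK_i)^{2/3})^{3/2}$. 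The $K_i$ parent counters $\dc_i(\vecx_i^{\parproj})$ with error $\mu_i=K_i^{1/3}\eps/(16\beta)$, $\beta=(\sum_i K_i^{2/3})^{1/2}$, analogously contribute a term proportional to $\beta^3=(\sum_i K_i^{2/3})^{3/2}$. Adding the two pieces and multiplying by the $O(\log\frac1\delta)$ factor from median amplification produces exactly $O(\Gamma\cdot\frac{\sqrt k}{\eps}\log\frac1\delta\log m)$.

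The point most worth pinning down --- though it is light, given the machinery already in place --- is that~\eqref{eq:non-uniform-sol1}--\eqref{eq:non-uniform-sol2} really are the optima of the constrained programs and are legal inputs to \distctr. For optimality, the objective $\sum_i J_iK_i/\nu_i$ is strictly convex on the positive orthant and strictly decreasing in each coordinate, so its minimum over $\{\sum_i\nu_i^2\le\eps^2/256\}$ forces the constraint tight, and on the resulting compact surface the unique stationary point of the Lagrangian~\eqref{eq:lagrange-eq} is the global minimizer; the same argument handles the $\mu_i$ program. For legality, $\sum_i\nu_i^2=\sum_i\mu_i^2=\eps^2/256$ forces every $\nu_i,\mu_i<\eps/16<1$, so Fact~\ref{fact:math-fact} applies termwise and the hypothesis $k\le 1/\nu_i^2$ of Lemma~\ref{lem:rctr} holds whenever $k\le 256/\eps^2$. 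Everything else is the bookkeeping sketched above.
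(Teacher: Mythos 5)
Your proposal is correct and follows essentially the same route as the paper's own proof: bound $\variance{F}$ via conditions~\eqref{nu:cond1}--\eqref{nu:cond2} with the Lagrange allocations, invoke the Chebyshev argument of Lemma~\ref{lem:loose-approx2} for both families of counters, and sum the per-counter costs of Lemma~\ref{lem:rctr}, which telescope to $\alpha^3+\beta^3=\Gamma$. Your added checks (optimality of the Lagrange point, the explicit union bound and median amplification, and the legality conditions $\nu_i,\mu_i<\eps/16$ and $k\le 256/\eps^2$) are details the paper leaves implicit but do not change the argument.
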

\begin{proof}
Let $F=\prod_{i=1}^n \dc_i(x_i,\vecx_i^{\parproj})$ 
and $f=\prod_{i=1}^n \ctr_i(x_i,\vecx_i^{\parproj})$.
From Conditions~\ref{nu:cond1} and \ref{nu:cond2}, we bound the variance of $F$
\[
\variance{F} = \expec{F^2} - \left( \expec{F} \right)^2 \leq \left(1 + \frac{\eps^2}{128}\right) \cdot f^2 - f^2 = \frac{\eps^2}{128} \cdot f^2
\]
By Lemma~\ref{lem:loose-approx2}, with probability at least $7/8$ we have
\[
e^{-\frac\eps{2}} \leq \frac{F}{f} \leq  e^{\frac{\eps}{2}}
\] 
Thus, 
\[
e^{-\frac\eps{2}} \leq \frac{\prod_{i=1}^n \dc_i(x_i,\vecx_i^{\parproj})}{\prod_{i=1}^n \ctr_i(x_i,\vecx_i^{\parproj})} \leq  e^{\frac{\eps}{2}}
\]
Similarly for counter $\dc_i(\vecx_i^{\parproj})$, 
\[
e^{-\frac\eps{2}} \leq \frac{\prod_{i=1}^n \dc_i(\vecx_i^{\parproj})}{\prod_{i=1}^n \ctr_i(\vecx_i^{\parproj})} \leq  e^{\frac{\eps}{2}}
\]
Combining above two equations, we prove the correctness of \nonuniform : given input $\vecx$, we have
\[
e^{-\eps} \leq \frac{\tilde{P}(\vecx)}{\hat{P}(\vecx)} \leq e^{\eps}
\]

For each $i \in [1, n]$, $x_i \in \domain(X_i)$ and $\vecx_i^{\parproj} \in \domain(\parent{X_i})$, 
the communication cost to maintain the counter $\dc_i(x_i,\vecx_i^{\parproj})$ is 
$O\left( \frac{1}{\nu_i} \cdot \log \frac{1}{\delta} \cdot \log m \right)$. 
As $J_i$ is the cardinality of $\domain(X_i)$ and $K_i$ is the cardinality of $\domain(\parent{X_i})$, 
the communication cost for all $\dc_i(\cdot , \cdot)$ counters $M_1$ is
\[
M_1 = \sum_{i=1}^n \frac{J_i K_i \sqrt{k}}{\nu_i} \cdot \log \frac{1}{\delta} \cdot \log m
\] 
By substituting the values of $\nu_i$ in Equation~\ref{eq:non-uniform-sol1} to the expression of $M_1$, we obtain
\[
M_1 = \left( \sum_{i=1}^n (J_i K_i)^{2/3} \right)^{3/2} \cdot \frac{\sqrt{k}}{\eps} \cdot \log \frac{1}{\delta} \cdot \log m
\]
Similarly, the communication cost for all $\dc_i(\cdot)$ counters $M_2$ is
\[
M_2 = \left( \sum_{i=1}^n K_i^{2/3} \right)^{3/2} \cdot \frac{\sqrt{k}}{\eps} \cdot \log \frac{1}{\delta} \cdot \log m
\]
The total communication cost to maintain all the counters is 
$M_1 + M_2 = O\left( \Gamma \cdot \frac{\sqrt{k}}{\eps} \cdot \log \frac{1}{\delta} \cdot \log m  \right)$.
\end{proof}

%----------------------------

%here is a sketch.
%The correctness of \nonuniform follows from
%Conditions~\ref{nu:cond1} and \ref{nu:cond2} which together constrain
%the variance of the variable $F$.
%The communication cost is obtained
%by substituting the values of $\nu_i$ and $\mu_i$ into expressions for
%the communication cost, as in the proof of Theorem~\ref{thm:uniform}.
%We omit the detailed proof due to space constraints.

\para{Comparison between \uniform and \nonuniform.}
Note that \uniform and \nonuniform have the same dependence on $k$,
$\eps, \delta$, and $m$. To compare the two algorithms, we focus on
their dependence on the $J_i$s and $K_i$s.  Consider a case when all
but one of the $n$ variables are binary valued, and variable $X_1$ can
take one of $J$ different values, for some $J \gg 1$.  Further,
suppose that (1)~the network was a tree so that $d$, the maximum
number of parents of a node is $1$, and (2)~$X_1$ was a leaf in the
tree, so that $K_i=1$ for all nodes $X_i$.  The communication bound
for \uniform by Theorem~\ref{thm:uniform} is $O(n^{1.5} J^2)$, while
the bound for \nonuniform by Theorem~\ref{thm:nonuniform} is
$O({(n+J^{2/3})}^{1.5}) = O(\max\{n^{1.5}, J\})$.  In this case,
our analysis argues that
\nonuniform provides a much smaller communication cost than \uniform.

%{\color{red} Contrast to the naive bounds: what happens in some extreme case, e.g. 1 variable has high cardinality, rest are binary?}
%---------------------------------

%\input{classification}

%-------------------------------------------------------------
\section{Special Cases and Extensions}
\label{sec:corollary}
%-------------------------------------------------------------

Section~\ref{sec:algo} showed that \nonuniform
has the tightest bounds on communication cost to  maintain an
approximation to the MLE.
In this section, we apply \nonuniform to networks with special structure,
%We now consider some important special cases,
such as 
Tree-Structured Network and Na{\"i}ve Bayes, as well as to a
classification problem.

%Tree-Structured Network and Na{\"i}ve Bayes.

%-------------
\begin{figure}[t]
\centering
\includegraphics[width=0.5\textwidth]{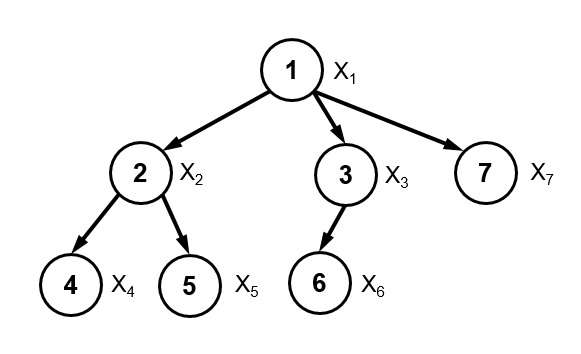}
\caption{Example of a tree-structured network. Each node has one
parent, except for $X_1$, the root.}
\label{fig:bn-tree}
\end{figure}
%-------------

%--------------------------------
\subsection{Tree Structured Network}
\label{sec:tree}
%--------------------------------
When the Bayesian network is structured as a tree, each node has
exactly one parent, except for the single root\footnote{We assume
  that the graph is connected, but this can be easily generalized for
  the case of a forest.}. An example of tree-structured network is
shown in Figure~\ref{fig:bn-tree}.
The following result is a consequence of Theorem~\ref{thm:nonuniform}
specialized to a tree, by noting that each set
$\parent{X_i}$ is of size $1$, 
we let $J_{\parent{i}}$ denote $K_i$, the cardinality of $\parent{X_i}$.
%------------------
\begin{lemma}
\label{lem:bn-tree}
Given $0<\eps,\delta<1$ and a tree-structured network with $n$ variables, 
Algorithm \nonuniform can continuously maintain an $(\eps,\delta)$-approximation to the MLE 
incurring communication cost 
$O( \Gamma \cdot \frac{\sqrt{k}}{\eps} \cdot \log \frac{1}{\delta} \cdot \log m  )$ messages.
where 
$\Gamma = ( \sum_{i=1}^n (J_i J_{\parent{i}})^{2/3} )^{3/2}   +  ( \sum_{i=1}^n J_{\parent{i}}^{2/3} )^{3/2}$.
For the case when $J_i=J$ for all $i$, this reduces to $\Gamma=O(n^{1.5} J^2)$.
\end{lemma}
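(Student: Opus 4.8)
The plan is to obtain Lemma~\ref{lem:bn-tree} as a direct specialization of Theorem~\ref{thm:nonuniform}, using the fact that a tree is nothing more than a \bayesnet in which every node has at most one parent. First I would record the structural observation: for each non-root node $X_i$ the parent set $\parent{X_i}$ consists of a single variable, so $K_i = |\domain(\parent{X_i})|$ is exactly the domain cardinality of that one parent, which we denote $J_{\parent{i}}$; for the unique root we have $\parent{X_i}=\emptyset$, hence $K_i=1$, and we adopt the convention $J_{\parent{i}}=1$ for the root so that $K_i = J_{\parent{i}}$ holds for every $i\in[1,n]$. With this identity the approximation parameters chosen by \nonuniform in \eqref{eq:non-uniform-sol1} and \eqref{eq:non-uniform-sol2} become $\nu_i = (J_i J_{\parent{i}})^{1/3}\eps/(16\alpha)$ and $\mu_i = J_{\parent{i}}^{1/3}\eps/(16\beta)$, and the correctness claim — that \nonuniform continuously maintains an $(\eps,\delta)$-approximation to the MLE — follows verbatim from Theorem~\ref{thm:nonuniform}, since its proof only used the DAG structure and never any property special to a general graph.

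Next I would substitute $K_i = J_{\parent{i}}$ into the quantity $\Gamma = \big(\sum_{i=1}^n (J_i K_i)^{2/3}\big)^{3/2} + \big(\sum_{i=1}^n K_i^{2/3}\big)^{3/2}$ supplied by Theorem~\ref{thm:nonuniform}, which immediately gives $\Gamma = \big(\sum_{i=1}^n (J_i J_{\parent{i}})^{2/3}\big)^{3/2} + \big(\sum_{i=1}^n J_{\parent{i}}^{2/3}\big)^{3/2}$, i.e.\ exactly the claimed expression; the communication bound $O\big(\Gamma\cdot\frac{\sqrt k}{\eps}\cdot\log\frac1\delta\cdot\log m\big)$ carries over unchanged, since the count of $\dc_i(\cdot,\cdot)$ counters at $X_i$ is $J_i K_i = J_i J_{\parent{i}}$ and the count of $\dc_i(\cdot)$ counters is $K_i = J_{\parent{i}}$.

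Finally, for the uniform-cardinality case $J_i = J$ for all $i$, I would plug in $J_{\parent{i}} = J$ for the $n-1$ non-root nodes and $J_{\parent{i}} = 1$ for the root, so that $\sum_{i=1}^n (J_i J_{\parent{i}})^{2/3} = (n-1)J^{4/3} + J^{2/3} = O(nJ^{4/3})$ and hence the first term of $\Gamma$ is $O\big((nJ^{4/3})^{3/2}\big) = O(n^{3/2}J^2)$, while $\sum_{i=1}^n J_{\parent{i}}^{2/3} = (n-1)J^{2/3} + 1 = O(nJ^{2/3})$ makes the second term $O(n^{3/2}J)$, which is dominated by the first; thus $\Gamma = O(n^{3/2}J^2)$. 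There is no genuine obstacle in this argument — the one point requiring a moment's care is the bookkeeping for the root node, ensuring that the degenerate counter $\dc_i(\vecx_i^{par})$ with $K_i = 1$ is treated consistently with the convention $J_{\parent{i}} = 1$ so that it contributes only an $O(1)$ additive term to each sum and does not affect the asymptotics.
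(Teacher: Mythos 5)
Your proposal is correct and follows the same route as the paper, which obtains the lemma directly as a specialization of Theorem~\ref{thm:nonuniform} by noting that each parent set has size one and writing $K_i = J_{\parent{i}}$. Your extra bookkeeping for the root node (taking $K_i = J_{\parent{i}} = 1$) is a minor refinement of the same substitution and does not change the argument or the asymptotics.
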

%------------------

%---------------------------
\subsection{Na{\"i}ve Bayes}
\label{sec:naive}
%---------------------------
The \naivebayes model is perhaps the most commonly used graphical
model, especially in tasks such as classification, and has a simple
structure as shown in Figure~\ref{fig:bn-naive}. The graphical model
of \naivebayes is a two-layer tree where we assume the root is node $1$.

%-------------
\begin{figure}
\centering
\includegraphics[width=0.5\textwidth]{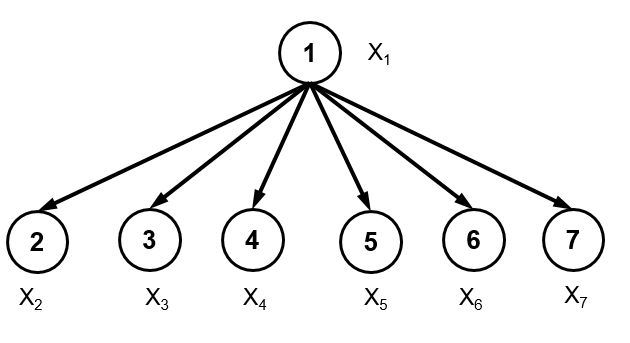}
\caption{Example of \naivebayes over variables $X_1, X_2 ..., X_7$.
All the nodes except $X_1$ in the graph have the same parent $X_1$, which
is the root of the tree.}
\label{fig:bn-naive}
\end{figure}
%-------------

Specializing the \nonuniform algorithm for the case of
\naivebayes, we use results \eqref{eq:non-uniform-sol1}
and~\eqref{eq:non-uniform-sol2}.
For each node $X_i$ with $i \in [2,n]$, $K_i=J_1$.
Hence, we have the approximation factors $\epsfnA(i) = \nu_i$ and
$\epsfnB(i) = \mu_i$ as follows.
%
%-------------
\begin{equation}
\label{eq:naive-sol}
\nu_i = {\textstyle{\frac{\eps}{16}}}J_i^{1/3} \Big/\big( \sum_{i=2}^n J_i^{2/3} \big)^{1/2}, \qquad
\mu_i = \frac{\eps}{16\sqrt{n}}
\end{equation}
%-------------

%The approximation factor $\mu_i$ for $\dc_i(\bm{b})$ is
%\[
%\mu_i = \frac{\eps}{16\sqrt{n}}
%\]

Note that we maintain the counter $\dc_i(x_1)$ 
for each $x_1 \in \domain(X_1)$ and $i \in [2,n]$ independently, 
as $X_1$ is the parent of $X_2, X_3, \ldots, X_n$.
This is wasteful since for $i \in [2, n]$, 
$\dc_i(x_1)$ are all tracking the same event.
%This will have communication cost $\dc(\bm{b})$ proportional to
%$O\left(n^{3/2}\right)$. 
Utilizing this special structure, we can do better by maintain only
one copy of the counter $\dc(x_1)$ for each $x_1 \in \domain(X_1)$,
but with a more accurate approximation factor $\frac{\eps}{3n}$. The
resulting algorithm uses Algorithm~\ref{algo:naive-bayes} to perform
initialization. 

%algorithm for maintaining distributed counters for \naivebayes is
%shown in Algorithm~\ref{algo:naive-bayes}. Only the initialization of the
%The communication cost for counters $\dc(\bm{b})$ proportional to
%$n$. The algorithm on maintaining distributed counters for
%\naivebayes is shown in Algorithm~\ref{algo:naive-bayes}.

%---------------------------
\begin{algorithm}[t]
\label{algo:naive-bayes}
\caption{Naive-Bayes-Init()} 
\ForEach{$i=2 \ldots n$, $x_i \in \domain(X_i)$, and $x_1 \in \domain(X_1)$}
{$\dc_i(x_i, x_1) \gets$ DistCounter($\nu_i, \delta$), where $\nu_i$ is shown in Equation~\ref{eq:naive-sol}}
\ForEach{$x_1 \in \domain(X_1)$}
{$\dc_1(x_1) \gets$ DistCounter($\frac{\eps}{3n}, \delta$)}
\end{algorithm}
%---------------------------

%------------------
\begin{lemma}
\label{lem:naive-bayes}
Given $0<\eps,\delta<1$ and a \naivebayes model with $n$ variables,
Algorithm~\ref{algo:naive-bayes} combined with Algorithms~\ref{algo:generic-train} and \ref{algo:generic-query}
continuously maintains an $(\eps,\delta)$-approximation to the MLE,
incurring communication cost over all the observations
$O\left(\frac{\sqrt{k}}{\eps} \cdot J_1 \cdot \left( \sum_{i=2}^n J_i^{2/3} \right)^{3/2} \cdot \log \frac{1}{\delta} \cdot \log m \right)$
messages over $m$ distributed observations. In the case when all $J_i$ are equal to $J$, this expression is 
$O\left(\frac{n^{3/2} \sqrt{k}}{\eps} \cdot J^2 \cdot \log \frac{1}{\delta} \cdot \log m \right)$.
\end{lemma}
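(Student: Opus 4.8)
The plan is to specialise the \nonuniform machinery of Section~\ref{sec:nonuniform} to the two-layer structure, while accounting separately for the single counter per class value $x_1$ that Algorithm~\ref{algo:naive-bayes} shares across all $n-1$ children and the root. First I would write down the value returned by Algorithms~\ref{algo:generic-train}--\ref{algo:generic-query} under this sharing. Writing $F(x_1)$ for the number of events with $X_1=x_1$, $F_i(x_i,x_1)$ for the number with $(X_i=x_i,X_1=x_1)$, and $m$ for the total number of events, and letting $\dc(x_1)$ be the shared counter tracking $F(x_1)$ and $\dc_1(\emptyset)$ a counter tracking $m$ (either a dedicated counter or $\sum_{x_1'}\dc(x_1')$ --- the analysis is identical), the query returns $\tilde P(\vecx)=\frac{\dc(x_1)}{\dc_1(\emptyset)}\prod_{i=2}^n\frac{\dc_i(x_i,x_1)}{\dc(x_1)}$. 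Dividing by the MLE $\hat P(\vecx)=\frac{F(x_1)}{m}\prod_{i=2}^n\frac{F_i(x_i,x_1)}{F(x_1)}$ and cancelling --- the shared counter occurs once in a numerator and $n-1$ times in denominators --- gives
\[
\frac{\tilde P(\vecx)}{\hat P(\vecx)} \;=\; \Big(\tfrac{F(x_1)}{\dc(x_1)}\Big)^{\!n-2}\cdot\tfrac{m}{\dc_1(\emptyset)}\cdot\prod_{i=2}^n \tfrac{\dc_i(x_i,x_1)}{F_i(x_i,x_1)}.
\]

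Next I would bound the two groups of factors. For the child counters, a one-line check shows the parameters $\nu_i$ of~\eqref{eq:naive-sol} satisfy $\sum_{i=2}^n\nu_i^2=\eps^2/256$ (the $\sum_{i=2}^n J_i^{2/3}$ in the denominator is precisely what makes the sum collapse), so Conditions~\ref{nu:cond1}--\ref{nu:cond2} hold and, exactly as in Lemmas~\ref{lem:loose-approx1} and~\ref{lem:loose-approx2}, $\prod_{i=2}^n \dc_i(x_i,x_1)/F_i(x_i,x_1)\in[e^{-\eps/2},e^{\eps/2}]$ with probability at least $7/8$; that these particular $\nu_i$ also minimise communication subject to that constraint is the Lagrange computation of Section~\ref{sec:nonuniform} restricted to $i\in[2,n]$ with $K_i=J_1$. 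For the remaining factors, the rescaling/Chebyshev/union-bound argument used for \baseline keeps $\dc(x_1)$ within $(1\pm\tfrac{\eps}{3n})F(x_1)$ and $\dc_1(\emptyset)$ within $(1\pm\tfrac{\eps}{3n})m$ with high probability (the union is over the $J_1$ class values); since the combined exponent $n-1$ is strictly below $n$, a one-sided refinement of Fact~\ref{lem:uniform-approx} --- namely $(1-\tfrac{\eps}{3n})^{-(n-1)}\le e^{\eps/2}$, via $-\ln(1-x)\le x/(1-x)$ --- gives $\big(\tfrac{F(x_1)}{\dc(x_1)}\big)^{n-2}\tfrac{m}{\dc_1(\emptyset)}\in[e^{-\eps/2},e^{\eps/2}]$. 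Multiplying the two ranges yields $\tilde P/\hat P\in[e^{-\eps},e^{\eps}]$ with constant probability at every time step (continuity being inherited from Lemma~\ref{lem:rctr}), and taking the median of the estimates from $O(\log\tfrac1\delta)$ independent copies boosts this to $1-\delta$, establishing the $(\eps,\delta)$-approximation.

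Then the communication bound is a substitution into Lemma~\ref{lem:rctr}. At level $i\in[2,n]$ there are $J_iJ_1$ counters $\dc_i(x_i,x_1)$, each costing $O(\tfrac{\sqrt k}{\nu_i}\log\tfrac1\delta\log m)$ messages; summing and plugging in $\nu_i$ from~\eqref{eq:naive-sol} gives $\sum_{i=2}^n \tfrac{J_1 J_i}{\nu_i}=\tfrac{16J_1}{\eps}\big(\sum_{i=2}^n J_i^{2/3}\big)^{3/2}$, which is the asserted bound. The $J_1$ shared counters $\dc(x_1)$ (and $\dc_1(\emptyset)$, if used) contribute only $O(\tfrac{nJ_1\sqrt k}{\eps}\log\tfrac1\delta\log m)$, which is dominated since $\big(\sum_{i=2}^n J_i^{2/3}\big)^{3/2}\ge(n-1)^{3/2}\ge n$; so the total is as stated. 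The special case $J_i=J$ follows by substituting $\sum_{i=2}^n J_i^{2/3}=(n-1)J^{2/3}$ and $J_1=J$.

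I expect the main obstacle to be the error-budget bookkeeping around the shared counter: one has to get the net exponent right (it is $n-2$, from the single numerator occurrence against the $n-1$ denominator occurrences), and one must check that the ``fine'' block really fits inside an $e^{\eps/2}$ envelope after the Chebyshev rescaling --- Fact~\ref{lem:uniform-approx} as written only delivers $e^{\eps}$ for an $n$-fold product, so the argument needs the sharper one-sided estimate in order that the $e^{\eps/2}$ from the child counters and the $e^{\eps/2}$ from the shared/root counters sum to exactly $e^{\eps}$ rather than overshoot.
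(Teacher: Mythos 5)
Your proposal is correct and follows exactly the route the paper sets up for this lemma (which it states without a written-out proof): the non-uniform allocation of Eq.~\eqref{eq:naive-sol} for the child counters so that $\sum_{i\ge 2}\nu_i^2=\eps^2/256$ gives the $e^{\pm\eps/2}$ block via Lemmas~\ref{lem:loose-approx1}--\ref{lem:loose-approx2}, while the single shared counter $\dc(x_1)$ with accuracy $\frac{\eps}{3n}$ absorbs its net $(n-2)$-fold appearance (plus the normalizing count) within another $e^{\pm\eps/2}$, and the cost bookkeeping matches the stated bounds. Your care with the exponent $n-2$, the one-sided sharpening of Fact~\ref{lem:uniform-approx}, and the observation that the $O(nJ_1)$ cost of the shared counters is dominated are all accurate and consistent with the paper's intent.
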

%------------------

%-----------------------
\subsection{Classification}
\label{sec:class}
%-----------------------
Thus far, our goal has been to estimate probabilities of joint distributions of random variables.
We now present an application of these techniques to the task of Bayesian classification. 
In classification, we are given some evidence $e$, 
and the objective is to find an assignment to a subset of random variables $Y$, given $e$.
The usual way to do this is to find the assignment that maximizes the
probability, given $e$. That is,
$\text{Class}(\bm{Y} \mid e) = \argmax_y \prob{y,e}$. 
We are interested in an approximate version of the above formulation,
given by:

%----------------------
\begin{definition}
\label{defn:class-obj}
Given a \bayesnet $\mathcal{G}$, let $Y$ denote the set of variables
whose values need to be assigned, and $\eps$ denote an error parameter. For any
evidence $e$, we say that $\bm{b}$ solves Bayesian classification with
$\eps$ error if
\[\textstyle\empProb{Y= \bm{b} \mid e} \ge (1-\eps) \cdot \max_y
\empProb{Y=y \mid e}.\]
\end{definition}
In other words, we want to find the assignment to the set of variables
$Y$ with conditional probability close to the maximum, if not equal to
the maximum.
%----------------------

%----------------------
\begin{lemma}
\label{lem:subset-vars}
If for a set of variables $\mathcal{X} = \{X_1,\ldots,X_n\}$, we have 
$e^{-\eps/2} \leq \frac{\estProb{\mathcal{X}}}{\empProb{\mathcal{X}}} \leq e^{\eps/2}$,
then for any subset of non-overlapping variables $\bm{X},\bm{Y} \subseteq \mathcal{X}$, $\bm{X} \cap \bm{Y} = \emptyset$,
$e^{-\eps} \leq \frac{\estProb{\bm{Y} \mid \bm{X}}}{\empProb{\bm{Y} \mid \bm{X}}} \leq e^{\eps}$.
\end{lemma}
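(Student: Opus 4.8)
The plan is to reduce the conditional-probability statement to the joint-probability hypothesis by writing conditionals as ratios of joints and using that a ratio of two quantities, each within a factor $e^{\eps/2}$ of its target, is within a factor $e^{\eps}$ of the target ratio. First I would write, for any assignment $\bm{y}$ to $\bm{Y}$ and $\bm{x}$ to $\bm{X}$,
\[
\empProb{\bm{y}\mid\bm{x}} = \frac{\empProb{\bm{y},\bm{x}}}{\empProb{\bm{x}}},
\qquad
\estProb{\bm{y}\mid\bm{x}} = \frac{\estProb{\bm{y},\bm{x}}}{\estProb{\bm{x}}},
\]
so that
\[
\frac{\estProb{\bm{y}\mid\bm{x}}}{\empProb{\bm{y}\mid\bm{x}}}
= \frac{\estProb{\bm{y},\bm{x}}}{\empProb{\bm{y},\bm{x}}}
\cdot \frac{\empProb{\bm{x}}}{\estProb{\bm{x}}}.
\]
The first factor is a ratio of estimated to empirical probability of the joint event $(\bm{Y}=\bm{y},\bm{X}=\bm{x})$, and the second is the reciprocal of the same kind of ratio for the event $(\bm{X}=\bm{x})$.

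The key step is then to argue that the hypothesis, which is stated for a full assignment to all of $\mathcal{X}$, transfers to the marginal events $(\bm{Y}=\bm{y},\bm{X}=\bm{x})$ and $(\bm{X}=\bm{x})$ on subsets of variables. This holds because marginal probabilities are sums of joint probabilities over the unconstrained variables: if $e^{-\eps/2}\empProb{\vecx'}\le\estProb{\vecx'}\le e^{\eps/2}\empProb{\vecx'}$ holds termwise for every completion $\vecx'$ of a partial assignment, then summing over the completions preserves the same two-sided multiplicative bound for the marginal (a weighted sum of numbers each within a common multiplicative window lies in that window). Applying this to the event on $\bm{Y}\cup\bm{X}$ and to the event on $\bm{X}$ gives
\[
e^{-\eps/2}\le\frac{\estProb{\bm{y},\bm{x}}}{\empProb{\bm{y},\bm{x}}}\le e^{\eps/2},
\qquad
e^{-\eps/2}\le\frac{\estProb{\bm{x}}}{\empProb{\bm{x}}}\le e^{\eps/2}.
\]
Substituting into the displayed factorization of the conditional ratio, the worst case multiplies $e^{\eps/2}$ by $e^{\eps/2}$, yielding $e^{-\eps}\le\estProb{\bm{y}\mid\bm{x}}/\empProb{\bm{y}\mid\bm{x}}\le e^{\eps}$, which is the claim. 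Since $\bm{b}$ is simply a particular value of $\bm{y}$, the statement for arbitrary assignments is exactly what is asked.

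I expect the main obstacle to be the marginalization step: one must be careful that the hypothesis really is meant to hold \emph{simultaneously} for all full assignments (so that it can be summed), and that $\empProb{\cdot}$ and $\estProb{\cdot}$ are genuine distributions over $\domain(\mathcal{X})$ whose marginals are obtained by summation — for $\empProb{\cdot}$ this is immediate, and for $\estProb{\cdot}$ it follows because $\estProb{\cdot}$ as produced by Algorithm~\ref{algo:generic-query} is a product of local CPD estimates, hence a valid joint distribution that marginalizes correctly. Everything else is the elementary observation that $a/b$ lies in $[e^{-\eps}, e^{\eps}]$ whenever $a,b$ each lie in $[e^{-\eps/2}c, e^{\eps/2}c]$ for a common $c$, plus bookkeeping of which variables are summed out.
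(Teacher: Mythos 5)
Your proposal is correct and follows essentially the same route as the paper: write the conditional via Bayes' rule as a ratio of the joint probabilities of $\{\bm{X},\bm{Y}\}$ and of $\bm{X}$, apply the $e^{\pm\eps/2}$ bound to each, and multiply the two bounds. The only difference is that you explicitly justify transferring the hypothesis from full assignments to marginal events by termwise summation, a step the paper's proof asserts without elaboration; this is a minor but welcome addition.
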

%----------------------
\begin{proof}
For variable set $\bm{X} \subseteq \mathcal{X}$, we have
\[
e^{-\eps/2} \leq \frac{\estProb{\bm{X}}}{\empProb{\bm{X}}} \leq e^{\eps/2}
\]
Similarly, for variable set $\{ \bm{X}, \bm{Y} \} \subseteq \mathcal{X}$, we have
\[
e^{-\eps/2} \leq \frac{\estProb{\bm{X}, \bm{Y}}}{\empProb{\bm{X}, \bm{Y}}} \leq e^{\eps/2}
\]
Combining above two inequations,
\[
e^{-\eps} \leq \frac{\estProb{\bm{X}, \bm{Y}}}{\estProb{\bm{X}}} \cdot 
\frac{\empProb{\bm{X}}}{\empProb{\bm{X}, \bm{Y}}} \leq e^{\eps}
\]
Applying  Bayes rule, we complete our proof.
\end{proof}
%----------------------

%----------------------
\begin{lemma}
\label{lem:classification}
Given evidence $e$ and set of variables $Y$,
if $e^{-\eps / 4} \leq
\frac{\estProb{\mathcal{X}}}{\empProb{\mathcal{X}}} \leq e^{\eps / 4}$, then
we can find assignment $\bm{b}$ that solves the Bayesian classification
problem with $\eps$ error.
%such that $\empProb{\bm{Y}=\bm{b} \mid e} \ge (1-\eps) \cdot \max_i \empProb{Y=i \mid e}$
\end{lemma}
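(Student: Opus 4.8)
The plan is to relate the relative error on the conditional probability $\estProb{Y \mid e}/\empProb{Y \mid e}$ to the relative error on the full joint distribution, and then show that a multiplicative $(1 \pm \eps)$-type approximation of every conditional probability suffices to identify a near-maximizer.

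**First I would** apply Lemma~\ref{lem:subset-vars}. The hypothesis is that $e^{-\eps/4} \leq \estProb{\mathcal{X}}/\empProb{\mathcal{X}} \leq e^{\eps/4}$ for all assignments; Lemma~\ref{lem:subset-vars} (applied with $\eps/4$ in place of $\eps/2$, so its conclusion holds with $\eps/2$) then gives $e^{-\eps/2} \leq \estProb{Y \mid e}/\empProb{Y \mid e} \leq e^{\eps/2}$ for every assignment $y$ to $Y$, treating the evidence variables as $\bm{X}$ and $Y$ as $\bm{Y}$. (One should note that $e$ is an assignment to a subset of $\mathcal{X}$ disjoint from $Y$, so the lemma applies.)

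**Next**, let $y^* = \argmax_y \empProb{Y = y \mid e}$ be the true maximizer and let $\bm{b} = \argmax_y \estProb{Y = y \mid e}$ be the maximizer under the estimated distribution — this is what the algorithm would return. The chain of inequalities is then
\[
\empProb{Y = \bm{b} \mid e} \geq e^{-\eps/2}\,\estProb{Y = \bm{b} \mid e} \geq e^{-\eps/2}\,\estProb{Y = y^* \mid e} \geq e^{-\eps}\,\empProb{Y = y^* \mid e},
\]
where the first and third steps use the two-sided bound from Lemma~\ref{lem:subset-vars} and the middle step uses that $\bm{b}$ maximizes $\estProb{\cdot \mid e}$. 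Finally, since $e^{-\eps} \geq 1 - \eps$ for $\eps > 0$, we get $\empProb{Y = \bm{b} \mid e} \geq (1-\eps) \max_y \empProb{Y = y \mid e}$, which is exactly Definition~\ref{defn:class-obj}.

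**The main obstacle** is essentially bookkeeping rather than a deep difficulty: one must be careful that Lemma~\ref{lem:subset-vars} as stated requires its hypothesis with parameter $\eps/2$ and yields conclusion with parameter $\eps$, so to land at conclusion-parameter $\eps/2$ we invoke it with hypothesis-parameter $\eps/4$ — which is precisely the hypothesis of this lemma. The only other point needing a word of care is the direction of the conversion $e^{-\eps} \geq 1-\eps$ (rather than the more familiar $e^{\eps} \leq 1 + \text{something}$), but this holds for all real $\eps$ and in particular for $0 < \eps < 1$. No concentration bounds or algorithmic facts are needed here; the lemma is a purely deterministic consequence of the approximation guarantee on $\estProb{\cdot}$.
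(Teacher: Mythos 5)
Your proof is correct and follows essentially the same route as the paper: invoke Lemma~\ref{lem:subset-vars} (with parameter $\eps/4$ in the hypothesis, giving an $e^{\pm\eps/2}$ bound on the conditionals) and chain $\empProb{Y=\bm{b}\mid e} \geq e^{-\eps/2}\estProb{Y=\bm{b}\mid e} \geq e^{-\eps/2}\estProb{Y=\bm{b}^*\mid e} \geq e^{-\eps}\empProb{Y=\bm{b}^*\mid e}$, exactly as in the paper's argument. Your explicit final step $e^{-\eps}\geq 1-\eps$ to match Definition~\ref{defn:class-obj} is a small detail the paper leaves implicit, but it is the same proof.
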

%----------------------
\begin{proof}
Let $\bm{b} = \argmax_y \estProb{Y=y \mid e}$ and $\bm{b}^* = \argmax_y \empProb{Y=y \mid e}$.
From Lemma~\ref{lem:subset-vars}, we have
\[
e^{\eps/2} \cdot \empProb{\bm{Y}=\bm{b} \mid e} \geq \estProb{\bm{Y}=\bm{b} \mid e}
\]
As $\bm{b}$ is the most likely assignment for $\estProb{Y=y \mid e}$,
\[
\estProb{\bm{Y}=\bm{b} \mid e} \geq \estProb{\bm{Y}=\bm{b}^* \mid e}
\]
From Lemma~\ref{lem:subset-vars}, for assignment $\bm{b}^*$, we have
\[
\estProb{\bm{Y}=\bm{b}^* \mid e} \geq e^{-\eps/2} \cdot \empProb{\bm{Y}=\bm{b}^* \mid e}
\]
So we can derive that
\[
\empProb{\bm{Y}=\bm{b} \mid e} \geq e^{-\eps} \cdot \empProb{\bm{Y}=\bm{b}^* \mid e}
\]
\end{proof}

%---------------
\begin{theorem}
There is an algorithm for Bayesian
classification (Definition~\ref{defn:class-obj}), with communication
$O\left( \Gamma \cdot \frac{\sqrt{k}}{\eps} \cdot \log \frac{1}{\delta} \cdot \log m \right)$ messages 
over $m$ distributed observations, where $\Gamma = \left( \sum_{i=1}^n (J_i K_i)^{2/3}
\right)^{3/2} + \left( \sum_{i=1}^n K_i^{2/3} \right)^{3/2}$.
\end{theorem}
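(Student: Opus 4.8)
The plan is to run the \nonuniform algorithm essentially as a black box, but with its error parameter tightened by a constant factor so that the resulting approximate joint distribution is accurate enough to feed into Lemma~\ref{lem:classification}. Concretely, I would invoke Theorem~\ref{thm:nonuniform} with approximation parameter $\eps' = \eps/4$ and failure parameter $\delta$. This maintains, at the coordinator, an estimated joint distribution $\estProb{\cdot}$ (computed from the approximate counters $\dc_i(\cdot,\cdot)$ and $\dc_i(\cdot)$ via Algorithm~\ref{algo:generic-query}) such that, with probability at least $1-\delta$, for every assignment $\vecx$ to $\mathcal{X}$ we have $e^{-\eps/4} \le \estProb{\vecx}/\empProb{\vecx} \le e^{\eps/4}$; that is, $\estProb{\cdot}$ is an $(\eps/4,\delta)$-approximation to the MLE.

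The second step is to answer a classification query. Given evidence $e$ and a target variable set $Y$, the coordinator computes $\bm{b} = \argmax_y \estProb{Y = y \mid e}$, where each conditional $\estProb{Y=y\mid e} = \estProb{Y=y, e}/\estProb{e}$ is obtained by marginalizing $\estProb{\cdot}$ over the remaining variables (or, more efficiently, by standard inference on the network using the approximate CPDs). Crucially, this requires \emph{no additional communication}: it is a purely local computation at the coordinator from state it already maintains. Conditioned on the $(\eps/4)$-approximation event, Lemma~\ref{lem:classification} applies verbatim with $\mathcal{X}$ being the full variable set, and certifies that $\empProb{Y=\bm{b}\mid e} \ge (1-\eps)\cdot \max_y \empProb{Y=y\mid e}$, i.e.\ $\bm{b}$ solves Bayesian classification with $\eps$ error in the sense of Definition~\ref{defn:class-obj}. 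Here I would note explicitly that the hypothesis of Lemma~\ref{lem:classification} is a multiplicative bound on the \emph{full joint}, and that Lemma~\ref{lem:subset-vars} already shows this is inherited by the relevant marginals and conditionals (a marginal is a sum of terms each within the factor $e^{\pm\eps/4}$, hence itself within that factor), so no further loss beyond the budgeted factor of $4$ occurs.

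The communication bound is immediate by rescaling. Theorem~\ref{thm:nonuniform} with parameter $\eps' = \eps/4$ costs $O\!\left(\Gamma \cdot \frac{\sqrt{k}}{\eps'} \cdot \log\frac{1}{\delta}\cdot\log m\right) = O\!\left(\Gamma \cdot \frac{\sqrt{k}}{\eps}\cdot\log\frac{1}{\delta}\cdot\log m\right)$ messages, with $\Gamma = \left(\sum_{i=1}^n (J_iK_i)^{2/3}\right)^{3/2} + \left(\sum_{i=1}^n K_i^{2/3}\right)^{3/2}$, since replacing $\eps$ by $\eps/4$ only changes the hidden constant. This matches the claimed bound.

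I do not expect a genuine obstacle: the content is entirely in assembling Theorem~\ref{thm:nonuniform} with Lemmas~\ref{lem:subset-vars} and~\ref{lem:classification}. The one point to be careful about is the ``for all assignments simultaneously'' quantifier --- the classification query ranges over all candidate assignments $y$ to $Y$, so the approximation guarantee must hold uniformly, which it does because an $(\eps/4)$-approximation to the MLE is, by Definition~\ref{def:approxmle}, a bound that holds for \emph{every} $\vecx$ at once under a single success event of probability $1-\delta$; the union over the (finitely many) assignments $y$ costs nothing extra.
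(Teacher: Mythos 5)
Your proposal is correct and follows essentially the same route as the paper's proof: run \nonuniform with error parameter $\eps/4$, invoke Theorem~\ref{thm:nonuniform} for the $(\eps/4,\delta)$-approximation to the MLE, and then apply Lemma~\ref{lem:classification} (via Lemma~\ref{lem:subset-vars}) to conclude, with the communication bound unchanged up to constants. Your added remarks on marginals inheriting the multiplicative bound and on the uniformity of the guarantee over all candidate assignments are details the paper leaves implicit, but they do not change the argument.
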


\begin{proof}
We use \nonuniform to maintain distributed counters with error
factor $\frac{\eps}{4}$. From Theorem~\ref{thm:nonuniform}, we have $e^{-\eps / 4} \leq
\frac{\estProb{\mathcal{X}}}{\empProb{\mathcal{X}}} \leq e^{\eps / 4}$
where $\mathcal{X}$ denote all the variables. Then from
Lemma~\ref{lem:classification}, we achieve our goal of Bayesian
classification with $\eps$ error.
\end{proof}

%---------------------
\section{Experimental Evaluation}
\label{sec:experiment}
%---------------------
\newcommand{\alarm}{\texttt{ALARM}\xspace}
\newcommand{\newalarm}{\texttt{NEW-ALARM}\xspace}
\newcommand{\hepar}{\texttt{HEPAR II}\xspace}
\newcommand{\link}{\texttt{LINK}\xspace}
\newcommand{\munin}{\texttt{MUNIN}\xspace}
%-------------------------------------------------
%In this section,
%we present an experimental evaluation of our algorithms. We
%we use real-world Bayesian networks to evaluate
%the algorithms along two dimensions: accuracy of estimating joint
%probabilities, and communication cost.

%---------------------------------------------------------
\subsection{Setup and Implementation Details}
%---------------------------------------------------------
% Describe the machine used, compiler, etc.
Algorithms were implemented in Java with JDK version 1.8,
and evaluated on a 64-bit Ubuntu Linux machine with
Intel Core i5-4460 3.2GHz processor and 8GB RAM. 

\tpara{Datasets:} We use real-world Bayesian networks from
the repository of Bayesian networks at~\cite{bnlearn}. In our
experiments, algorithms assume the network topology, but learn model
parameters from training data. Based on the number of nodes in the
graph, networks in the dataset are classified into five categories:
small networks ($<20$ nodes), medium networks ($20-60$ nodes), large
networks ($60-100$ nodes), very large networks ($100-1000$ nodes) and
massive networks ($>1000$ nodes). We select one medium network
\alarm~\cite{alarm}, one large network \hepar~\cite{hepar}, one very
large network \link~\cite{link} and one massive network
\munin~\cite{munin}. Table~\ref{table:dataset} provides an overview of
the networks that we use.

%the graph topology and parameters can be downloaded from the \bayesnet Repositories~\cite{bnlearn}.
%------------
\begin{table}
\centering
\caption{Bayesian Networks used in the experiments.}
{\footnotesize
\centering
\begin{tabular}
{| l | r | r | r |}
\toprule
Dataset & Number    & Number    & Number of  \\ 
              & of Nodes  & of Edges   & Parameters \\
\midrule
\alarm~\cite{alarm} & $37$ & $46$ & $509$ \\
\hepar~\cite{hepar} & $70$ & $123$ & $1453$ \\
\link~\cite{link} & $724$ & $1125$ & $14211$ \\
\munin~\cite{munin} & $1041$ & $1397$ & $80592$ \\
\bottomrule
\end{tabular}%\smallskip
\label{table:dataset}}
\end{table}
%----------

% exp-1: err to truth vs. stream (boxplot)
\begin{figure*}[t]
  \centering
  \subfigure[Exact]{
  \includegraphics[width=0.25\textwidth]{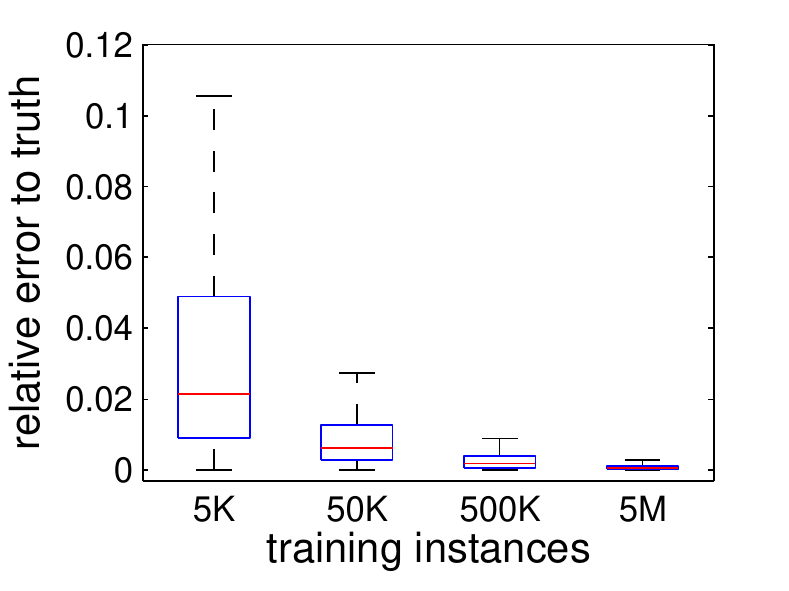}
  }%
  \subfigure[Baseline Approx.]{
  \includegraphics[width=0.25\textwidth]{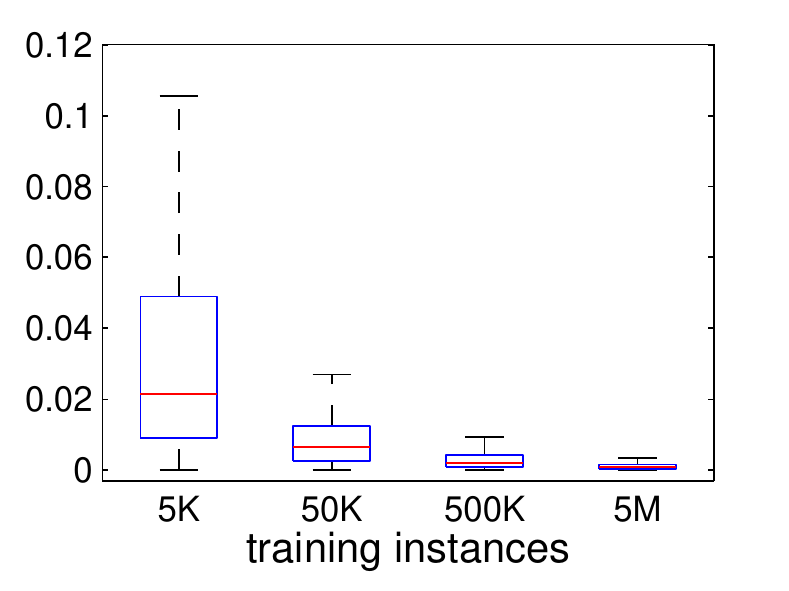}
  }%
  \subfigure[Uniform Approx.]{
  \includegraphics[width=0.25\textwidth]{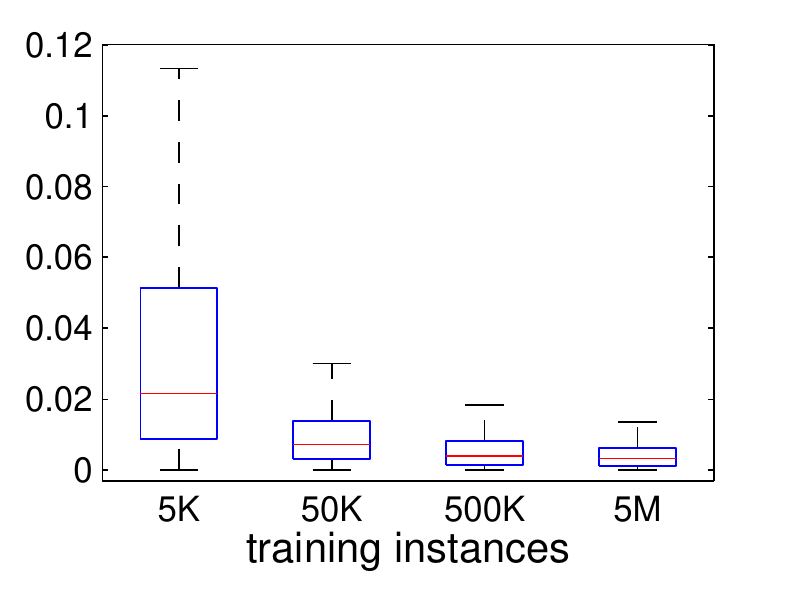}
  }%
  \subfigure[Non-uniform Approx.]{
  \includegraphics[width=0.25\textwidth]{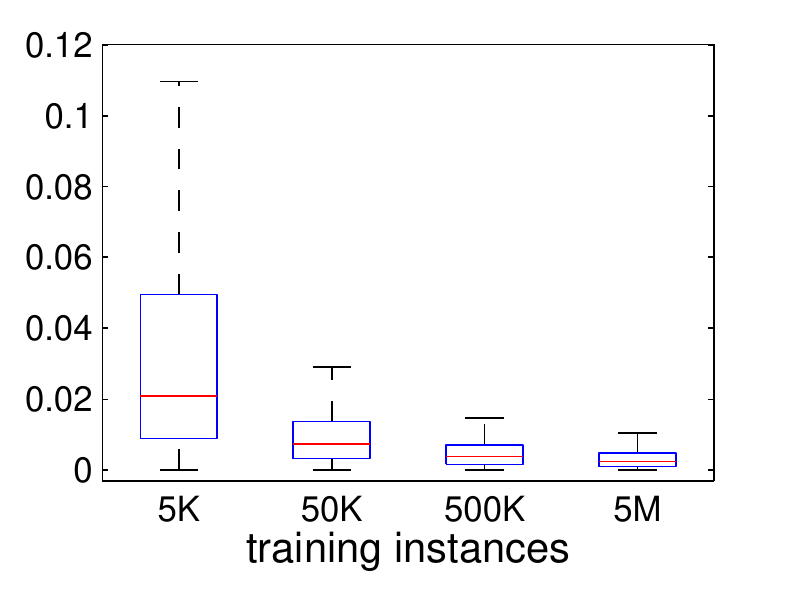}
  }
\caption{Testing error (relative to the ground truth) vs. number of training points. The dataset is \hepar.}
\label{fig:error-vs-stream-hepar}
\end{figure*}
%----------------
\begin{figure*}[t]
  \centering
  \subfigure[Exact]{
  \centering \includegraphics[width=0.25\textwidth]{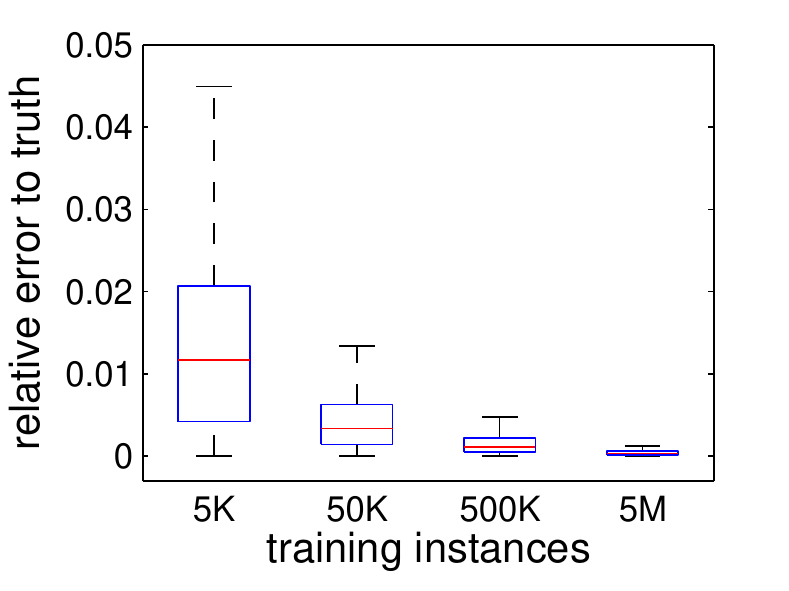}
  }%
  \subfigure[Baseline Approx.]{
  \centering \includegraphics[width=0.25\textwidth]{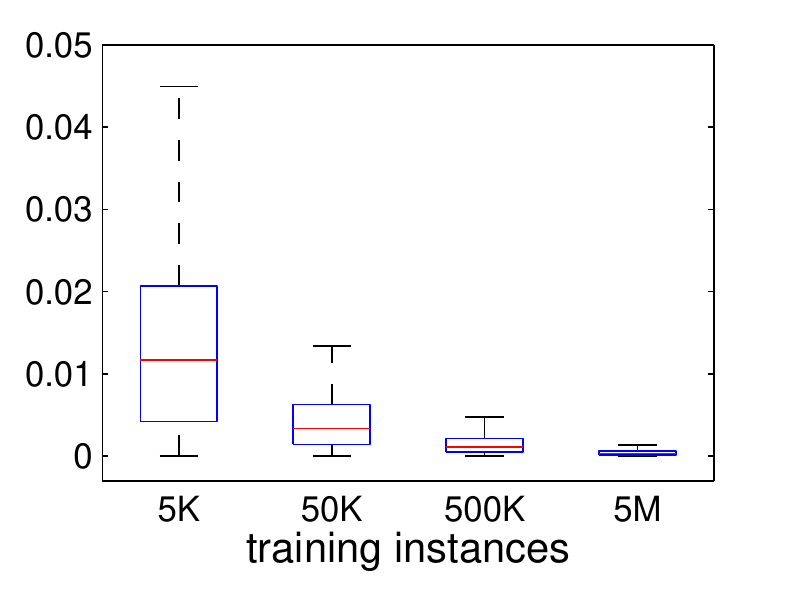}
  }%
  \subfigure[Uniform Approx.]{
  \centering \includegraphics[width=0.25\textwidth]{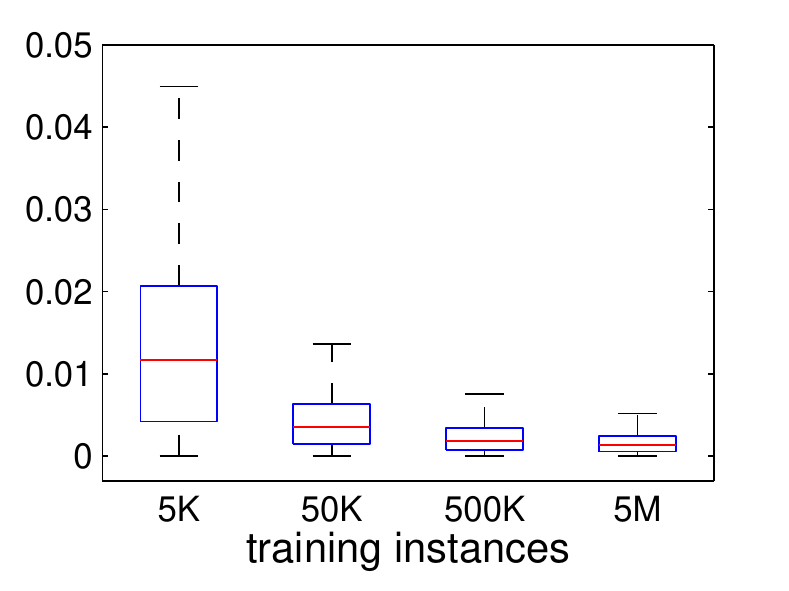}
  }%
  \subfigure[Non-uniform Approx.]{
  \centering \includegraphics[width=0.25\textwidth]{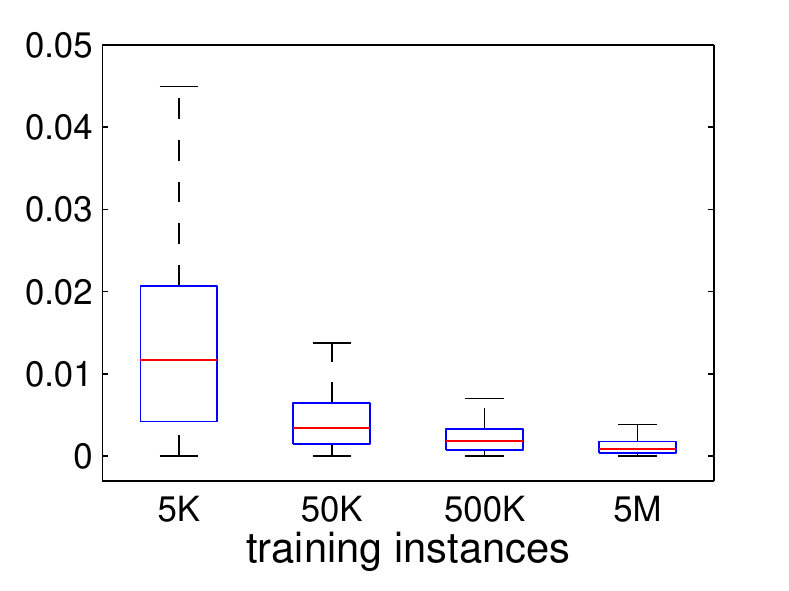}
  }
\caption{Testing error (relative to the ground truth) vs. number of training points. The dataset is \link.}
\label{fig:error-vs-stream-link}
\end{figure*}
%----------------

%---------------------------------------
% exp-2: mean err to truth vs. stream
\begin{figure*}[t]
  \subfigure[\alarm]{
  \includegraphics[width=0.25\textwidth]{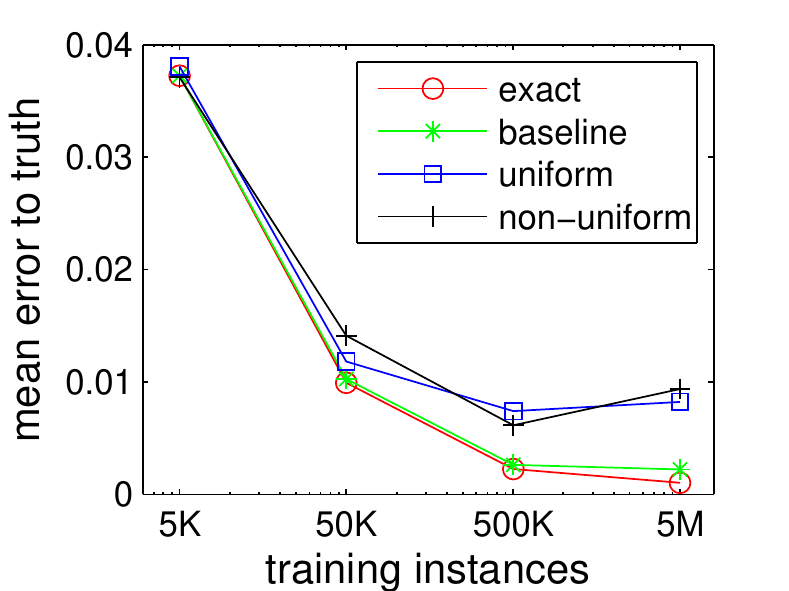}
  }%
  \subfigure[\hepar]{
   \includegraphics[width=0.25\textwidth]{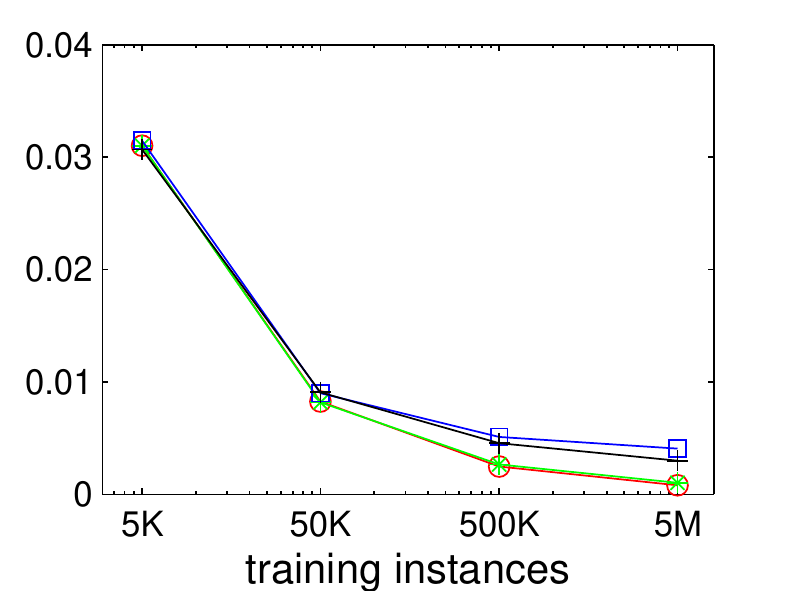}
  }%
  \subfigure[\link]{
   \includegraphics[width=0.25\textwidth]{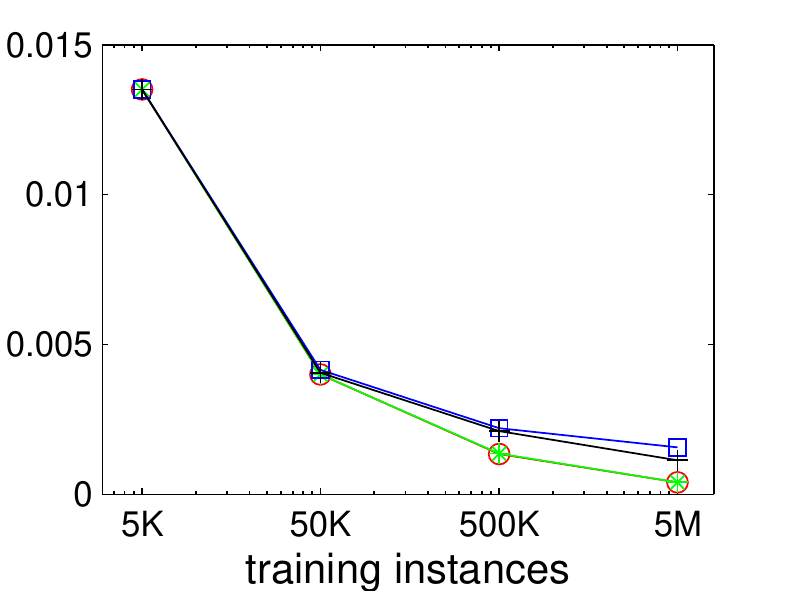}
  }%
  \subfigure[\munin]{
   \includegraphics[width=0.25\textwidth]{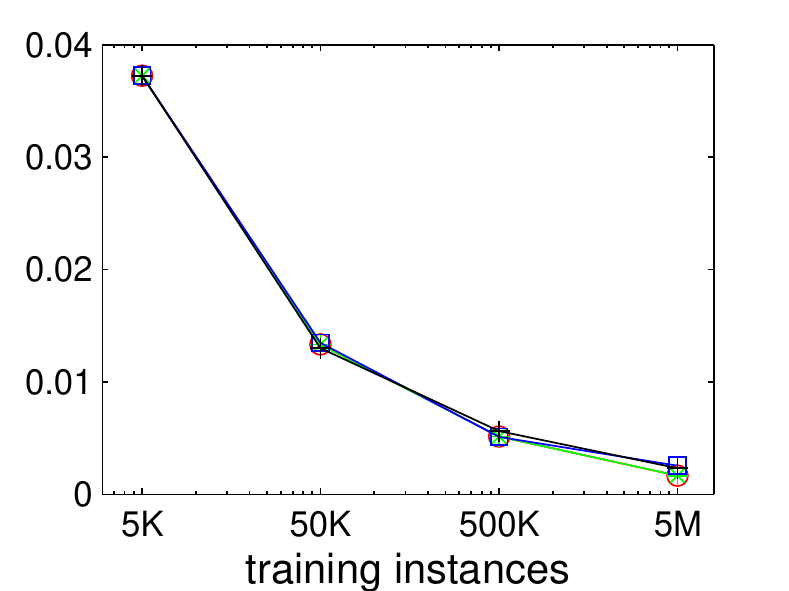}
  }
\caption{Mean testing error (relative to the ground truth) vs. number of training points.}
\label{fig:avgerr-vs-stream}
\end{figure*}
%----------------

\tpara{Training Data:} For each network, we generate training data
based on the ground truth for the parameters. To do this, we first
generate a topological ordering of all vertices in the Bayesian
network (which is guaranteed to be acyclic), and then assign values to
nodes (random variables) in this order, based on the known conditional probability distributions. 

\tpara{Testing Data:} Our testing data consist of a number of queries, each one for the probability of a specific event. We measure the accuracy according to the ability of the network to accurately estimate the probabilities of different events. To do this, we generate $1000$ events on the joint probability space represented by the Bayesian network, and estimate the probability of each event using the parameters that have been learnt by the distributed algorithm. Each event is chosen so that its ground truth probability is at least $0.01$ -- this is to rule out events that are highly unlikely, for which not enough data may be available to estimate the probabilities accurately.

\tpara{Distributed Streams:} We built a simulator for distributed
stream monitoring, which simulates a system of $k$ sites and a single
coordinator.
All events (training data) arrive at sites, and queries
are posed at the coordinator.
Each data point is sent to a site chosen uniformly at random.
%We insert queries into the system after it
%observes $\{5K, 50K, 500K, 5M \}$ training instances.
%, where $K$ stands for one thousand and $M$ stands for one million. 

% List the algorithms implemented. Describe any details in implementation.
\tpara{Algorithms:} We implemented four algorithms: \exact, 
\baseline, \uniform, and \nonuniform. \exact is the strawman algorithm
that uses exact counters so that each site informs the coordinator
whenever it receives a new observation. 
This algorithm sends a message for each
counter, so that the length of each message exchanged is approximately
the same. This makes the measurement of communication cost across
different algorithms equivalent to measuring the number of
messages. The other three algorithms, \baseline, \uniform, and
\nonuniform, are as described in Sections~\ref{sec:baseline},
\ref{sec:uniform}, and~\ref{sec:nonuniform} respectively. For each of
these algorithms, a message contains an update to the value of a
single counter.

%Metrics. How are you evaluating the accuracy and cost of the algorithm.
\tpara{Metrics: }
We compute the probability for each testing event
using the approximate model maintained by the distributed
algorithm.
We compare this with the ground truth probability for the
testing event, derived from the ground truth model. For \baseline,
\uniform, and \nonuniform, we compare their results with those
obtained by \exact, and report the median value from five independent
runs. 
%This comparison is interesting, since the model
%derived by \exact is the MLE estimate, given the data seen so far. In
%this sense, this is the ``best" one could do using the data
%observed. \exact only pays the ``statistical error" due to the size of
%the training data, and pays zero ``computational error''.
Unless otherwise specified, we set $\epsilon = 0.1$ and the number of
sites to $k=30$.
%, and report the median value from five independent
%runs of each algorithm. 

%-------------------------------------------
% exp-3: err to exact vs. stream (boxplot) 
\begin{figure*}[t]
  \centering
  \subfigure[Baseline Approx.]{
  \centering \includegraphics[width=0.25\textwidth]{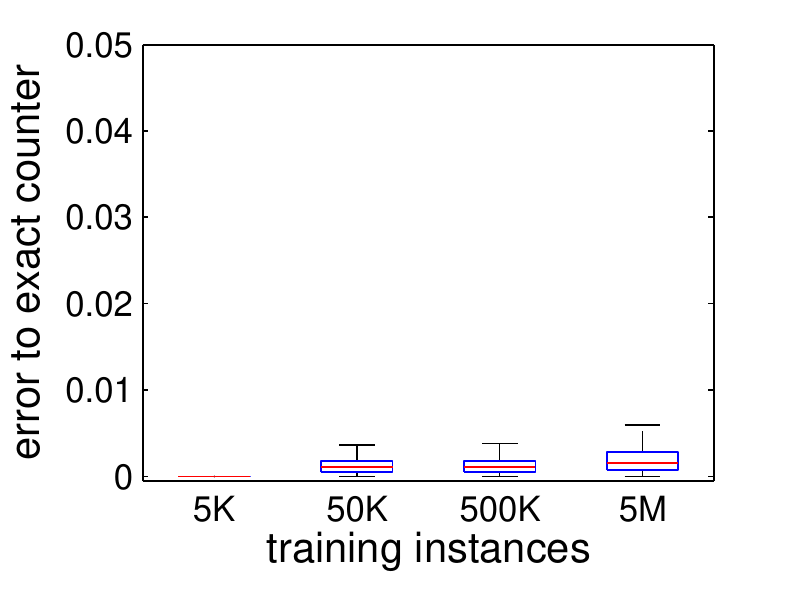}
  }%
  \subfigure[Uniform Approx.]{
  \centering \includegraphics[width=0.25\textwidth]{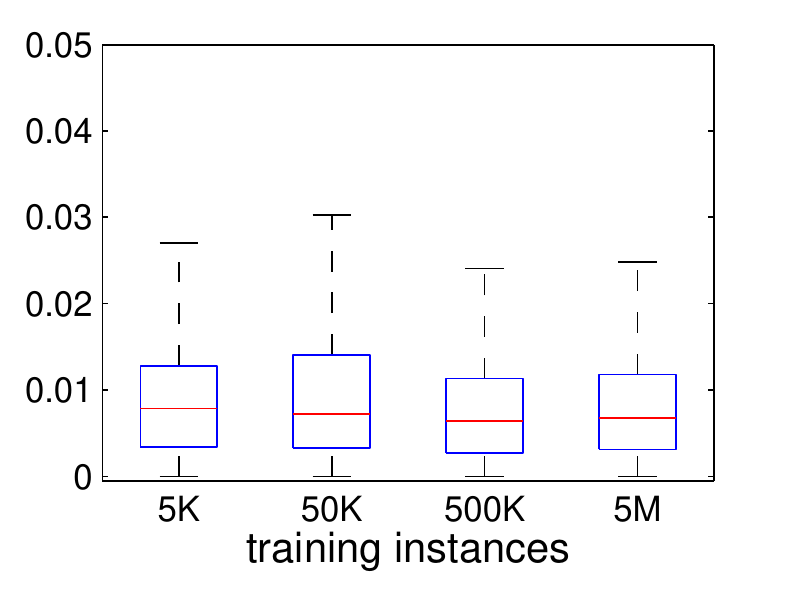}
  }%
  \subfigure[Non-uniform Approx.]{
  \centering \includegraphics[width=0.25\textwidth]{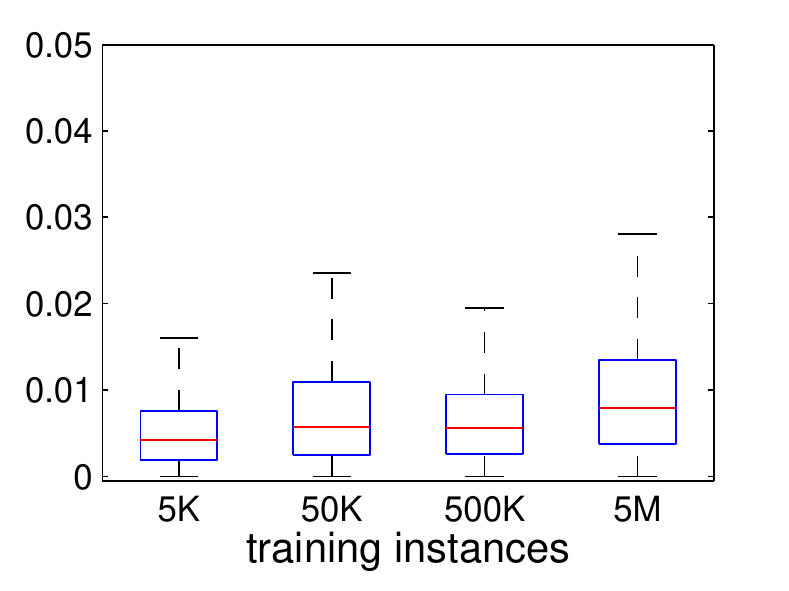}
  }
\caption{Testing error (relative to \exact) vs. number of training points. The dataset is \alarm.}
\label{fig:error-to-exact-alarm}
\end{figure*}
%----------------
%----------------
\begin{figure*}[t]
  \centering
  \subfigure[Baseline Approx.]{
  \centering \includegraphics[width=0.25\textwidth]{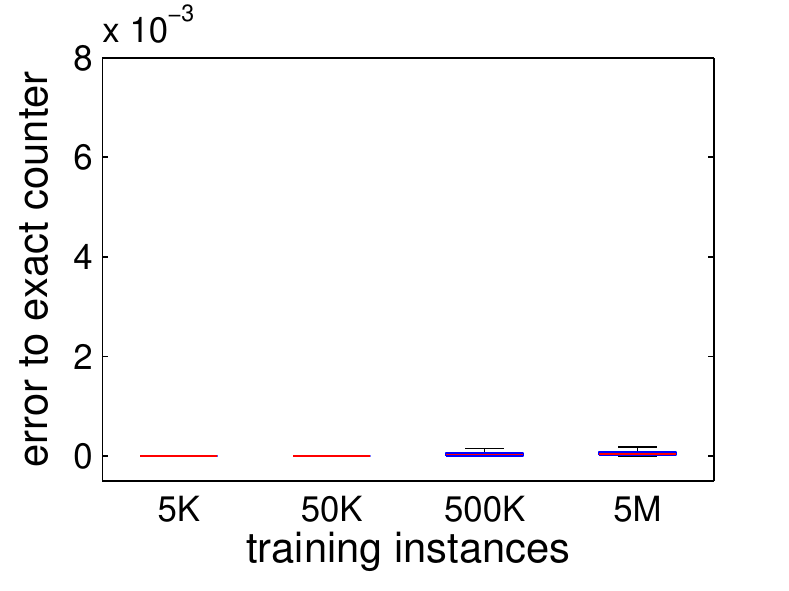}
  }%
  \subfigure[Uniform Approx.]{
  \centering \includegraphics[width=0.25\textwidth]{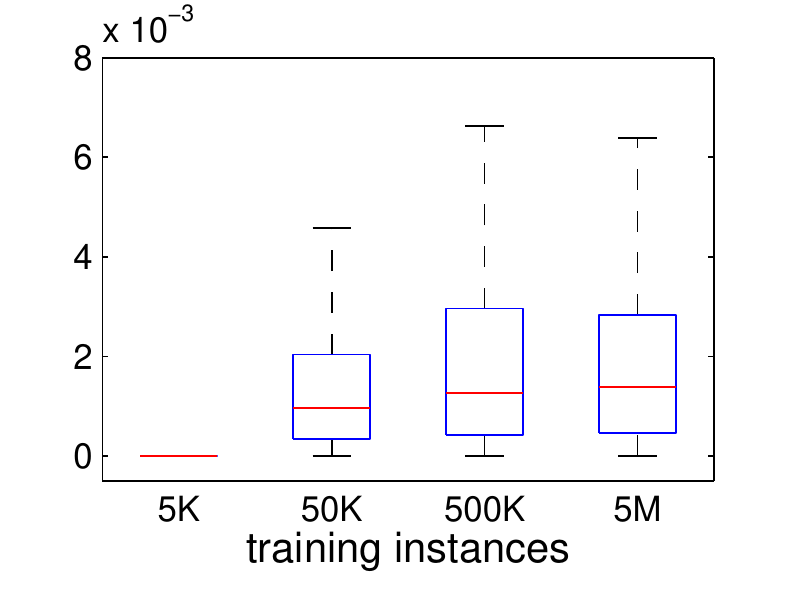}
  }%
  \subfigure[Non-uniform Approx.]{
  \centering \includegraphics[width=0.25\textwidth]{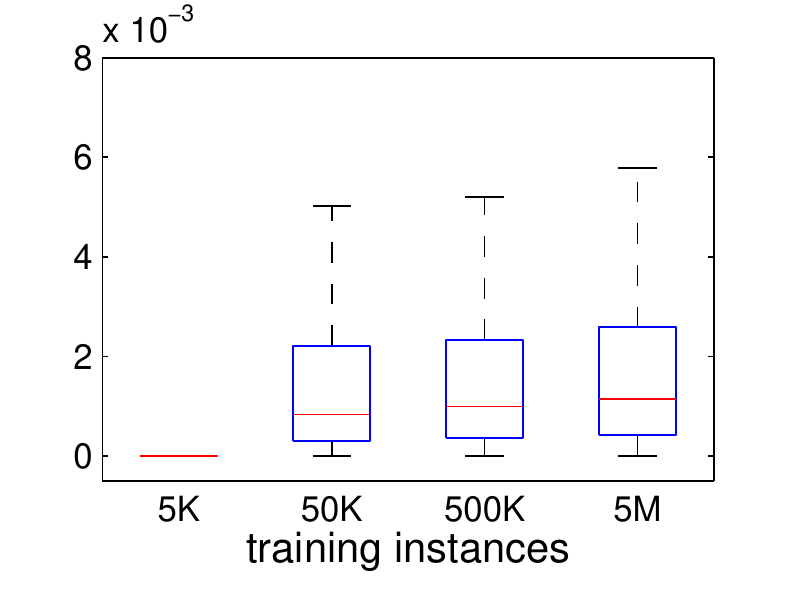}
  }%
\caption{Testing error (relative to \exact) vs. number of training points. The dataset is \munin.}
\label{fig:error-to-exact-munin}
\end{figure*}
%----------------
%Discussion of exp3

%---------------------------------------------------
% exp-4: mean error to exact counter vs. stream
%---------------------------------------------------
\begin{figure*}[t]
  \subfigure[\alarm]{
   \includegraphics[width=0.25\textwidth]{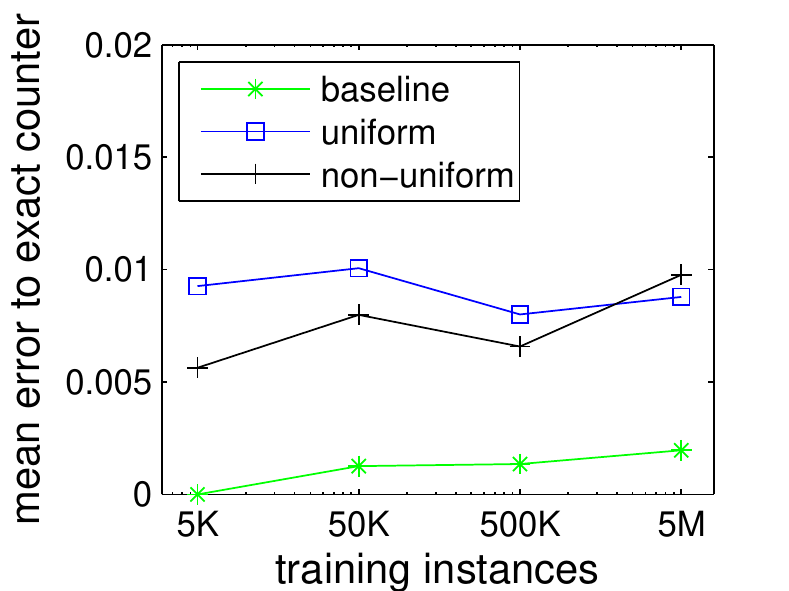}
  }%
  \subfigure[\hepar]{
   \includegraphics[width=0.25\textwidth]{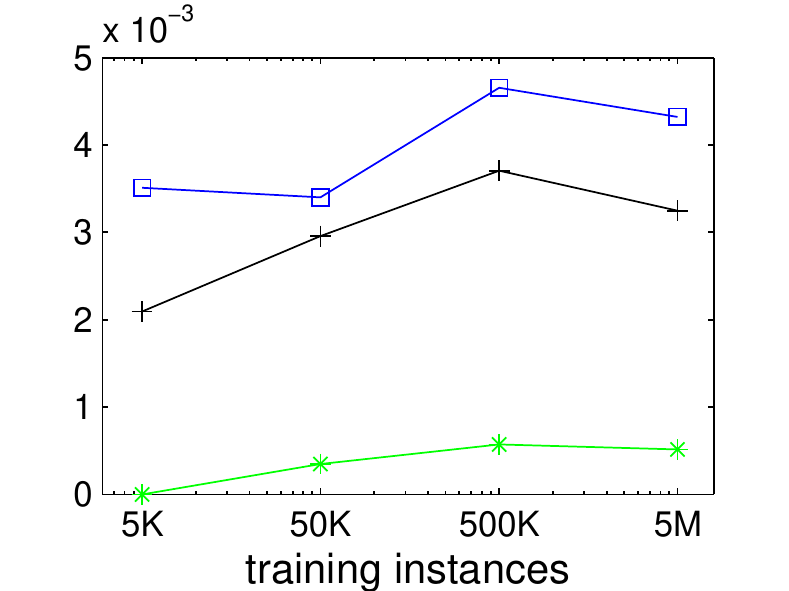}
  }%
  \subfigure[\link]{
   \includegraphics[width=0.25\textwidth]{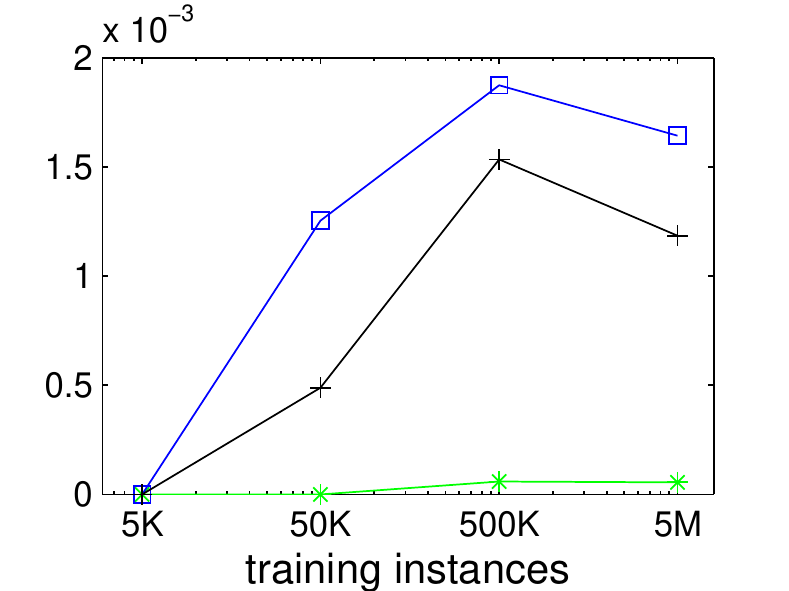}
  }%
  \subfigure[\munin]{
   \includegraphics[width=0.25\textwidth]{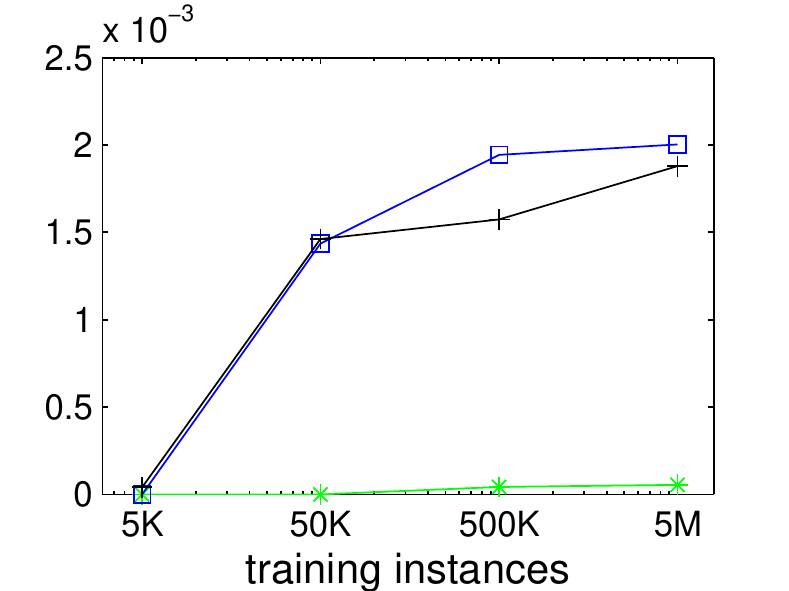}
  }
\caption{Mean testing error (relative to \exact) vs. number of training points.}
\label{fig:avgerr-to-exact}
\end{figure*}
%----------------
%Discussion of exp4 ...

%Finally, for each statistic, we report the median
%value from five independent runs of each algorithm to improve
%robustness.

% the conditional probability table which is maintained during the training process. Note that due to the conditional dependency of the variables in the \bayesnet, to compute the probability given a testing event, we might require the conditional probability of variables more than the variables included in the event, and it is possible that the conditional probability of all the variables in the \bayesnet are required to compute the event probability. 

%To measure the accuracy of the \bayesnet parameters, we generate $1000$ significant events as the testing dataset. We require each testing event with a ground truth probability larger than $0.01$ which is considered significant in practice. Unlike each training instance contains all the variables values, each testing event contains $2$ or $3$ uniformly selected variables among the graph, and for each selected variable, we randomly pick one of possible values as the assignment for that variable. The reason for not using all the variables is the joint probability will become trivial and in turn difficult to measure. 

%-------------------------------------------------------
\subsection{Results and Discussion}
%-------------------------------------------------------

% Discussion of exp1 & exp2

\tpara{The error relative to the ground truth} is the average error
of the probability estimate returned by the model learnt by the
algorithm, relative to the ground truth probability, computed using
our knowledge of the underlying
model. Figures~\ref{fig:error-vs-stream-hepar}
and~\ref{fig:error-vs-stream-link} respectively show this error as a
function of the number of training instances, for the \hepar and \link
datasets respectively. As expected, for each algorithm, the median
error decreases with an increase in the number of training instances,
as can be seen by the middle quantile in the boxplot. The
interquartile ranges also shrink with more training instances, showing
that the variance of the error is also decreasing.

%With increasing number of instances received, all algorithms shows the trend that the boxplots are with smaller interquartile ranges, which indicates the variance of the relative errors becomes smaller. The middle quantile in the boxplot which is the median of errors also drops down with more points received, this means the testing event probability is approaching the ground truth probability that in turn indicates the parameter estimation is more accurate. 

%If we compare the different algorithms when $5$ million training instances have been received, we observe that \exact and \baseline achieve a tighter interquartile range than the \uniform and \nonuniform algorithms. 

Figure~\ref{fig:avgerr-vs-stream}
shows the relative performance of the algorithms.
\exact has the best performance of all
algorithms, which is to be expected, since it computes the model
parameters based on exact counters. \baseline has the next best
performance, closely followed by \uniform and \nonuniform, which have
quite similar performance.
Note that the slightly better
performance of \exact and \baseline comes at a vastly increased
message cost, as we will see next.
Finally, all these algorithms achieve good accuracy results.
For instance, after $5M$ examples, the error in estimated event
probabilities is always less than one percent, for any of these
algorithms.

\tpara{The error relative to the MLE} is the average error of
the probability estimate returned by the model learnt by the
algorithm, relative to the model learnt using  exact counters.
The distribution of this error is shown for the different
algorithms that use approximate counters, in
Figures~\ref{fig:error-to-exact-alarm}
and~\ref{fig:error-to-exact-munin} for the \alarm and \munin datasets
respectively. The mean error for different algorithms is plotted in
Figure~\ref{fig:avgerr-to-exact}.  We can consider the measured error
 as having two sources:
(1)~Statistical error, which is the error in learning that is inherent
due to the number of training examples seen so far -- this is captured
by the error of the model learnt by the exact counter, relative to the
ground truth, and
(2)~Approximation error, which is the difference
between the model that we are tracking and the model learnt by using
exact counters -- this error arises due to our desire for efficiency
of communication (i.e., trying to send fewer messages for
counter maintenance).
Our algorithms aim to control the
approximation error, and this error is captured by the
error relative to exact counter.
We note from the plots that the error
relative to exact counter remains approximately the same with
increasing number of training points, for all three algorithms,
\baseline, \uniform, and \nonuniform.
This is consistent with
theoretical predictions since our algorithms only guarantee that these
errors are less than a threshold ($\epsilon$), which does not decrease
with increasing number of points.
The error of \nonuniform is
marginally better than that of \uniform.
We emphasise that
error relative to the ground truth is a more important metric than the
error relative to MLE.

%-------------------------------
% exp-5: comm msg vs stream
\begin{figure*}[t]
  \centering
  \subfigure[\alarm]{
  \centering \includegraphics[width=0.25\textwidth]{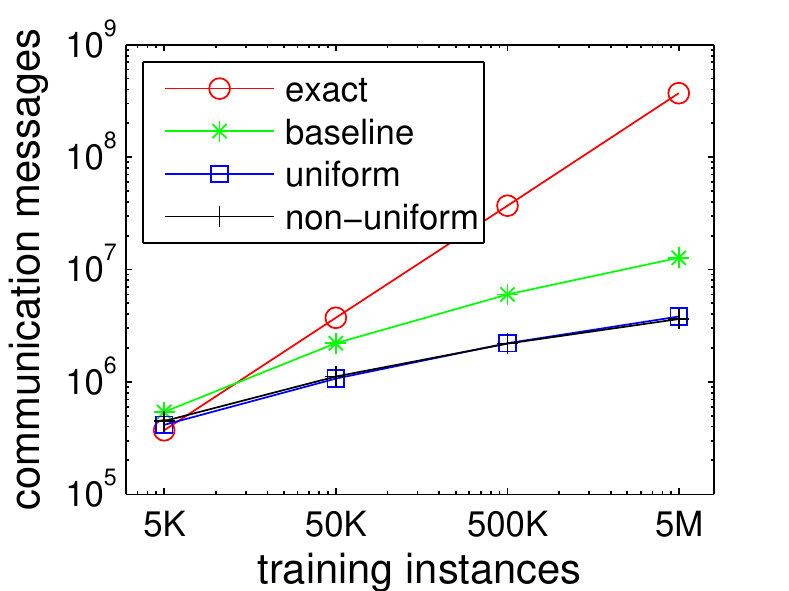}
  }%
  \subfigure[\hepar]{
  \centering \includegraphics[width=0.25\textwidth]{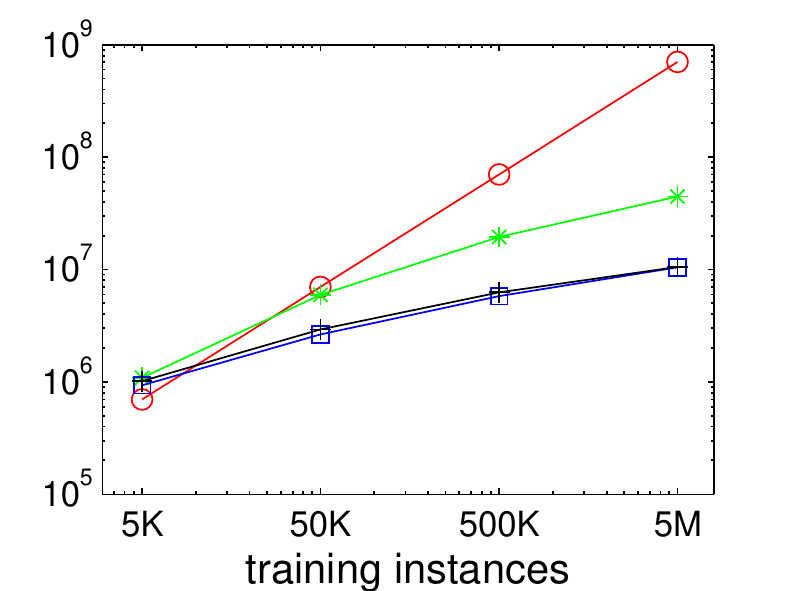}
  }%
  \subfigure[\link]{
  \centering \includegraphics[width=0.25\textwidth]{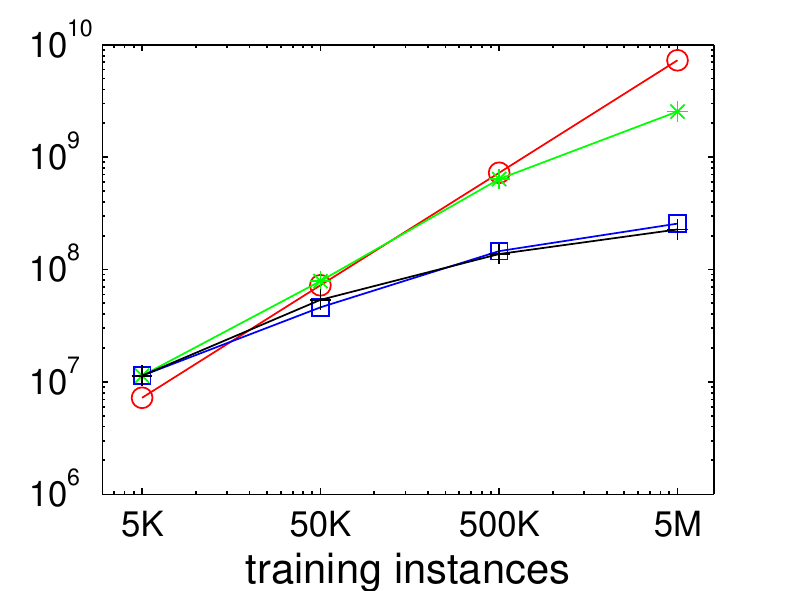}
  }%
  \subfigure[\munin]{
  \centering \includegraphics[width=0.25\textwidth]{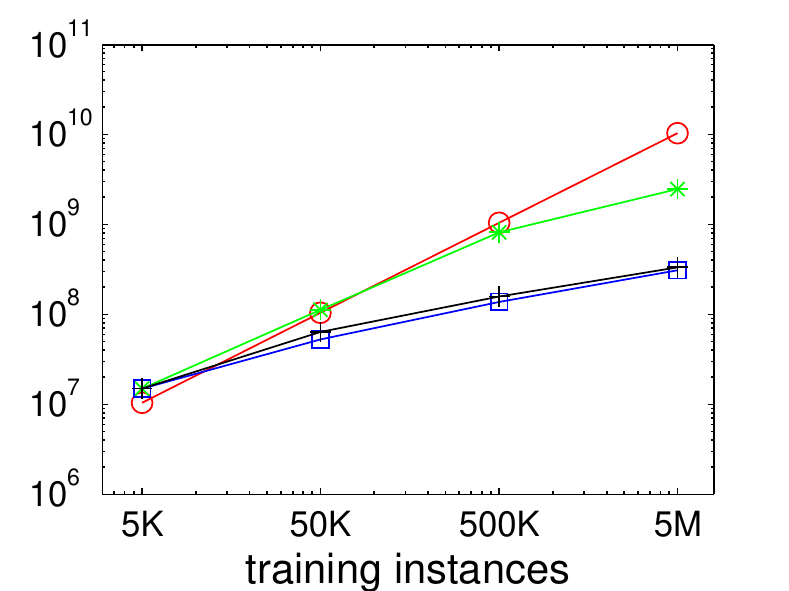}
  }
\caption{Communication cost vs. number of training points.}
\label{fig:comm-vs-stream}
\end{figure*}
%-------------------------------
%Discussion of exp5 ...

\tpara{Communication cost versus the number of training points} for
different algorithms is shown in Figure~\ref{fig:comm-vs-stream}. Note
that the y-axis is in logarithmic scale. From this graph, we can
observe that \nonuniform has the smallest communication cost in
general, followed by \uniform. These two have a significantly smaller
cost than \baseline and \exact. The gap between \exact and \nonuniform
increases as more training data arrives. For 5M training points,
\nonuniform sends approximately 100 times fewer messages than \exact,
while having almost the same accuracy when compared with the ground
truth. This shows the benefit of using approximate counters in
maintaining the Bayesian network model. It also shows that there is a
concrete and tangible benefit using the improved analysis in
\uniform and \nonuniform, in reducing the communication cost.

%-------------------------------
% exp-6: err-to-truth vs eps
\begin{figure*}[t]
  \centering
  \subfigure[Baseline Approx.]{
  \centering \includegraphics[width=0.25\textwidth]{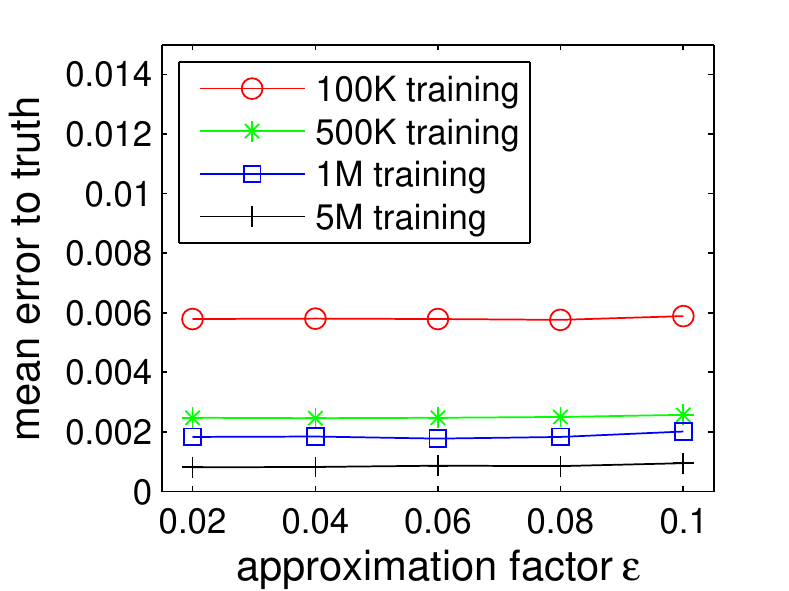}
  }%
  \subfigure[Uniform Approx.]{
  \centering \includegraphics[width=0.25\textwidth]{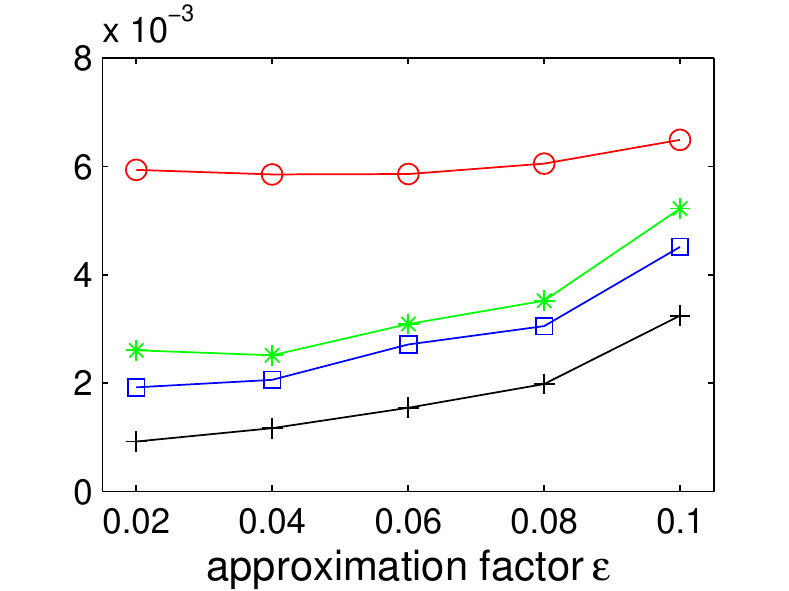}
  }%
  \subfigure[Non-uniform Approx.]{
  \centering \includegraphics[width=0.25\textwidth]{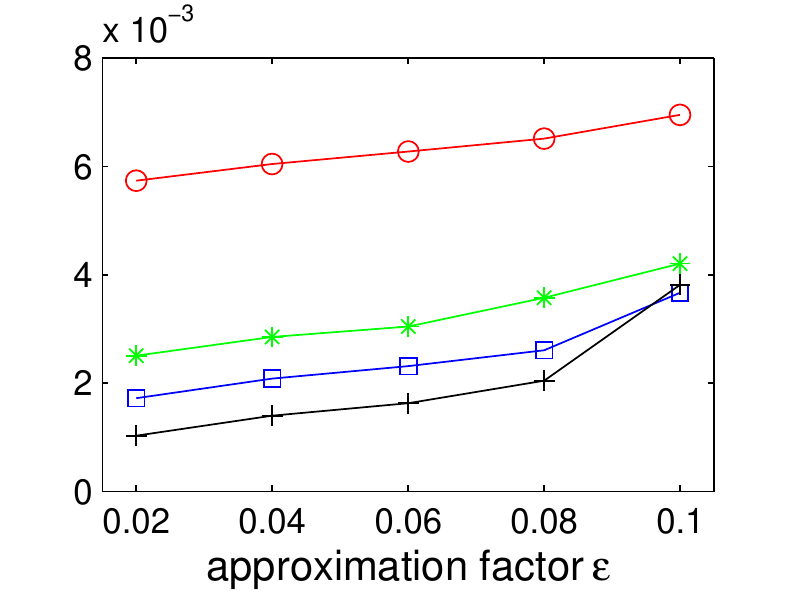}
  }
\caption{Mean testing error (relative to ground truth) vs. approximation factor $\eps$. The dataset is \hepar.}
\label{fig:error-vs-eps}
\end{figure*}
%-------------------------------

Figure~\ref{fig:error-vs-eps} shows the testing error as a function of
the parameter $\eps$, and shows that the testing error increases with
an increase in $\eps$. In some cases, the testing error does not
change appreciably as $\eps$ increases. This is due to the fact that
$\eps$ only controls the ``approximation error", and in cases when the
statistical error is large (i.e. small numbers of training instances),
the approximation error is dwarfed by the statistical error, and the
overall error is not sensitive to changes in $\eps$.

%------------------------------------
% exp-7: comm msg vs num of sites
\begin{figure}[t]
  \centering
  \subfigure[Communication cost for \alarm]{
 \includegraphics[width=0.25\textwidth]{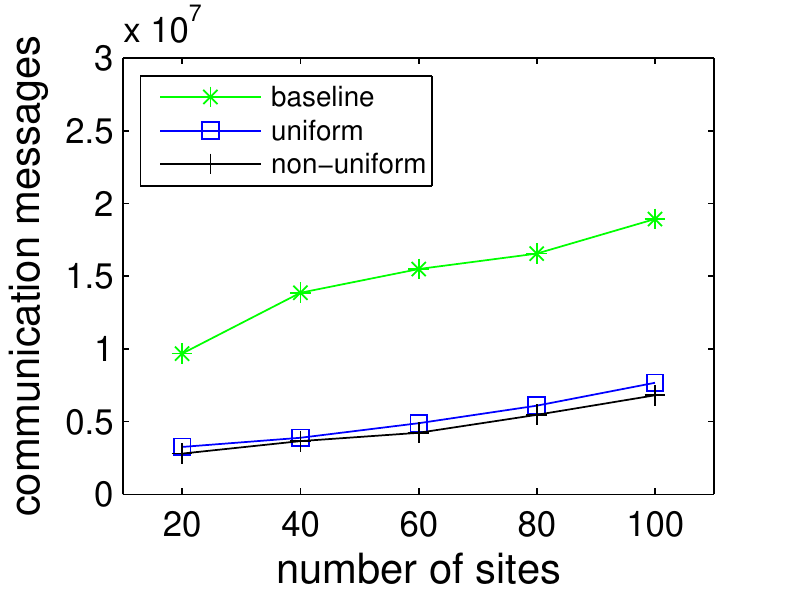}
 \label{fig:msg-vs-sites}
  }%
\subfigure[Communication cost for \newalarm]{
\includegraphics[width=0.25\textwidth]{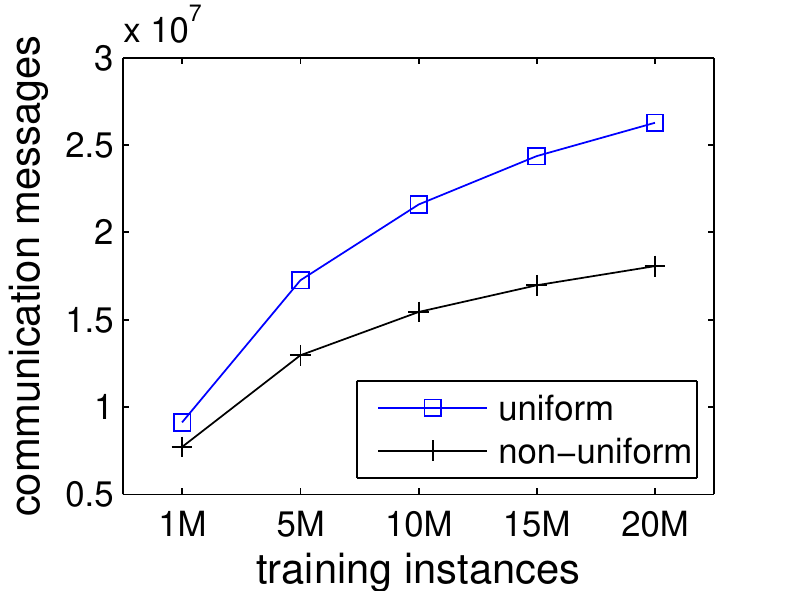}
\label{fig:comm-synthetic}
}
  \caption{Communication cost experiments.}
\end{figure}

\remove{
%------------------------------------
\begin{figure}[t]
  \centering \includegraphics[width=0.25\textwidth]{figs/synthetic/synthetic.pdf}
\caption{Communication cost vs. number of training points, for the \newalarm dataset.}
\end{figure}
%------------------------------------
}

%------------------------------------
Last, Figure~\ref{fig:msg-vs-sites} plots communication cost against
the number of sites $k$, for the \alarm dataset, and shows that the
number of messages increases with $k$. 
%the number of sites.

\tpara{Communication Cost of \uniform versus \nonuniform:}
The results so far do not show a very large difference
in the communication cost of \uniform and \nonuniform.
The reason is that in the networks that we used, the cardinalities of
all random variables were quite similar.
In other words, for different $i \in [1,n]$, the $J_i$s in
Equation~\ref{eq:non-uniform-sol1} and~\ref{eq:non-uniform-sol2} have
similar values, and so did the $K_i$s.
This makes the approximation factors in \uniform and \nonuniform to be
quite similar. 
%To understand this, consider the case when $J_i$'s and $K_i$'s are equal for different $i$, then $nu_i$ in Equation~\ref{eq:non-uniform-sol1} and $mu_i$ in Equation~\ref{eq:non-uniform-sol2} become $\frac{\eps}{16\sqrt{n}}$, which is the approximation factor we set for the uniform approximate counter. 
To study the communication efficiency of the non-uniform
approximate counter, we generated a semi-synthetic Bayesian network
\newalarm based on the \alarm network.
We keep the structure of the
graph, but randomly choose $6$ variables in the graph and set the size
of the universe for these values to $20$ (originally each variable
took between $2-4$ distinct values). The format of the synthetic
network can be downloaded at~\cite{synthetic}.
For this network, the communication cost of \nonuniform was about 35 percent
smaller than that of \uniform, in line with our expectations (Figure~\ref{fig:comm-synthetic}).

%classification results
%-------------------------
\begin{table}
\caption{Error Rate for Bayesian Classification, $50$K training instances}
{
\small
\footnotesize
\centering
\begin{tabular}[ht]{| l | r | r | r | r |}
\toprule
Dataset & \exact  & \baseline  & \uniform  & \nonuniform \\
\midrule
\alarm & 0.056 & 0.055 & 0.053 & 0.066 \\
\hepar & 0.191 & 0.187 & 0.198 & 0.212 \\
\link & 0.109 & 0.110 & 0.111 & 0.110 \\
\munin & 0.091 & 0.091 & 0.093 & 0.091 \\
\bottomrule
\end{tabular}\par
\label{table:error}
}
\end{table}

\begin{table}[t]
\caption{Communication cost to learn a Bayesian classifier}
{\footnotesize
\centering
\begin{tabular}
{| l | r | r | r | r |}
\toprule
Dataset & \exact  & \baseline  & \uniform  & \nonuniform \\
\midrule
\alarm & 3,700,000   &	406,721	 &	 323,710  &  322,639 \\
\hepar & 7,000,000   &	1,079,385	 &	 758,631 &	754,429 \\
\link & 72,400,000   &    29,781,937 &	8223133	 &	8,062,889 \\
\munin & 104,100,000 &	34,388,688 &	11,317,844 &	11,261,617 \\
\bottomrule
\end{tabular}\par
\label{table:communication}
}
\end{table}
%-------------------------

\tpara{Classification: } Finally, we show results on learning a
Bayesian classifier for our data sets. 
%The training dataset is same as what we used for
%event probability prediction.
For each testing instance, we first
generate the values for all the variables (using the underlying
model), then randomly select one variable to predict, given the values
of the remaining variables.
We compare the true value and predicted
value of the select variable and compute the error rate.
%The number of
%training instances is $50$K and number of testing instances is
$1000$.
Prediction error and communication cost for $50$K examples and 1000
tests are shown
in Tables~\ref{table:error} and \ref{table:communication} respectively.

Overall, we first note that even the \exact algorithm has some
prediction error relative to the ground truth, due to the statistical
nature of the model. The error of the other algorithms, such as
\uniform and \nonuniform is very close to that of \exact, but their
communication cost is much smaller. For instance, \uniform and
\nonuniform send less than $1/9$th as many messages as \exact.

%\vspace*{-0.2cm}
%---------------------
\section{Conclusion}
\label{sec:concl}
%---------------------
We presented new distributed streaming algorithmw 
to estimate the parameters of a \bayesnet in the distributed monitoring model.
Compared to approaches that maintain the exact MLE, 
our algorithms significantly reduce communication, while offering provable guarantees
on the estimates of joint probability. Our experiments show that
these algorithms indeed reduce communication and 
provide similar prediction errors as the MLE for 
estimation and classification tasks.

Some directions for future work include: (1)~to
adapt our analysis when there is a more skewed distribution across
different sites, (2)~to consider time-decay models which gives higher
weight to more recent stream instances, and (3)~to learn 
the underlying graph ``live" in an online fashion, as more data arrives.

%(3)~to extend to networks when
%there is parameter sharing, and (4)~to allow both the model and its
%parameters to be learned in the stream, rather than assume that the
%model is given. 

%\balance

% ****************** BIBLIOGRAPHY **********************************
%\bibliographystyle{abbrv}
%\bibliography{distBayes}

\bibliographystyle{IEEEtran}
\bibliography{IEEEabrv,distBayes}

%\newpage
%\input{short.bbl}

% ****************** APPENDIX **************************************
%\begin{appendix}
%\input{appendix}
%\end{appendix}

\end{document}